\documentclass{article}




    \usepackage[final]{neurips_2022}


\usepackage[utf8]{inputenc} 
\usepackage[T1]{fontenc}    
\usepackage{hyperref}       
\usepackage{url}            
\usepackage{booktabs}       
\usepackage{amsfonts}       
\usepackage{nicefrac}       
\usepackage{microtype}      
\usepackage{xcolor}         

\usepackage{amsmath}
\usepackage{amsfonts}
\usepackage{graphicx}
\usepackage{placeins}
\graphicspath{ {./images/} }
\usepackage{xcolor}
\newcommand{\tp}[1]{\textcolor{red}{#1}}

\usepackage{wrapfig}

\newcommand{\x}{\mathbf{x}}

\usepackage{amssymb}
\usepackage{mathtools}
\usepackage{listings}

\usepackage{bbm} 
\usepackage{transparent} 
\usepackage{fancyvrb} 

\bibliographystyle{plainnat}

\usepackage{algorithm}
\usepackage[noend]{algpseudocode}

\usepackage{pifont}
\newcommand{\cmark}{\ding{51}}%
\newcommand{\xmark}{\ding{55}}%

\usepackage{amsthm}
\usepackage[utf8]{inputenc}
\newtheorem{theorem}{Theorem}

\newtheorem{lemma}{Lemma}[theorem]

\newtheorem*{remark}{Remark}

\title{Censored Quantile Regression Neural Networks \\ for Distribution-Free Survival Analysis}

\author{%
  Tim Pearce$^{1,2,*}$, Jong-Hyeon Jeong$^3$, Yichen Jia$^3$, Jun Zhu$^1$\thanks{Majority of work completed while Tim Pearce was at Tsinghua University.  Correspondance to: tim.pearce@microsoft.com;~dcszj@tsinghua.edu.cn.}\\
  $^1$Dept. of Comp. Sci. \& Tech., NRist Center, Tsinghua-Bosch Joint ML Center, Tsinghua University \\ $^2$Microsoft Research, $^3$University of Pittsburgh\\
} 

\begin{document}

\maketitle

\begin{abstract}
\looseness=-1
This paper considers doing quantile regression on censored data using neural networks (NNs). This adds to the survival analysis toolkit by allowing direct prediction of the target variable, along with a distribution-free characterisation of uncertainty, using a flexible function approximator. We begin by showing how an algorithm popular in linear models can be applied to NNs. However, the resulting procedure is inefficient, requiring sequential optimisation of an individual NN at each desired quantile. Our major contribution is a novel algorithm that simultaneously optimises a grid of quantiles output by a single NN. To offer theoretical insight into our algorithm, we show firstly that it can be interpreted as a form of expectation-maximisation, and secondly that it exhibits a desirable `self-correcting' property. Experimentally, the algorithm produces quantiles that are better calibrated than existing methods on 10 out of 12 real datasets.

Code: \small{\textcolor{blue}{\url{https://github.com/TeaPearce/Censored_Quantile_Regression_NN}}.}
\end{abstract}


\section{Introduction}

Domains such as biomedical sciences and reliability engineering often produce datasets with a particular challenge -- for many datapoints the target variable is not directly observed and only a lower or upper bound is recorded. For example, when modelling the time to failure of a machine, in many cases it might have been preemptively maintained before a failure event was observed, meaning only a lower bound on the time to failure is recorded. This is known as censored data, which is studied in the field of survival analysis.




Recently, there has been interest in combining ideas from survival analysis with neural networks (NNs), in the hope of leveraging the capabilities of deep learning for this important class of problem.
Quantile regression has proven valuable in survival analysis, since it directly predicts the variable of interest, naturally capturing the uncertainty of the conditional distribution with no assumptions made about the distribution of residual errors \citep{Peng2021}. Despite a wealth of research in the linear setting, explorations into its combination with NNs are at an early stage \citep{Jia2022}. 
This paper advances this line of research, showing how an estimator from \cite{Portnoy2003}, used for doing quantile regression in linear models on left and right censored data, can be combined with NNs.





Section \ref{sec_bg} introduces Portnoy's estimator, and outlines a sequential grid algorithm that is used in its optimisation for linear models \citep{Neocleous2006}.
As our first contribution, Section \ref{sec_cqrnn_overview_seq_grid} shows how this algorithm can be directly adapted to work with NNs. Unfortunately, the resulting method is inefficient, requiring sequential optimisation of a new NN at each predicted quantile. Section \ref{sec_cqrnn_overview_cqrnn_alg} outlines our main contribution; an improved algorithm that is well suited to NNs, named `censored quantile regression neural network' (CQRNN). Our CQRNN algorithm is a significant departure from the sequential approach, and we offer theoretical insights into it in Section \ref{sec_cqrnn_analysis}, interpreting it as a form of expectation-maximisation (EM), and also analysing a `self-correcting' property.
In Section \ref{sec_experiments} the algorithm's effectiveness is empirically demonstrated by benchmarking against alternative methods on simulated and real data. 
Careful ablations illuminate effects of the algorithm's hyperparameters.



\section{Background}
\label{sec_bg}


We first introduce censored data. 
Consider a dataset $\mathcal{D} = \{ \{ \x_1, y_1, 
\Delta_1 \}, ... , \{ \x_N, y_N, 
\Delta_N \} \}$, where $\x_i \in \mathbb{R}^D$ is the input, $y_i \in \mathbb{R}$ is the (possibly censored) variable to regress on, and $\Delta_i$ represents the observed/censored indicator.
We assume two data generating distributions, one for the target variable, $t_i \sim p_t(t | \x_i)$, and another for the censoring variable, $c_i \sim p_c(c | \x_i)$. In the case of right censoring, we only observe the smaller of these two, which forms our dataset labels, $y_i = \min(t_i, c_i)$, and $\Delta_i = 
\Big\{\begin{smallmatrix}
0 \;\; \text{ if censored, } c_i<t_i \\
1 \;\; \text{ else \textcolor{white}{------------------}}
\end{smallmatrix} $.
Whilst this paper concentrates on the right censored case, for all estimators and algorithms discussed, left censoring can be handled by simply inverting the labels, i.e. using $-y_i \; \forall i$ \citep{Koenker2005}.



In this paper we make common assumptions about $p_t(t | \x_i)$ and $p_c(c | \x_i)$; the target is independent of censoring given the covariates, $t \perp c \; | \; \x$ (sometimes termed `random' censoring), but both $t$ and $c$ depend on $\x$ in different and possibly non-linear ways.
These are more general than some alternative assumptions, such as 
using fixed-value censoring, $c_i=\operatorname{constant} \;\forall i$ (e.g. \citep{Powell1986}) or requiring the censoring distribution to be independent of $\x$ (e.g. \citep{Jia2022}).

\subsection{Quantile Regression without Censoring}
Ignoring censoring for a moment, the conditional quantile function is given by,
\begin{equation}
    \label{eq_quantile_definition}
    Q(\tau | \x_i) = \inf \{ y_{i,\tau} : p(y_i \leq y_{i,\tau}| \x_i) = \tau \},
\end{equation}
where, $\tau \in (0,1)$. To learn such a function for a single target quantile, $\tau$, quantile regression minimises a `checkmark' loss, $\rho_\tau(\cdot,\cdot):\mathbb{R} \times \mathbb{R} \to \mathbb{R}$, \citep{Koenker1978},
\begin{align}
\label{eq_loss_no_censor}
\mathcal{L}_\text{check}(\theta, \mathcal{D}, \tau) \coloneqq& \frac{1}{N} \sum_{i=1}^N \rho_\tau(y_i, \hat{y}_{i,\tau}),\\
\label{eq_loss_checkmark}
\rho_\tau(y_i, \hat{y}_{i,\tau}) \coloneqq& (y_i - \hat{y}_{i,\tau})\left(\tau - \mathbb{I} [ \hat{y}_{i,\tau}>y_i ] \right),
\end{align}
for model parameters, $\theta$, and a prediction, $\hat{y}_{i,\tau} = \psi_\tau(\x_i, \theta)$, made by some model $\psi_\tau : \mathbb{R}^D \to \mathbb{R}$, with $\mathbb{I}[\cdot]$ as the indicator function. A linear model refers to when, $\hat{y}_{i,\tau} = \theta_\tau^\intercal \x_i $, with, $\theta_\tau \in \mathbb{R}^D$. 

\subsection{Portnoy's Censored Quantile Regression Estimator}

Directly optimising the loss in Eq. \ref{eq_loss_no_censor} can produce undesirable models when a dataset contains censored observations. Naively ignoring censoring indicators or excluding all censored data will, in the general case, induce unfavourable bias \citep{Zhong2021}.
The algorithms proposed in this paper use a re-weighting scheme introduced in \cite{Portnoy2003} that does account for censoring. Define two disjoint sets of indices, one for censored and one for observed datapoints, $\mathcal{S}_\text{censored}$ and $\mathcal{S}_\text{observed}$, letting, $N_c \coloneqq |\mathcal{S}_\text{censored}|$ and $N_o \coloneqq |\mathcal{S}_\text{observed}|$.
Portnoy's estimator minimises,
\begin{equation}
\label{eq_loss_reweight}
\mathcal{L}_\text{Port.}(\theta, \mathcal{D}, \tau, \mathbf{w}, y^*) = \sum_{i \in \mathcal{S}_\text{observed}} \rho_\tau(y_i, \hat{y}_{i,\tau}) 
+ \sum_{j \in \mathcal{S}_\text{censored}} w_j \rho_\tau(y_j, \hat{y}_{j,\tau}) +  (1-w_j) \rho_\tau(y^*, \hat{y}_{j,\tau}) .
\end{equation}
While the loss for observed datapoints is unchanged, censored datapoints have been split into two `pseudo' datapoints -- one at the censored value, and one at some large value, $y^* \gg \max_i y_i$ (Section \ref{sec_exp_hyperparam} discusses $y^*$ further).
Weight is apportioned between each pair of pseudo datapoints by,
\begin{align}
    w_j = \frac{\tau - q_j}{1 - q_j},
\end{align}
where $q_j$ is the quantile at which the datapoint was censored, with respect to the target value distribution, i.e. $p_t(t_j>c_j|\x_j)$. We use $\mathbf{w} \in \mathbb{R}^{N_c}$ and $\mathbf{q} \in \mathbb{R}^{N_c}$ to denote the vector of all weights and quantiles respectively, indexed as $w_j$ and $q_j$. 
Given these weights, Portnoy's estimator has been shown to be analogous to the Kaplan-Meier (KM) estimator \citep{Portnoy2003}. 

\begin{algorithm}[t]
\caption{Sequential grid algorithm for NNs.}\label{alg_seq_grid}
\begin{algorithmic}
\Require Dataset $\mathcal{D}$, $M$ parametric models $\psi_\tau(\cdot)$ each with randomly initialised parameters $\theta_\tau$, ordered quantiles to be estimated $\operatorname{grid}_\tau$, learning rate $\alpha$, pseudo y value $y^*$.
\newline
\State $\mathcal{S}_\text{censored} \gets$  $\{i \in \{0,1, \dots , N\} : \Delta_i = 0 \}, \mathcal{S}_\text{observed} \gets$  $\{i \in \{0,1, \dots , N\} : \Delta_i = 1 \}$, $K_\text{cross} \gets \emptyset $

\For{$i = 0$ \textbf{to} $M-1$}
        \State $\tau \gets \operatorname{grid}_\tau[i]$
        \If {$i = 0$} \Comment{Initialise quantile estimates to zero}
        \State $\mathbf{\hat{q}} \gets \operatorname{zeros(N_c)}$ 
        \Else \Comment{Find indices which have been crossed and update their estimated quantiles}
        \State $K \gets \{ j \in \mathcal{S}_\text{censored}: \psi_{\operatorname{grid}_\tau[i-1]}(\x_j) \leq y_j \cap \psi_{\operatorname{grid}_\tau[i]}(\x_j) >  y_j \}$ 
        \State $\mathbf{\hat{q}}[K] \gets  \operatorname{grid}_\tau[i-1]$
        \State $K_\text{cross} \gets K_\text{cross} \cup K$ 
        \State $\mathbf{\hat{q}}[\neg K_\text{cross}] \gets \tau$ \Comment{Sets $\mathbf{\hat{w}}=0$  for uncrossed datapoints}
        \EndIf
        
        \State $\mathbf{\hat{w}} \gets (\tau - \mathbf{\hat{q}})/(1 - \mathbf{\hat{q}})$
        
        \State $\theta \gets \arg \min_{\theta_\tau} \mathcal{L}_\text{Port.}(\theta_\tau, \mathcal{D}, \tau, \mathbf{\hat{w}}, y^*) $  \Comment{Fully optimise Eq. \ref{eq_loss_reweight}}
        
        
        
\EndFor
\end{algorithmic}
\end{algorithm}

\subsection{Sequential Grid Algorithm}

A challenge with Portnoy's estimator is that it creates a circularity. If the true quantiles $\mathbf{q}$ and hence $\mathbf{w}$ for all censored datapoints were known, Eq. \ref{eq_loss_reweight} could be optimised straightforwardly, but the very reason to perform this optimisation is to estimate such quantiles!
Prior work has tackled this issue in various ways (see Section \ref{sec_relatedwork}). Here we discuss the most widely-used algorithm. It was originally presented in \cite{Portnoy2003} and slightly modified in \cite{Neocleous2006}, we refer to it as the `sequential grid algorithm'.


Described in Algorithm \ref{alg_seq_grid}, it sequentially steps through a grid of $M$ desired quantiles, arranged in strictly increasing order, and typically evenly spaced, e.g., $ \operatorname{grid}_\tau = \{0.1, 0.2, \dots , 0.8 , 0.9\}$ for $M = 9$. A new model is fitted at each quantile. 
The algorithm terminates either when all quantiles in the grid have been iterated through, or if only censored datapoints lie above the current quantile.
(\cite{Portnoy2003} suggest handling the first quantile specially, but we have simplified this -- see later.)
Note we introduce $\mathbf{\hat{q}}$ \& $\mathbf{\hat{w}}$ to explicitly designate model estimates of $\mathbf{{q}}$ \& $\mathbf{{w}}$.




\begin{figure}[b!]
\begin{center}
\vspace{-0.02in}

\small CQRNN (ours) \hspace{0.6in}  Excl. censor \hspace{0.6in}  DeepQuantReg \hspace{0.6in}  LogNorm MLE

\vspace{-0.05in}
\includegraphics[width=0.245\columnwidth,height=0.13\columnwidth]{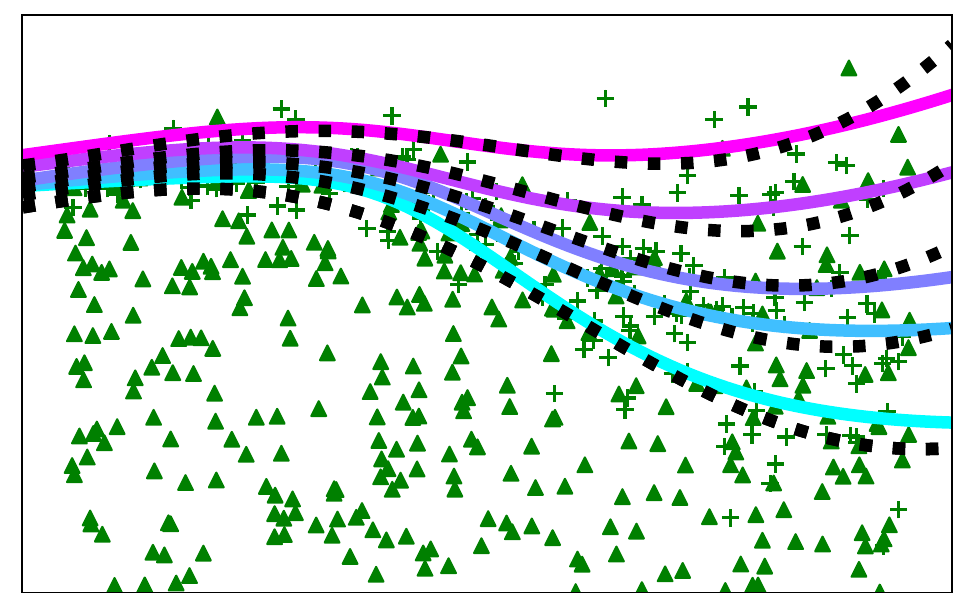}
\put(-107,0){\rotatebox{90}{\small Norm uniform}}
\includegraphics[width=0.245\columnwidth,height=0.13\columnwidth]{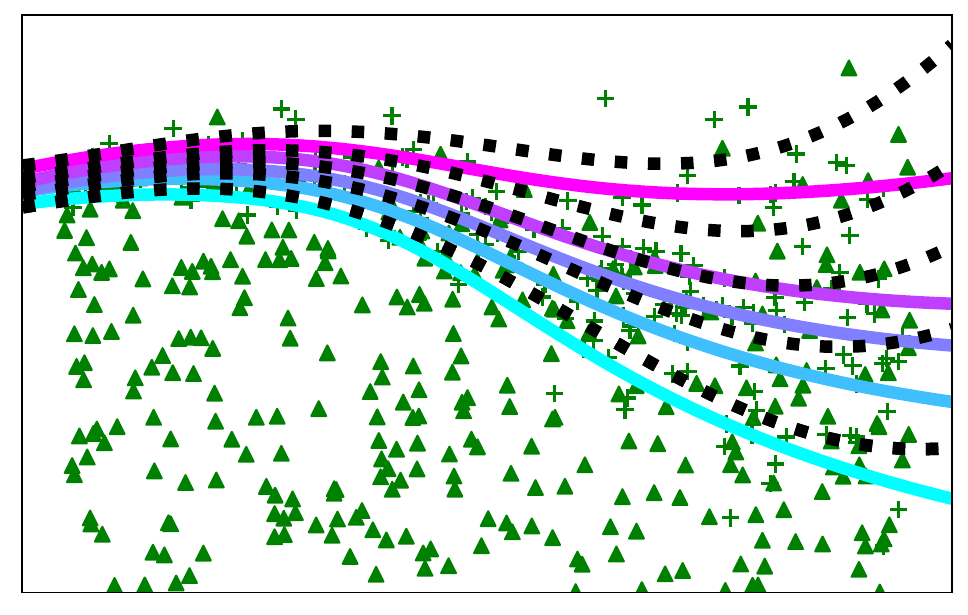}
\includegraphics[width=0.245\columnwidth,height=0.13\columnwidth]{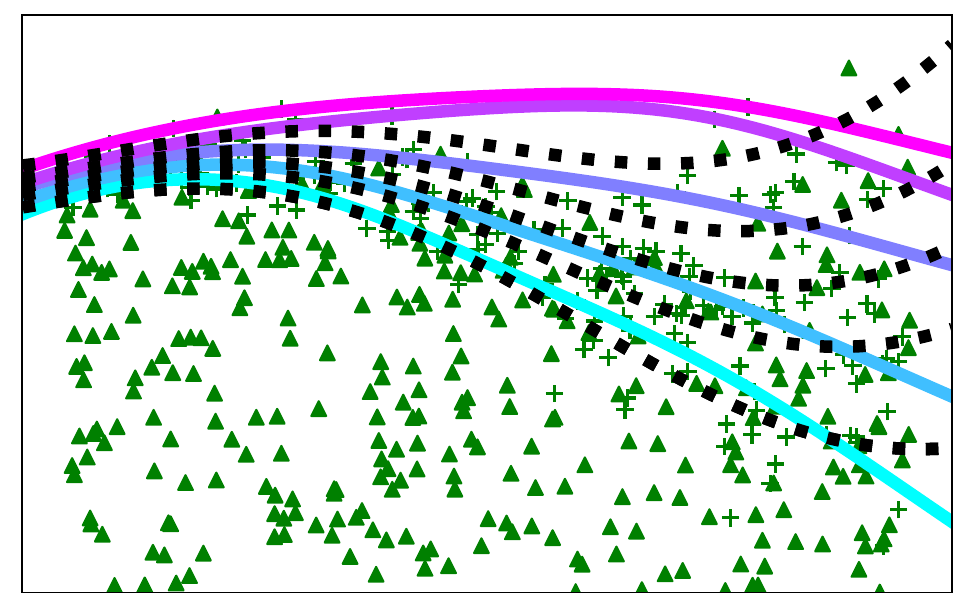}
\includegraphics[width=0.245\columnwidth,height=0.13\columnwidth]{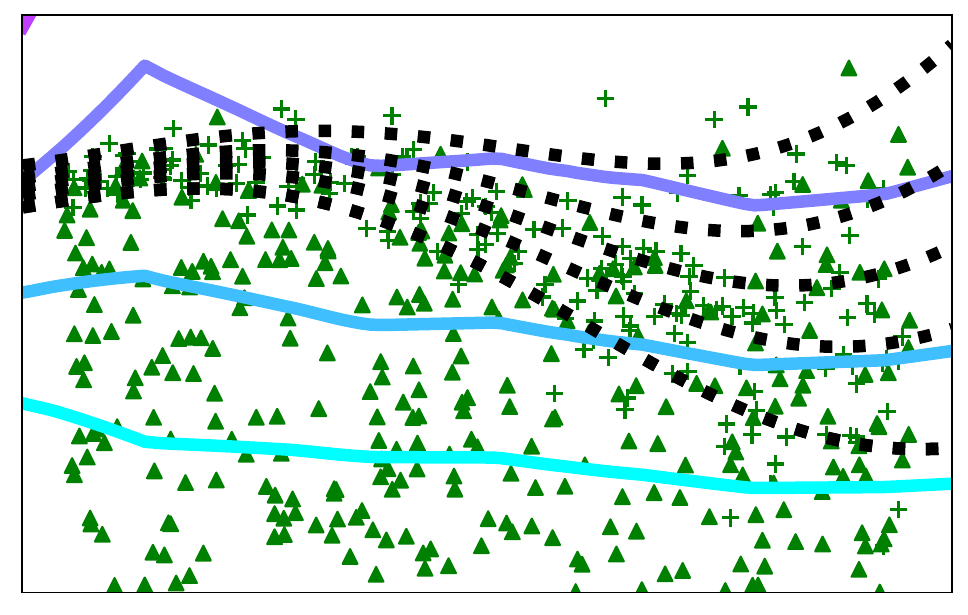}
\put(-0,4){\transparent{1.}\includegraphics[width=0.1\columnwidth]{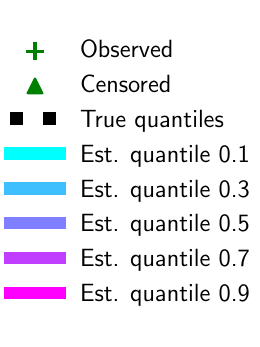}}

\includegraphics[width=0.245\columnwidth,height=0.13\columnwidth]{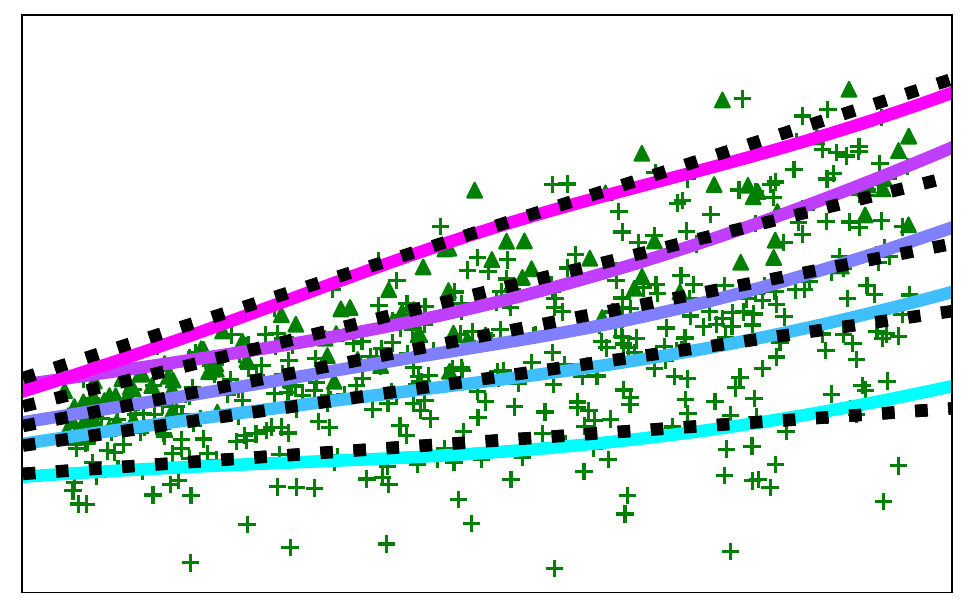}
\put(-107,4){\rotatebox{90}{\small Norm linear}}
\includegraphics[width=0.245\columnwidth,height=0.13\columnwidth]{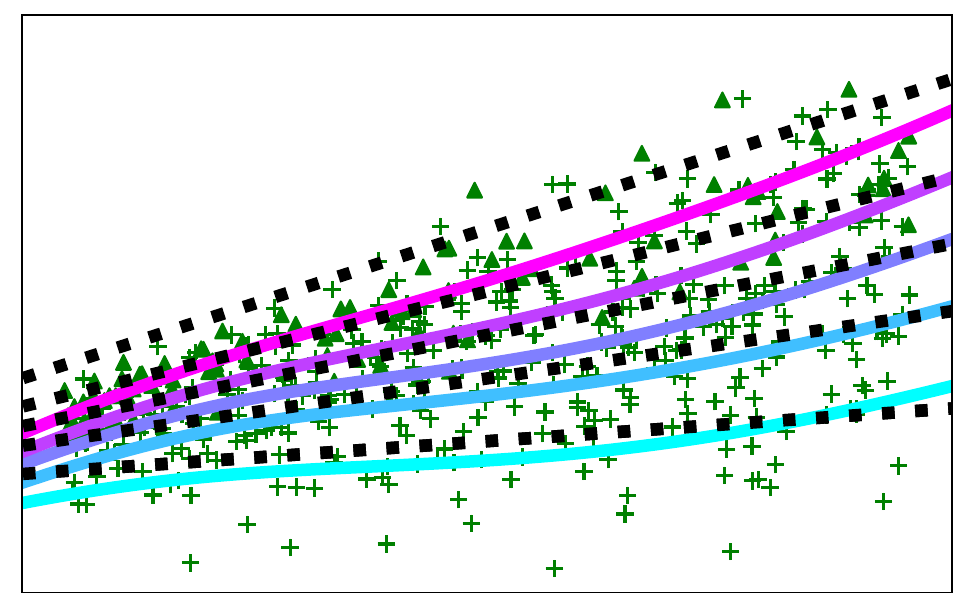}
\includegraphics[width=0.245\columnwidth,height=0.13\columnwidth]{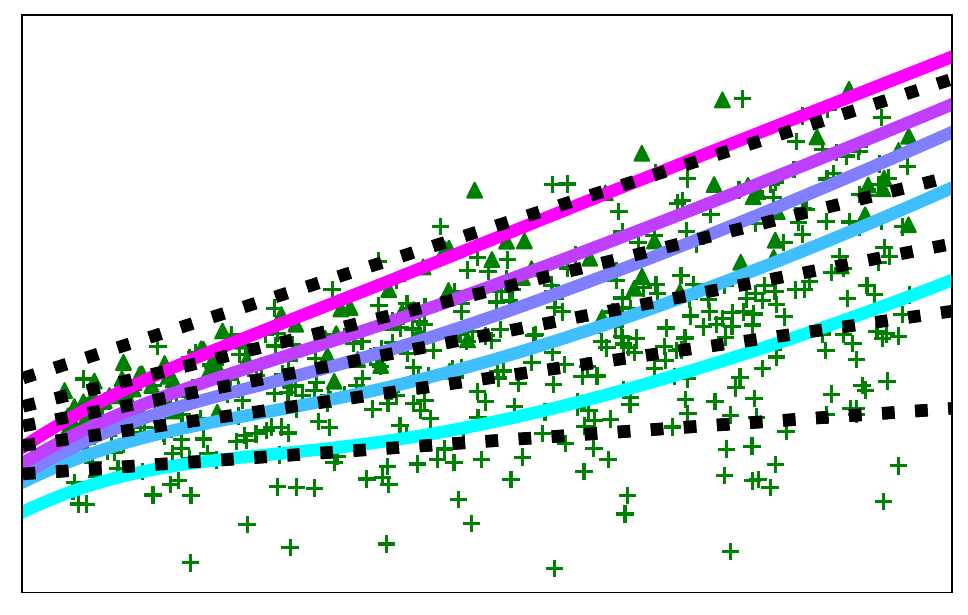}
\includegraphics[width=0.245\columnwidth,height=0.13\columnwidth]{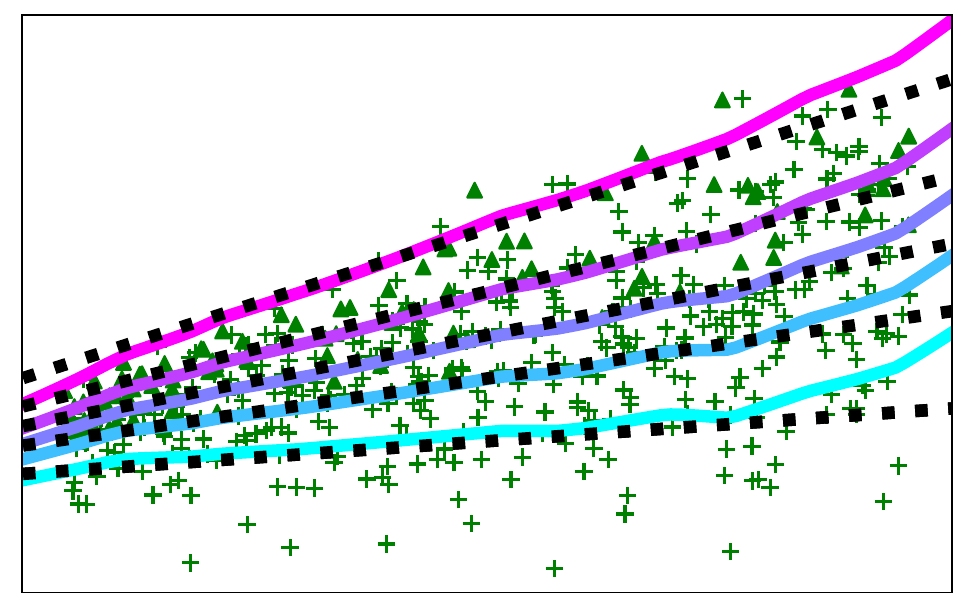}


\includegraphics[width=0.245\columnwidth]{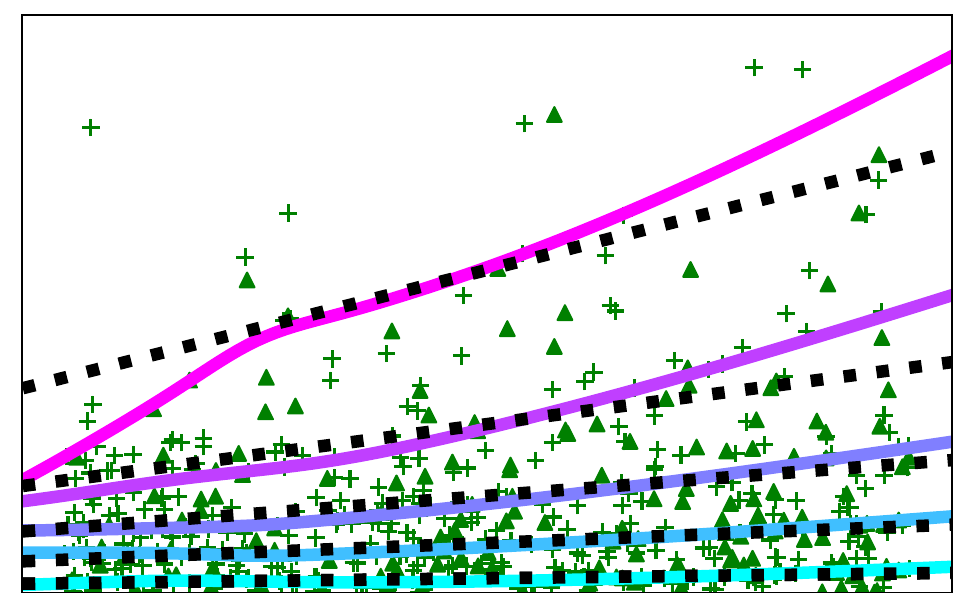}
\put(-107,8){\rotatebox{90}{\small Exponential}}
\includegraphics[width=0.245\columnwidth]{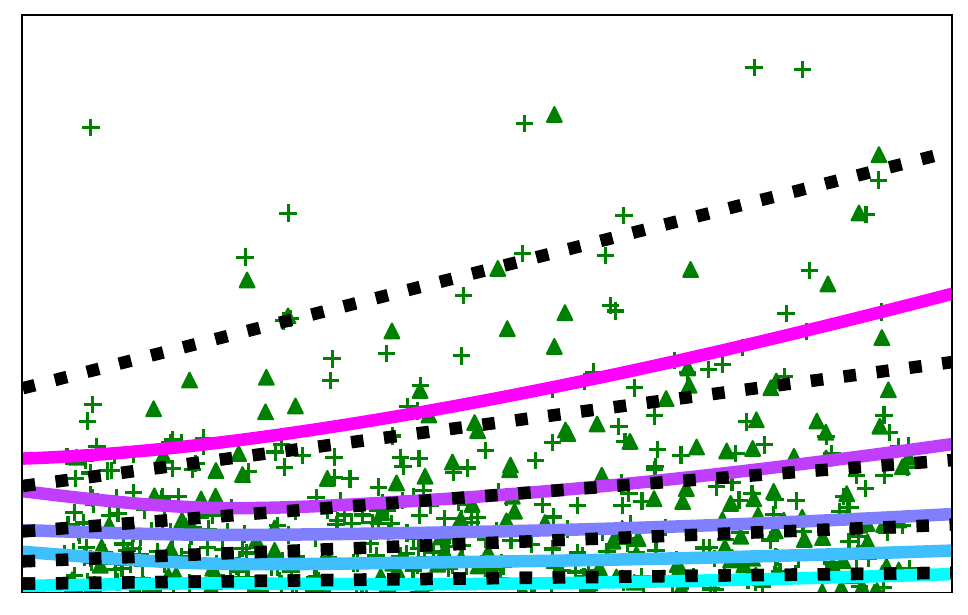}
\includegraphics[width=0.245\columnwidth]{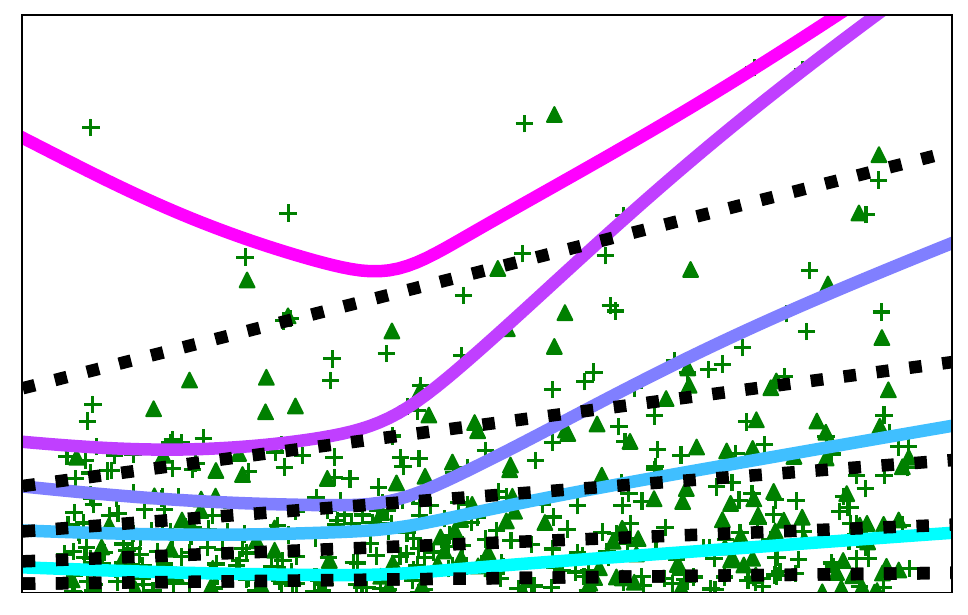}
\includegraphics[width=0.245\columnwidth]{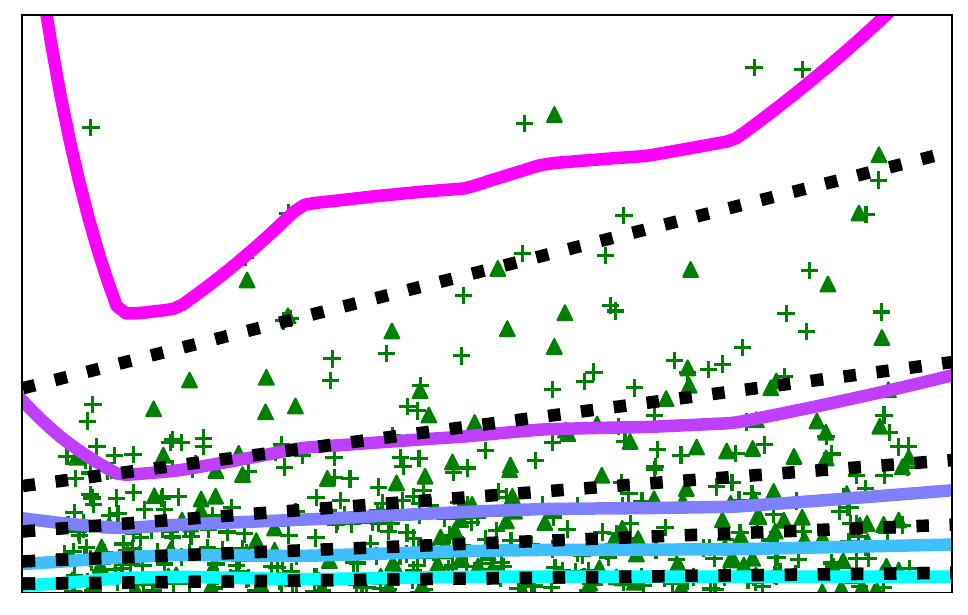}


\includegraphics[width=0.245\columnwidth,height=0.13\columnwidth]{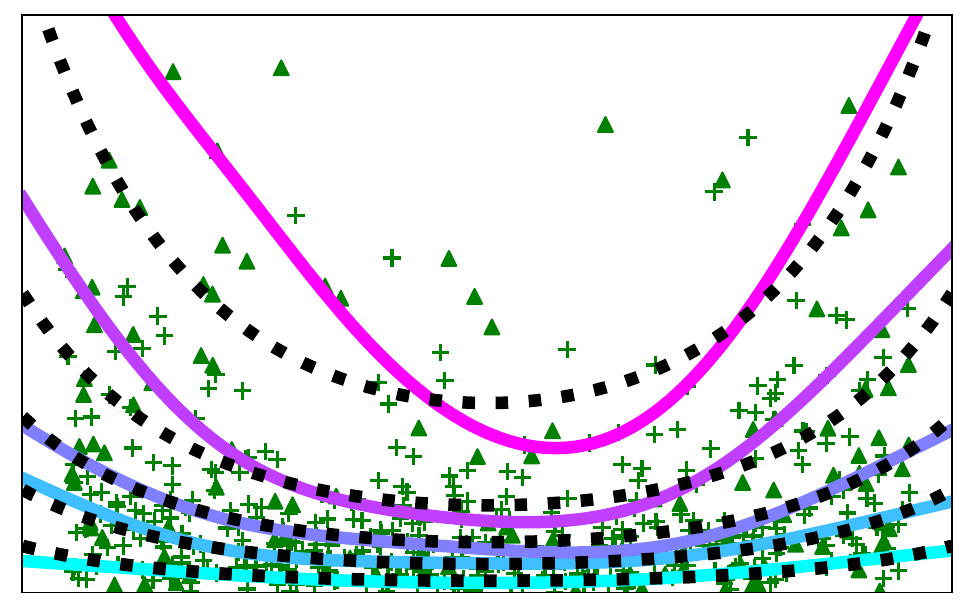}
\put(-107,6){\rotatebox{90}{\small LogNorm}}
\includegraphics[width=0.245\columnwidth,height=0.13\columnwidth]{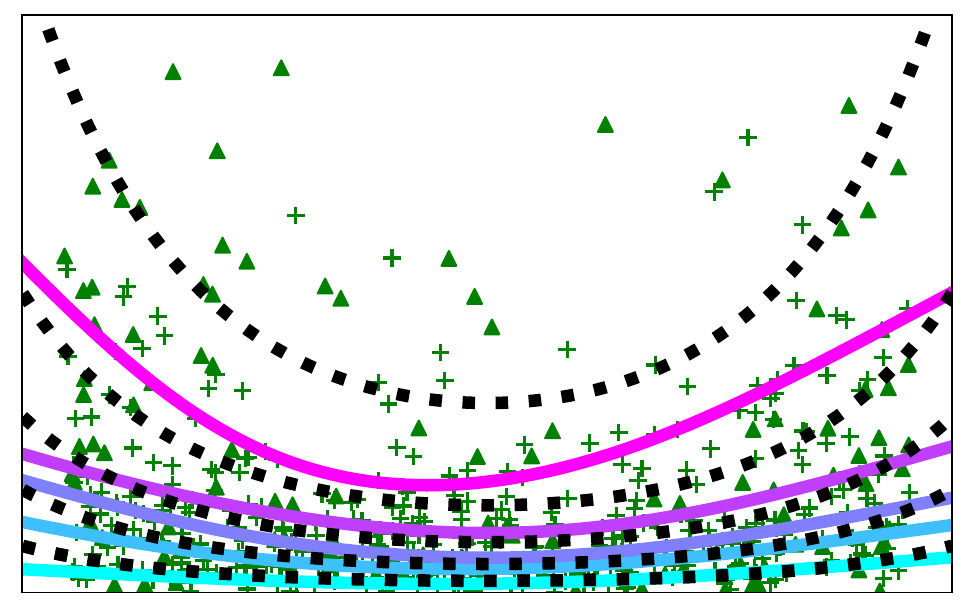}
\includegraphics[width=0.245\columnwidth,height=0.13\columnwidth]{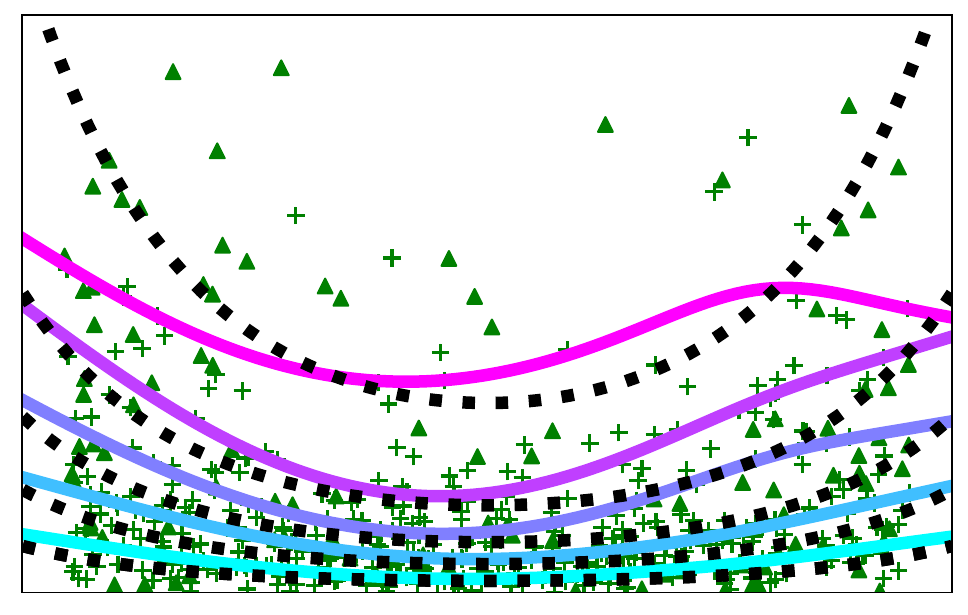}
\includegraphics[width=0.245\columnwidth,height=0.13\columnwidth]{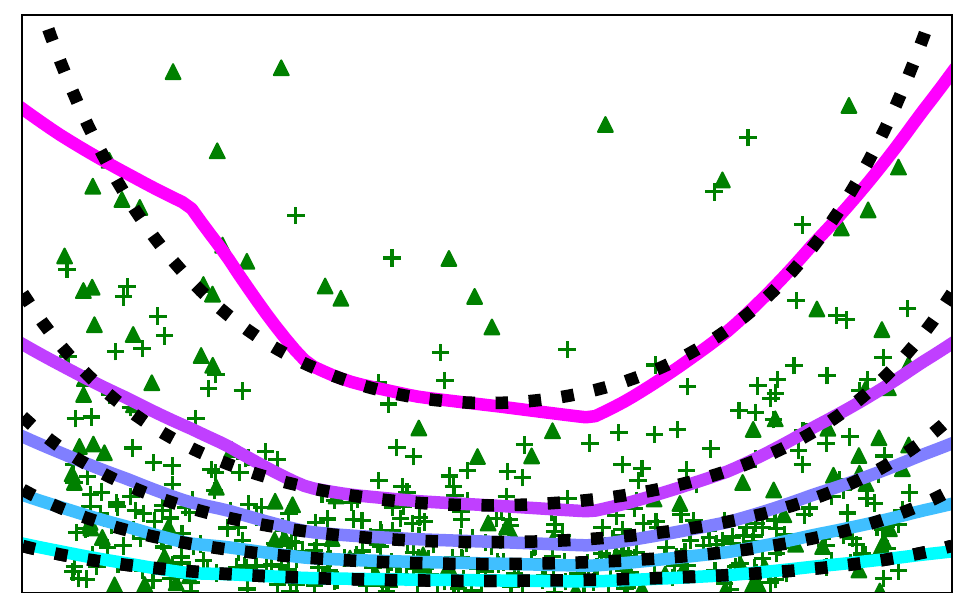}

\caption{1D synthetic datasets of varying functions (rows), fitted by various methods (columns). Estimated quantiles (blue through pink) compared to ground truth quantiles (dashed black lines).
CQRNN recovers quantiles closest to the ground truth on most datasets.}
\label{fig_1D_plots_mini}
\end{center}
\vskip -0.2in
\end{figure}

\section{Censored Quantile Regression and Neural Networks}
\label{sec_cqrnn_overview}

This section considers applying the sequential grid algorithm to NNs, explaining why this is inefficient. It then describes the CQRNN algorithm, our main contribution, which avoids these pitfalls.

\subsection{The Sequential Grid Algorithm and Neural Networks}
\label{sec_cqrnn_overview_seq_grid}

\looseness=-1
To the best of our knowledge the sequential grid algorithm has been proposed and implemented only on linear models. Our first contribution is showing that, with three adjustments, it can be used in NNs. 


\looseness=-1
Adj. 1) The $M$ models producing estimates, $\hat{y}_{j, \tau}$, are chosen be NNs, of any architecture with a single output.
Adj. 2) While the linear version of the algorithm minimises Eq. \ref{eq_loss_reweight} analytically with a parametric programming approach, for NNs this is instead minimised with stochastic gradient descent.
Adj. 3) \cite{Portnoy2003} proposed using a lengthy process for the first quantile. They set $\mathbf{\hat{w}} = \mathbf{1}$ and optimise Eq. \ref{eq_loss_reweight}. After optimisation, if any censored datapoints lie below the first quantile, these are removed from the dataset, and the procedure repeats until there are none. This repeated optimisation could go on for many iterations for large datasets and is costly for NNs. Instead, we simply set $\mathbf{\hat{q}} = \mathbf{0}$, and optimise the first quantile once only. We show this is roughly equivalent in Appendix \ref{sec_app_theory_first_iter}.

Unfortunately, this adaption is still inefficient.
Inspecting Algorithm \ref{alg_seq_grid}, one notes that $M$ NNs must be trained sequentially, meaning time complexity (both at training and at test) and memory complexity is $\mathcal{O}(M)$. This is not the case when doing quantile regression with NNs \textit{without} censoring -- a single NN typically uses multiple outputs, one per quantile, and their losses can be minimised simultaneously \citep{Taylor2000}. For modest numbers of quantiles, this results in little overhead relative to a single NN, since the extra quantile outputs require only linear heads, while the bulk of the parameter count and computation time arise from the shared trunk of the NN.

Note that these inefficiencies are less problematic for linear models -- optimisation of Eq. \ref{eq_loss_reweight} can be performed quite rapidly, and memory complexity cannot obviously be improved beyond $\mathcal{O}(M)$. As such, it is the use of NNs that motivates the search for an alternative algorithm.





\subsection{CQRNN Algorithm}
\label{sec_cqrnn_overview_cqrnn_alg}

Our proposed algorithm uses a single NN outputting a grid of quantile estimates that can be optimised simultaneously. This results in a more efficient algorithm in terms of training time, test time, and memory.
Concretely, if the NN is a single hidden-layer NN with $H$ neurons, then the model is defined, $\psi_\tau(\x_j,\theta) = \theta_{1,\tau}^\intercal \phi(\theta_0^\intercal \x_i)$, with $\theta_0 \in \mathbb{R}^{D \times H}$ shared for all estimates and $\theta_{1,\tau} \in \mathbb{R}^{H \times 1}$ for the specific output head, and non-linearity, $\phi$.

The key insight behind our algorithm is that estimates of the censored weights, $\mathbf{\hat{w}}$, can be bootstrapped from the model \textit{while} it learns, without requiring sequential optimisation. Intuitively, as the model trains, the estimates of the censored quantiles, $\mathbf{\hat{q}}$, and hence the censored weights, $\mathbf{\hat{w}}$, improve, and these feedback to create a more accurate loss function, allowing further improvement of the model. 



We summarise the method in Algorithm \ref{alg_cqrnn_loss}.
The quantile of a censored datapoint is estimated as whichever NN output is closest under the current model parameters. 
Whilst the algorithm is written in two steps, getting up-to-date estimates of censored quantiles, $\hat{q}_j$, requires only a forward pass, which is done at the point of loss optimisation anyway, so actually comes `for free'.
In our implementations, the algorithm is extended to leverage mini-batches and modern optimisers (e.g. Adam).



\begin{algorithm}[t]
\caption{CQRNN algorithm.}\label{alg_cqrnn_loss}
\begin{algorithmic}
\Require Dataset $\mathcal{D}$, one parametric model $\psi(\cdot)$ with randomly initialised parameters $\theta$ that can output $M$ quantile predictions, quantiles to be estimated $\operatorname{grid}_\tau$, learning rate $\alpha$, pseudo y value $y^*$.
\newline
\State $\mathcal{S}_\text{censored} \gets$  $\{i \in \{0,1, \dots , N\} : \Delta_i = 0 \}, \mathcal{S}_\text{observed} \gets$  $\{i \in \{0,1, \dots , N\} : \Delta_i = 1 \}$ 
\While{\textbf{not} convereged}
        \State \textit{1. Hard expectation step}
        \For{$j \in \mathcal{S}_\text{censored}$} 
        \State $\hat{q}_j \gets \arg \min_\tau |\hat{y}_{j,\tau} -  y_j|$ \Comment{Estimate quantile of censored data under current model}
        \State $\hat{w}_j \gets \max\left( (\tau - \hat{q}_j)/(1 - \hat{q}_j), 0 \right)$ \Comment{Avoid negative weights if $\hat{q}_j>\tau$}
        \EndFor
        \State \textit{2. Partial maximisation step}
        \State $\theta \gets \theta - \alpha \partial\mathcal{L}(\theta_\tau, \mathcal{D}, \tau, \mathbf{\hat{w}}, y^*) / \partial \theta \;\; \forall \tau \in \operatorname{grid}_\tau$ \Comment{Gradient step to minimise Eq. \ref{eq_loss_reweight}} 
\EndWhile
\end{algorithmic}
\end{algorithm}



\section{Analysis of CQRNN}
\label{sec_cqrnn_analysis}

The previous section introduced the CQRNN algorithm, motivating it intuitively, while this section considers its correctness.
Providing precise analytical guarantees is very challenging, both due to the involvement of non-convex NNs, and the bootstrapping nature of the algorithm.
We therefore consider more holistic ways of understanding and providing confidence in the method.
We firstly show that it can be interpreted as a flavour of EM (`generalised hard' EM) --
connecting to this well-studied class of algorithms provides some justification of CQRNN's design.
We secondly validate the way the model bootstraps its own censored quantile estimates by describing a `self-correcting' property.

\subsection{Interpretation as Expectation-Maximisation}


EM is a two-stage iterative optimisation technique for finding maximum likelihood solutions. It can be useful when one has both observed and latent variables, and where knowing the latent variables would allow straightforward optimisation of a model's parameters and vice versa. 

In our problem, we interpret the weights of the censored datapoints, $\mathbf{w}$, as latent variables, observed variables as $\x$ \& $y$, and the NN parameters $\theta$ are to be optimised.
Under our notation, the vanilla EM procedure \citep{Bishop2013} can be written as follows (using a randomly initialising $\theta^\text{old}$),
\begin{enumerate}
    \item Expectation step. Evaluate, $p(\mathbf{w} | \x, y, \theta^\text{old})$, to set up the expectation,
\begin{equation}
    \label{eq_Q_definition_EM}
    \mathcal{Q}(\theta, \theta^\text{old}) = \int_\mathbf{w} p(\mathbf{w} | \x, y, \theta^\text{old}) \log p(y, \mathbf{w}|\x,\theta) d \mathbf{w} = \mathbb{E}_{\mathbf{w}|\x,y,\theta^\text{old}} [\log p(y, \mathbf{w}|\x,\theta)].
\end{equation}
\item Maximisation step. Perform optimisation of the expected likelihood.
\begin{equation}
    \theta^\text{new} = {\arg \max}_\theta \mathcal{Q}(\theta, \theta^\text{old})
\end{equation}
\item Set $\theta^\text{old} \gets \theta^\text{new}$ and repeat until convergence. 
\end{enumerate}



We now show how our CQRNN method in Algorithm \ref{alg_cqrnn_loss} can be interpreted as a flavour of EM, specifically generalised hard EM. The key steps in this interpretation are: 1) The likelihood for datapoints at each quantile are chosen to follow an asymmetric Laplace distribution. 2) The expectation step is treated as a hard assignment under the current NN parameters. 3) The likelihood is only partially maximised at each iteration.


\subsubsection*{Likelihood Form}

A likelihood function is required for both the E \& M steps. Theorem \ref{theorem_likelihood} shows how Eq. \ref{eq_loss_reweight} can be interpreted as a negative log likelihood.

\textbf{Theorem \ref{theorem_likelihood}.} 
\textit{Let the likelihood for each datapoint at each quantile be an asymmetric Laplace distribution with scale, $\lambda = \sqrt{\tau - \tau^2}$, and asymmetry, $k=\tau/\sqrt{\tau - \tau^2}$.
The negative log likelihood is,}
\begin{align}
    -\log p(y |\x,\theta, \mathbf{w}, y^*) &= \sum_{\tau \in \text{grid}_\tau}  \mathcal{L}_\text{Port.}(\theta, y, \x, \tau, \mathbf{w}, y^*) + \operatorname{constant}.
\end{align}

\textit{Proof sketch.} The asymmetric Laplace distribution has been used in Bayesian treatments of quantile regression without censoring \citep{Yu2001}. We extend this to Portnoy's loss with a weighted likelihood form using the censored weights. Appendix \ref{sec_app_theory} provides the full proof.


\subsubsection*{Expectation as a Hard Assignment of Latent Variables}

In some models, such as a Gaussian mixture model, the expectation step can be computed through analytical evaluation of the latent posterior distribution \citep{Bishop2013}. In our case, $p(\mathbf{w} | \x, y, \theta^\text{old})$, is intractable, though Eq. \ref{eq_loss_reweight} depends linearly on each element $w_j$, so we need only consider the expectation, $\mathbb{E}_{w_j | \x, y, \theta^\text{old}}[w_j]$. The algorithm can then be interpreted as making an assignment under the current model parameters,
\begin{align}
\hat{\mathbb{E}}_{w_j | \x_j, y_j, \theta^\text{old}} [w_j] &= \sum_{w_j} w_j \hat{p}(w_j | \x_j, y_j, \theta^\text{old}) = \frac{\tau - \hat{q}_j}{1-\hat{q}_j},
\\
\label{eq_prior_EM_interp}
\hat{p}(w_j | \x_j, y_j, \theta^\text{old}) &=
    \begin{cases}
    1 \text{ if } w_j = \frac{\tau - \hat{q}_j}{1-\hat{q}_j} \text{, where } \hat{q}_j = \arg \min_\tau |\hat{y}_{j,\tau} -  y_j|\\
    0 \text{ else}
    \end{cases} .
\end{align}
This is a `hard' assignment, where latents are assigned to the most likely values under the current model (e.g. datapoints are attributed to the nearest clusters in the K-means algorithm \citep{Bishop2013}), and gives rise to a class of algorithms termed hard EM \citep{Samdani2012}.

Note that the in Eq. \ref{eq_Q_definition_EM} we can rewrite, 
$\log p(y,\mathbf{w} | \x, \theta) = \log p(\mathbf{w})p(y | \x, \theta, \mathbf{w})$. Since we choose only a single setting for $\mathbf{w}$ in the outer expectation, this dependence disappears, and we require maximisation of, $\mathbb{E}_{\mathbf{w}|\x,y,\theta^\text{old}} [\log p(y|\x,\theta, \mathbf{w})]$.




\subsubsection*{Partial Maximisation}

A second departure from the standard EM algorithm, is that Algorithm \ref{alg_cqrnn_loss} performs only a partial maximisation of the likelihood, taking a single gradient step. This has been shown to provide similar guarantees, and is termed `generalised' EM \citep{EMradford}.

One could consider an alternative version of the algorithm that is closer to the standard EM procedure, where the estimated quantiles of censored data points are fixed while the maximisation step of NN parameters is run to convergence over multiple training epochs, before the estimates are updated in the expectation step, and repeating. 
In our case this is less effective
-- the maximisation requires running a forward pass through the NN, and obtaining estimated quantiles under current NN parameters once this is done is trivial, so we can access up-to-date estimates essentially for free.
Appendix Figure \ref{fig_partial_full} empirically demonstrates that CQRNN's partial maximisation approach produces the fastest convergence.





\subsection{Self-Correcting Property}


The CQRNN bootstraps weight estimates from the current model as it trains. It's not immediately obvious why this bootstrapping approach should converge to something sensible -- what if bad initial weight estimates lead to worse ones?
As a second insight into the CQRNN algorithm, Theorem \ref{theorem_self_correct} shows that when a censored weight, $\hat{w}_j$, is estimated incorrectly, the algorithm acts in a way to adjust the estimated quantiles in a favourable way -- we refer to this as `self-correcting'.

\textbf{Theorem \ref{theorem_self_correct}.} 
\textit{
If $\hat{q}_j$ is underestimated, one iteration of the algorithm acts to \textit{increase} the quantile predictions, $\hat{y}_{j,\tau}$, by the same amount, or even higher, than if the weight had been correct. 
If $\hat{q}_j$ is overestimated, $\hat{y}_{j,\tau}$, is decreased by the same amount, or even lower, than with the correct weight.
}

\textit{Proof sketch.} 
Denote ${q}_j$ the true quantile that censored datapoint $j$ is censored in, and ${w}_j$ the corresponding true weight. 
We derive the expression for the gradient wrt the predicted quantiles, $\hat{y}_{j,\tau}$, finding that if $\hat{w}_j$ is underestimated it holds that, $\frac{\partial \mathcal{L}_\text{Port.}(\theta, \mathcal{D}, \tau, \hat{\mathbf{w}}, y^*)}{\partial \hat{y}_{j,\tau}} \leq \frac{\partial \mathcal{L}_\text{Port.}(\theta, \mathcal{D}, \tau, {\mathbf{w}}, y^*)}{\partial \hat{y}_{j,\tau}}$, and hence gradient descent applies an adjustment in the desired direction of equal or greater magnitude than if the true weight had been used. 
The reverse holds for overestimated $\hat{w}_j$. Appendix \ref{sec_app_theory} provides a full proof.



\section{Related Work}
\label{sec_relatedwork}

\textbf{Survival analysis and NNs.}
Following the widespread success of deep NNs over the past decade, there has been a wave of research applying NNs to survival analysis -- for instance by modifying the CoxPH model \citep{Katzman2018}, or framing the task as ordinal classification \citep{LeeDeepHit2018}. Closer to our work are methods that use NNs to output parameters of a distribution such as the Weibull \citep{Martinsson2016}, seek robust training objectives for these models \citep{Avati2019}, or help with their optimisation \citep{tang2022}. Our work stands out as offering a way to directly estimate the target variable at pre-specified quantiles, without enforcing any distributional assumption. A limitation is that the distribution may only be predicted at these quantiles (unless additional assumptions are made to allow interpolation between these).



\textbf{Quantile regression and NNs.}
Quantile regression has proven an attractive option to enable NNs to move beyond point predictions. This allows quantification of a NN's aleatoric uncertainty \citep{Tagasovska2018}. It is attractive due to its straightforward implementation, and avoidance of any distributional assumption. For example, it has found use in reinforcement learning to capture the \textit{distribution} of rewards, rather than just the mean \citep{Dabney2017distRL}.
Since NNs are flexible function approximators, particular attention has been paid to the crossing quantile problem 
\citep{Bondell2010, Zhou2020crossing, Brando2022}. 
CQRNN borrows ideas from this line of work, and further tackles the challenge of learning quantiles under censored data.

\textbf{Censored quantile linear regression.} 
There is much work on censored quantile regression methods for linear models. 
\cite{Powell1986} developed an estimator under fixed-value censoring which can be implemented with an algorithm from \cite{Fitzenberger1997}.
\cite{Portnoy2003} developed an estimator under random censoring based on the KM estimator, while \cite{Peng2008a} developed an alternative based on the Nelson–Aalen (NA) estimator. KM and NA are closely related, and Portnoy and Peng's methods have been reported to offer similar empirical performance \citep{Koenker2008}.
\cite{Koenker2005} provides all above methods in the popular `\Verb"quantreg"' R package.
Other notable methods include \cite{Yang2018}, based on the data augmentation algorithm, and \cite{Wang2009a}, whose estimator is similar to Portnoy's but utilises local estimates of the KM, computed with a kernel method. 
See \cite{Peng2021} for a review of the broader area.
Our work allows modelling of flexible non-linear quantile functions, leveraging the powerful representation learning abilities of NNs. Although, this sacrifices the interpretability of coefficients of linear models.

\textbf{Censored quantile regression and NNs.} Only a small amount of work has been done in this area. The `\Verb"qrnn"' R package \citep{Alex2019} offers the ability to train NNs under fixed-value left censoring, adopting an idea from the linear setting \citep{Friederichs2007, Cannon2011}. 
\cite{Huttel2022} explore our objective but assume censoring times, $c_i$, are available for both censored and uncensored data points (i.e. the censoring distribution is known). We do not require this assumption.
DeepQuantReg \citep{Jia2022} tackles the same objective as this paper. They showed that improvements in quantile estimation can be obtained relative to naive methods. Their work differs from ours significantly -- they base their method around an estimator for the median from \cite{huang2005}, requiring an assumption that the censoring distribution is independent of covariates. 

\section{Experiments}
\label{sec_experiments}



This section empirically investigates several questions. Q1) \textit{How does the proposed CQRNN method compare with existing methods?} This is done qualitatively on synthetic 1D functions in Section \ref{sec_exp_qualitative_1D} and quantitatively on synthetic and real datasets in Section \ref{sec_exp_benchmarking}. Q2) \textit{How does the sequential grid algorithm compare to the CQRNN algorithm, both in terms of predictive accuracy and efficiency?} Explored in Section \ref{sec_exp_comparison_neo} Q3) \textit{How is the CQRNN algorithm affected by its hyperparameters?} We investigate the impact of grid fidelity and $y^*$ in Section \ref{sec_exp_hyperparam}.

All experiments use fully-connected NNs with two hidden layers of 100 neurons, except for SurvMNIST, when three convolutional layers are used. 
Grid size $M$ is set to 5, 9 or 19 depending on dataset size.
Appendix \ref{sec_app_exp_details} contains further details on hyperparameter settings, metrics, and datasets. Appendix \ref{sec_app_further_results} presents further results.

\looseness=-1
\textbf{Datasets.} We use three types of dataset. Type 1) \textit{Synthetic target data with synthetic censoring.} Type 2) \textit{Real target data with synthetic censoring.} Type 3) \textit{Real target data with real censoring.}
Whilst type 3 captures the kind of datasets we care about most, evaluation of quantiles is challenging since in real-world survival data the target conditional quantiles are not obtainable even at test time \citep{2017lipeng}.
In contrast, type 1 offers access to the ground truth quantiles for clean evaluation, but properties of these datasets may be less realistic.
We introduce type 2 as a middle ground, which takes a real-world dataset without censoring, and synthetically overlays a censoring distribution to create training data. At test time, we have the option of not applying the censoring, providing samples from the clean target distribution, allowing clearer evaluation.
We summarise all datasets in Table \ref{tab_dataset_info}. Full details are in Appendix \ref{sec_app_dataset_details}. Our experiments exceed the number and variety of datasets used in popular recent works \citep{Goldstein2020, Zhong2021}




\textbf{Metrics.} 
Our objective is to measure how closely a model's predicted quantiles match those of the ground truth target distribution. 
We favour different metrics for each dataset type.


For type 1 datasets, since targets are generated synthetically it is possible to analytically compute the ground truth target quantile for an input $\x_i$, which we denote $y_{i,\tau}$. We compute the mean squared error (MSE) between the predictions and the ground truths across three quantiles, $\tau \in [0.1,0.5,0.9]$, (we ensure these are always present in $\operatorname{grid}_\tau$). This is our first-choice metric when available. $\text{True quantile MSE (TQMSE)} \coloneqq \frac{1}{N} \sum_{\tau \in [0.1,0.5,0.9]} \sum_{i=1}^N (\hat{y}_{i,\tau} - y_{i,\tau})^2.$


In type 2 datsets \textit{samples} from the uncensored target distribution are available but not the synthetic generating function. Our preferred metric is the checkmark loss across the three quantiles. $\text{Uncensored quantile loss (UQL)} \coloneqq \frac{1}{N} \sum_{\tau \in [0.1,0.5,0.9]} \sum_{i=1}^N \rho_\tau(y_i,\hat{y}_{i,\tau})$. 

For datasets of type 3, we use two metrics. The concordance index (C-index) is computed using the median ($\tau=0.5$). 
But this may not reveal anything about systematic bias of different models, nor about other quantiles $\tau \neq 0.5$. 
We secondly use censored D-Calibration (CensDCal) \citep{Haider2020} which measures whether the empirical proportion of datapoints falling between pairs of consecutive quantiles, matches the target proportion, $\tau_{j+1} - \tau_j$. MSE of the deviation is then computed. 
There is also an uncensored version (UnDCal), which we can compute for type 1 \& 2 datasets.
Appendix \ref{sec_app_exp_details_metrics} gives further details.

\textbf{Baselines.} We compare against three methods that can be used to predict the quantiles of a target distribution using a NN.
\textbf{Excl. Censor} -- a method that naively excludes censored datapoints from the training data, optimising the loss in Eq. \ref{eq_loss_checkmark}. 
\textbf{DeepQuantReg} -- the only existing method in the literature proposing explicit output of quantiles from a NN on censored data \citep{Jia2022}. 
\textbf{LogNorm MLE} -- A NN outputting parameters of a lognormal distribution, that’s trained via maximum likelihood estimation (MLE) (details in Appendix \ref{sec_app_exp_details_baselines}), and is a standard baseline in related work (e.g. \citep{Avati2019, Goldstein2020}) since this distribution often suits properties of real-world time-to-event survival data \citep{kleinbaum2012}.


\subsection{Qualitative 1D Analysis}
\label{sec_exp_qualitative_1D}

Figure \ref{fig_1D_plots_mini} visualises the quantiles predicted by CQRNN and baseline methods for 1D datasets (Table \ref{tab_dataset_info} describes dataset functions, Figure \ref{fig_1D_plots} visualises further 1D datasets). Each dataset contains 500 datapoints drawn from, $\x \sim \mathcal{U}(0,2)$ and $\operatorname{grid}_\tau \in \{0.1,0.3,0.5,0.7,0.9\}$.
CQRNN usually learns quantiles that are closer to the ground truth than baseline methods, particularly at higher quantiles. Excl. censor systemically makes underpredictions, and this worsens at higher quantiles since larger values of $y$ are more likely to be censored and excluded. DeepQuantReg avoids this systemic underprediction, but appears to introduce bias of its own. LogNorm MLE provides variable results -- on Norm uniform it fails with excessive variance (an issue also observed by \cite{Avati2019}), but on LogNorm, since the target distribution matches the distribution output by the NN, it performs well.

\begin{table}[t]
\centering
\caption{Performance of our proposed CQRNN algorithm (Algorithm \ref{alg_cqrnn_loss}) compared to the sequential grid method for NNs (Algorithm \ref{alg_seq_grid}), over 200 random seeds.}
\resizebox{0.95\textwidth}{!}{\begin{tabular}{lrrrrrrrccc}

\toprule
   Dataset &  Number & Training time &  Test time &  Parameter & TQMSE difference  & CQRNN is & Seq. grid is & No statistical \\
  
    & quantiles & speed up &  speed up & saving &  Seq. grid $-$ CQRNN &  sig. better & sig. better & significant \\
    & & & & & 95\% conf. interval & in TQMSE? & in TQMSE? & difference \\
\midrule
  Norm linear &           9 &   14.7$\times$ &   11.2$\times$ &  8.3$\times$ &          -2.189 $\pm$ 0.245 & \checkmark & & \\
 Norm non-lin &           9 &   12.4$\times$ &   9.6$\times$ &  8.3$\times$ &         -0.003 $\pm$ 0.001 & \checkmark & & \\
  Exponential &           9 &   12.4$\times$ &   8.0$\times$ &  8.3$\times$ &           -0.093 $\pm$ 0.176 &  & & \checkmark\\
      Weibull &           9 &   12.5$\times$ &   8.7$\times$ &  8.3$\times$ &         -0.014 $\pm$ 0.016 &  & & \checkmark \\
      LogNorm &           9 &   12.9$\times$ &   8.8$\times$ &  8.3$\times$ &         -0.039 $\pm$ 0.028 & \checkmark & & \\
 Norm uniform &           9 &   12.6$\times$ &   8.1$\times$ &  8.3$\times$ &          0.175 $\pm$ 0.033 & & \checkmark  & \\
   Norm heavy &           19 &  31.0$\times$ &        18.5$\times$ &        16.2$\times$ &           0.108 $\pm$ 0.211 & & & \checkmark\\
    Norm med. &           19 &  34.1$\times$ &        17.9$\times$ &        16.2$\times$ &          -0.046 $\pm$ 0.004 & \checkmark & & \\
   Norm light &           19 &  31.7$\times$ &            19.2$\times$ &     16.2$\times$ &         -0.035 $\pm$ 0.003 & \checkmark & & \\
    Norm same &           19 &  34.3$\times$ &            21.2$\times$ &     16.2$\times$ &         -0.390 $\pm$ 0.049 & \checkmark & & \\
LogNorm heavy &           19 &  33.6$\times$ &        19.9$\times$ &        16.2$\times$ &          0.005 $\pm$ 0.001 & & \checkmark &  \\
 LogNorm med. &           19 &  31.3$\times$ &        13.1$\times$ &        16.2$\times$ &          0.019 $\pm$ 0.002 & & \checkmark &  \\
LogNorm light &           19 &  31.3$\times$ &        20.2$\times$ &        16.2$\times$ &         -0.033 $\pm$ 0.004 & \checkmark & & \\
 LogNorm same &           19 &  29.8$\times$ &        18.4$\times$ &        16.2$\times$ &         -0.371 $\pm$ 0.045 & \checkmark & & \\
\midrule
 Total: & & & & & & 8/14 & 3/14 & 3/14 \\
\bottomrule

\end{tabular}}
\label{tab_grid_cqrnn_compare}
\end{table}


\begin{wrapfigure}{R}{0.5\textwidth} 
\begin{center}
\vspace{-0.4in}

Type 1 datasets 

\includegraphics[width=0.49\columnwidth]{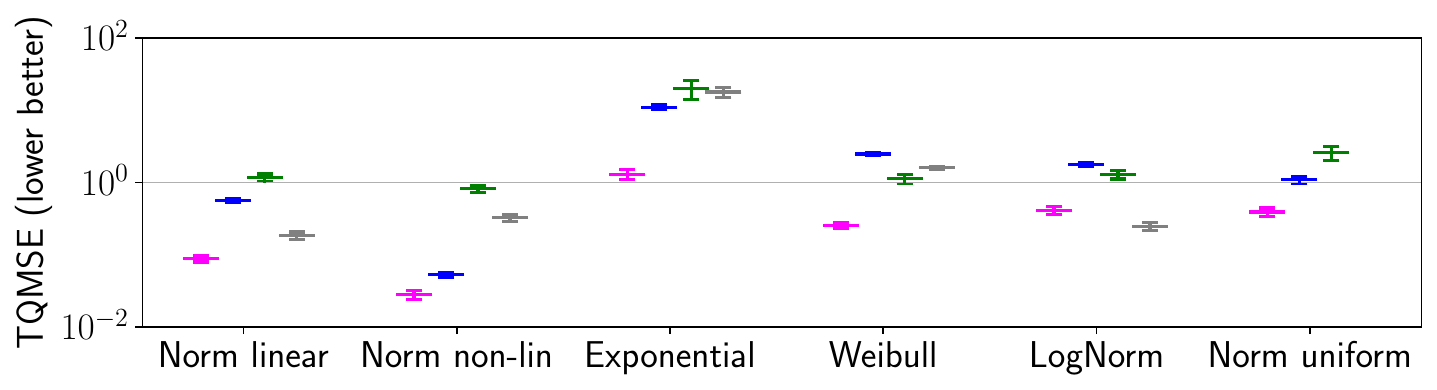}
\put(-0,8){\transparent{1.}\includegraphics[width=0.13\columnwidth]{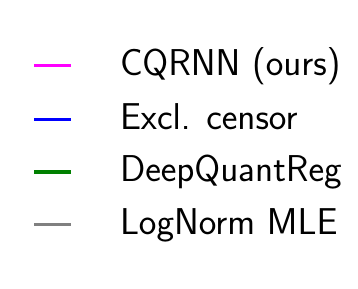}}


\includegraphics[width=0.49\columnwidth]{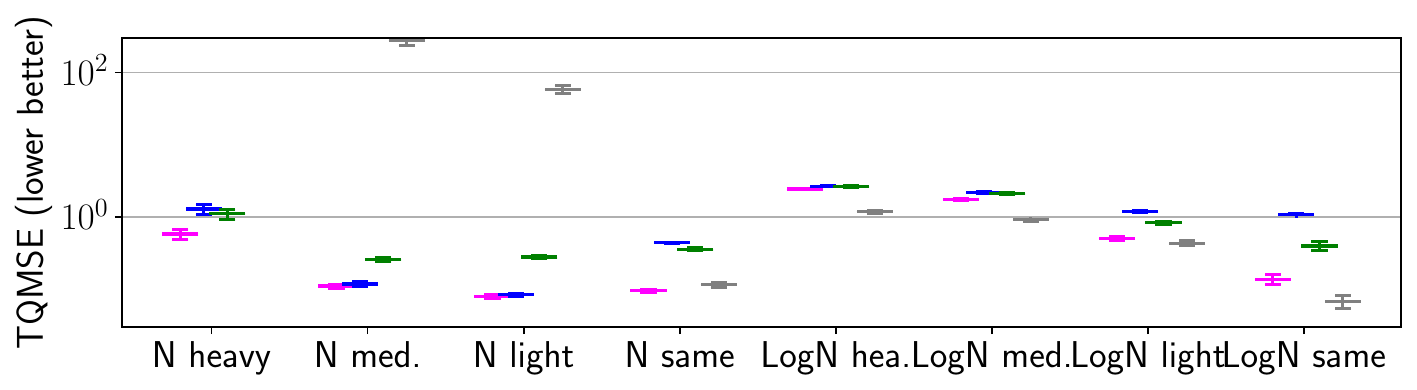}

Type 2 datasets

\includegraphics[width=0.49\columnwidth]{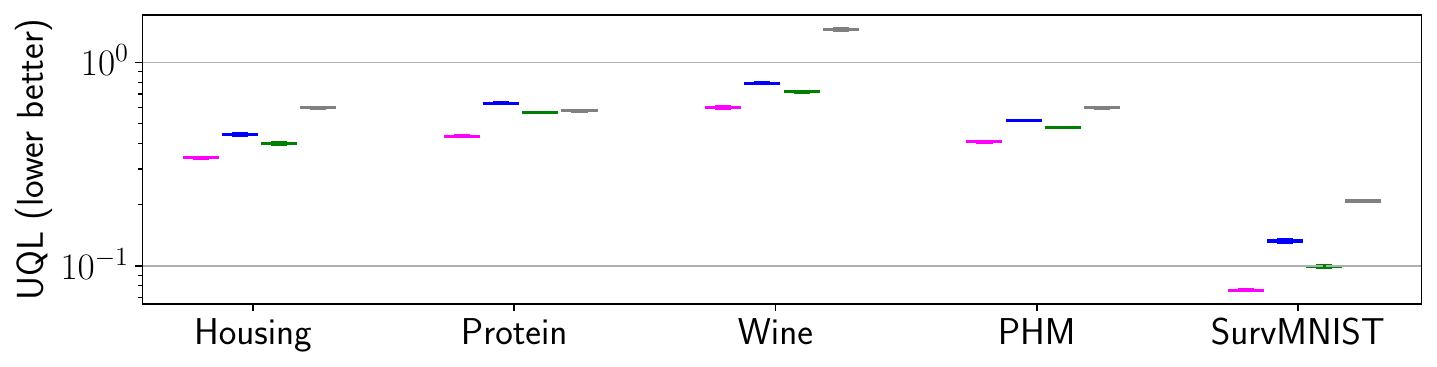}

Type 3 datasets

\includegraphics[width=0.49\columnwidth]{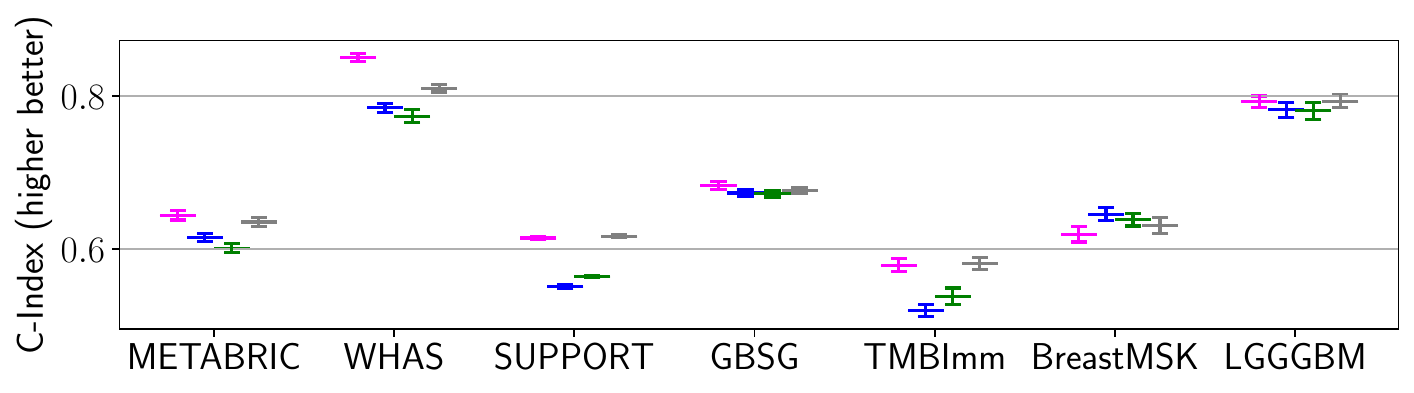}

\includegraphics[width=0.49\columnwidth]{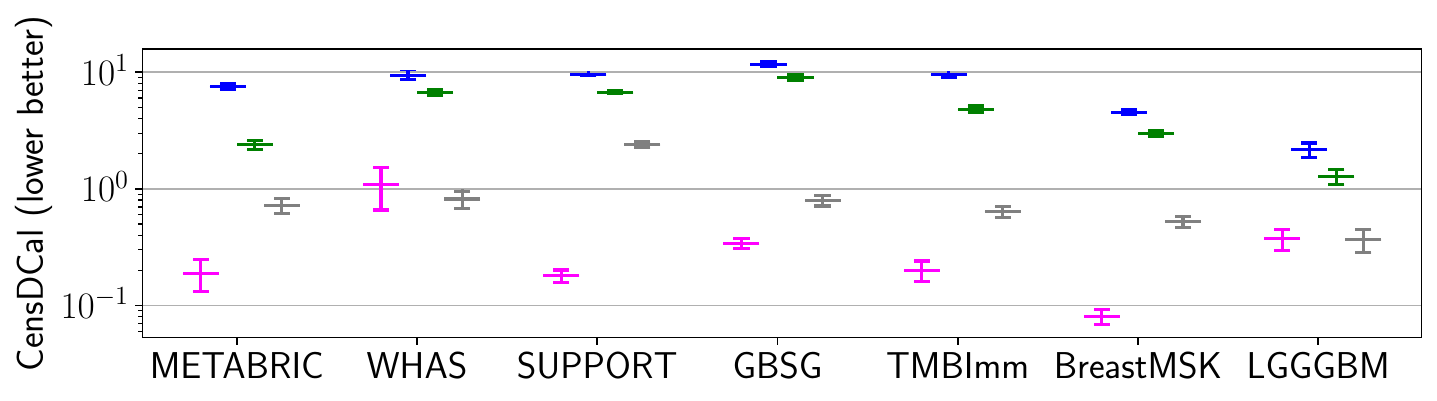}

\caption{Main benchmarking results. Mean $\pm$ one standard error over ten runs.} 
\label{fig_metric_plots}
\end{center}
\vskip -0.6in
\end{wrapfigure}
\subsection{Benchmarking}
\label{sec_exp_benchmarking}

Our main experiment benchmarks CQRNN against all baselines across a wide variety of datasets, covering various domains, sizes, dimensionalities and censoring proportions (see Table \ref{tab_dataset_info}). 
Some hyperparameter tuning was performed for each method (Appendix \ref{sec_app_exp_details_hyperparam}).
Figure \ref{fig_metric_plots} plots the preferred metric for each dataset type, though patterns were consistent across metrics -- Table \ref{tab_full_resultsa} provides a full breakdown.

For type 1 (synthetic) datasets, the quantitative results follow our qualitative observations, with CQRNN producing lowest TQMSE on all but the datasets generated from a Log Normal distribution, when it sometimes placed second behind LogNorm MLE.

CQRNN also produces the lowest UQL on all datasets of type 2. LogNorm MLE performs poorly on these, presumably since they are not typical time-to-event datasets which are well captured by the Log Normal distribution. 

For type 3 datasets, CQRNN and LogNorm MLE usually perform best in terms of C-index, with error bars tending to overlap. In CensDCal, CQRNN consistently perform best, matched only by LogNorm MLE on two datasets, when error bars overlap. 
Excl. censor is the weakest method, with DeepQuantReg midway between it and CQRNN.



\subsection{Comparison of Sequential Grid and CQRNN}
\label{sec_exp_comparison_neo}

We ran a head-to-head comparison of CQRNN and the sequential grid algorithm on all type 1 synthetic datasets, and all type 3 real datasets.
Table \ref{tab_grid_cqrnn_compare} compares on type 1 datasets, along with statistical tests of significance for TQMSE differences (200 random seeds). Appendix Table \ref{tab_grid_cqrnn_compare_real} provides results for the type 3 datsets, reporting C-index and CensDCal (50 random seeds). Significance tests are described in Appendix \ref{sec_app_exp_details_hyperparam}.

CQRNN delivers benefits in speed up both at training and test time, running an order of magnitude faster than the sequential grid algorithm. CQRNN also dramatically reduces model size. The magnitude of these benefits is largely determined by the size of the quantile grid, $M$, which is explicit in the sequential algorithm's time and space complexity, $\mathcal{O}(M)$, but largely avoided in CQRNN.

\looseness=-1
Differences in the quality of quantiles of the two algorithms is usually slight (after-all both leverage the same estimator), though CQRNN shows statistically significant gains on 8/14 type 1 datasets in terms of TQMSE, 4/7 type 3 datsets in terms of C-Index, and 7/7 type 3 datasets in terms of CensDCal. We hypothesise that having a single NN output all quantiles provides a helpful inductive bias, encouraging similarity between adjacent quantiles, that's not present in independently trained NNs.







\subsection{Hyperparameter Investigation}
\label{sec_exp_hyperparam}


The CQRNN algorithm includes two hyperparameters -- the grid of quantiles, $\operatorname{grid}_\tau$, and the pseudo y value, $y^*$. We ran ablations to empirically investigate the effect of these, as well as remedies for the `crossing-quantile' problem, and a comparison of partial vs. full optimisation. Here we summarise our findings, Appendix \ref{sec_app_further_results_hyper} provides full details.

We tested grid sizes, $M \in \{9,19,39\}$, on several type 1 datasets.
In general, a finer grid (larger $M$) is slightly beneficial, though there is variance between datasets and gains are sometimes only seen for larger datasets (>5,000 datapoints). 
Selection of $y^*$ requires some care for CQRNN. We defined it in terms of the maximum $y$ value in the training set, $y^* = c_{y^*} \max_i y_i$, for a hyperparemeter, $c_{y^*}>1$. Performance can be improved by tuning $c_{y^*}$ for each dataset, but using $c_{y^*}=1.2$ provided consistently reasonable results -- this was the value used in our benchmarking experiments. 
We trialled two methods for combating the crossing-quantile problem. 1) Adding a crossing loss penalty. 2) Constraining the NN architecture to output quantiles adding to the previous prediction. Neither of these methods significantly affected performance. 





\section{Discussion \& Conclusion}
\label{sec_discuss_conclude}

This paper has taken a popular idea from survival analysis, Portnoy's censored quantile regression estimator, and shown how it can be efficiently combined with NNs in a new algorithm, CQRNN. We provided theoretical insight by interpreting it as a flavour of EM. 
Empirically the method outperformed existing approaches, consistently producing more accurate quantile estimates across a range of synthetic and real datasets. For example, across datasets of type 2 and 3, CQRNN was best calibrated in 10 out of 12 instances (see CensDCal in Table \ref{tab_full_resultsa}).

\textbf{Limitations.} Firstly, our theoretical results have not said that solutions will converge on a global optimum. 
Secondly, we have only tested CQRNN on a modest number of real-world datasets, many drawn from the biomedical domain. It's possible that some datasets may cause CQRNN problems, as we found on BreastMSK (Figure \ref{fig_metric_plots}). In particular, cases where higher quantiles are undefined should be handled with care.



\textbf{Conclusion.} To summarise, our work contributes toward unlocking the benefit that modern machine learning could bring to important domains such as healthcare and machinery prognostics. 
By outputting quantiles, CQRNN naturally communicates a measure of uncertainty in its predictions, which makes it particularly suitable to these high-stakes applications, and a valuable addition to the toolkit combining deep NNs with survival analysis.

\begin{table}[h!]
\centering
\caption{Summary of all datasets used.}
\resizebox{0.9\textwidth}{!}{\begin{tabular}{lrrrrcc}
\toprule
    
     \multicolumn{1}{l}{Dataset} &  \multicolumn{1}{c}{Feats} &  \multicolumn{1}{c}{Train}  &  \multicolumn{1}{c}{Test} &  Prop.  &  Target sampling & Censoring sampling  \\
     
     & & \multicolumn{1}{c}{data} & \multicolumn{1}{c}{data} & censored & distribution  & distribution  \\
\midrule
\multicolumn{7}{c}{\textbf{Type 1 -- Synthetic datasets with synthetic censoring}}\\
     Norm linear &        1 &      500 &    1000 &             0.20 &    $\mathcal{N}(2x+10,(x+1)^2)$ & $\mathcal{N}(4x+10,(0.8x+0.4)^2)$ \\
 Norm non-linear &        1 &      500 &    1000 &             0.24 &    $\mathcal{N}(x\sin(2x)+10,(0.5x+0.5)^2)$ & $\mathcal{N}(2x+10,2^2)$ \\
     Exponential &        1 &      500 &    1000 &             0.30 &    $\operatorname{Exp}(2x+4)$ & $\operatorname{Exp}(-3x+15)$ \\
         Weibull &        1 &      500 &    1000 &             0.22 &    $\operatorname{Weibull}(4x \sin(2(x-1))+10, 5)$ & $\operatorname{Weibull}(-3x+20, 5)$ \\
         LogNorm &        1 &      500 &    1000 &             0.21 &    $\operatorname{Lognorm}((x-1)^2, x^2)$ & $\mathcal{U}(0, 10)$ \\
    Norm uniform &        1 &      500 &    1000 &             0.62 &    $\mathcal{N}(2x \cos(2x)+13,(x^2+0.5)^2)$ & $\mathcal{U}(0, 18)$\\
      Norm heavy &        4 &     2000 &    1000 &             0.80 &   \footnotesize{$\mathcal{N}(3x_0 + x_1^2 - x_2^2 + 2\sin(x_2 x_3)+6,(x^2+0.5)^2)$} & $\mathcal{U}(0, 12)$ \\
      Norm med. &        4 &     2000 &    1000 &             0.49 &    --- \raisebox{-0.5ex}{''} --- & $\mathcal{U}(0, 20)$ \\
      Norm light &        4 &     2000 &    1000 &             0.25 &    --- \raisebox{-0.5ex}{''} --- & $\mathcal{U}(0, 40)$ \\
      Norm same &        4 &     2000 &    1000 &             0.50 &    --- \raisebox{-0.5ex}{''} --- & Equal to target dist.  \\
  LogNorm heavy &        8 &     4000 &    1000 &             0.75 &    $\operatorname{Lognorm}(\sum_i^8 \beta_i x_i, 1)/10$ & $\mathcal{U}(0, 0.4)$ \\
    LogNorm med. &        8 &     4000 &    1000 &             0.52 &    --- \raisebox{-0.5ex}{''} --- & $\mathcal{U}(0, 1.0)$\\
  LogNorm light &        8 &     4000 &    1000 &             0.23 &    --- \raisebox{-0.5ex}{''} --- & $\mathcal{U}(0, 3.5)$\\
    LogNorm same &        8 &     4000 &    1000 &             0.50 &    --- \raisebox{-0.5ex}{''} --- & Equal to target dist.  \\
    
        \multicolumn{7}{c}{ \textbf{Type 2 -- Real datasets with synthetic censoring} }\\
         
         Housing &        8 &    16512 &    4128 &             0.50 &    Real & $\mathcal{U}(0, c_1)$ \\
         Protein &        9 &    36584 &    9146 &             0.44 &    Real & $\mathcal{U}(0, c_2)$\\
            Wine &       11 &     5197 &    1300 &             0.69 &    Real & $\mathcal{U}(0, c_3)$\\
             PHM &       21 &    36734 &    9184 &             0.52 &    Real & $\mathcal{U}(0, c_4)$\\
      SurvMNIST &        28$\times$28 &    48000 &   12000 &             0.53  & One Gamma dist. per MNIST class & $\mathcal{U}(0, c_5)$ \\
       
        \multicolumn{7}{c}{ \textbf{Type 3 -- Real datasets with real censoring} }\\
        METABRIC &        9 &     1523 &     381 &             0.42 &    Real & Real \\
            WHAS &        6 &     1310 &     328 &             0.57 &    Real & Real\\
         SUPPORT &       14 &     7098 &    1775 &             0.32 &    Real & Real\\
            GBSG &        7 &     1785 &     447 &             0.42 &    Real & Real\\
      TMBImmuno &        3 &     1328 &     332 &             0.49 &    Real & Real\\
      BreastMSK &        5 &     1467 &     367 &             0.77 &    Real & Real\\
          LGGGBM &        5 &      510 &     128 &             0.60 &    Real & Real\\
\bottomrule
\end{tabular}}
\label{tab_dataset_info}
\end{table}

\vspace{-0.2in}

\section*{Acknowledgement}
\looseness=-1
This work was supported by the National Key Research and Development
Program of China (2020AAA0106302); NSF of China Projects (Nos. 62061136001, U19B2034, U1811461, U19A2081); Beijing NSF Project (No.
JQ19016); a grant from Tsinghua Institute for Guo Qiang; and the High Performance Computing Center, Tsinghua University. J.Z was also supported by the XPlorer Prize.

\FloatBarrier
\newpage

\bibliography{library.bib}


\FloatBarrier
\newpage

\section*{Checklist}

\begin{enumerate}

\item For all authors...
\begin{enumerate}
  \item Do the main claims made in the abstract and introduction accurately reflect the paper's contributions and scope?
    \answerYes{}
  \item Did you describe the limitations of your work?
    \answerYes{See Section \ref{sec_discuss_conclude}.}
  \item Did you discuss any potential negative societal impacts of your work?
    \answerNA{We don't foresee direct negative societal impacts from our algorithm.}
  \item Have you read the ethics review guidelines and ensured that your paper conforms to them?
    \answerYes{}
\end{enumerate}

\item If you are including theoretical results...
\begin{enumerate}
  \item Did you state the full set of assumptions of all theoretical results?
    \answerYes{}
        \item Did you include complete proofs of all theoretical results?
    \answerYes{See Appendix \ref{sec_app_theory}.}
\end{enumerate}

\item If you ran experiments...
\begin{enumerate}
  \item Did you include the code, data, and instructions needed to reproduce the main experimental results (either in the supplemental material or as a URL)?
    \answerYes{Code is available at \textcolor{blue}{\url{https://github.com/TeaPearce/Censored_Quantile_Regression_NN}}.}
  \item Did you specify all the training details (e.g., data splits, hyperparameters, how they were chosen)?
    \answerYes{See Appendix \ref{sec_app_exp_details_hyperparam}.}
        \item Did you report error bars (e.g., with respect to the random seed after running experiments multiple times)?
    \answerYes{Standard errors over ten random seeds are included in main results in Figure \ref{fig_metric_plots} and Table \ref{tab_full_resultsa}.}
        \item Did you include the total amount of compute and the type of resources used (e.g., type of GPUs, internal cluster, or cloud provider)?
    \answerYes{See Appendix \ref{sec_app_exp_details}.}
\end{enumerate}

\item If you are using existing assets (e.g., code, data, models) or curating/releasing new assets...
\begin{enumerate}
  \item If your work uses existing assets, did you cite the creators?
    \answerYes{Data details are given in Appendix \ref{sec_app_dataset_details}.}
  \item Did you mention the license of the assets?
    \answerYes{All datasets used are opensourced.}
  \item Did you include any new assets either in the supplemental material or as a URL?
    \answerYes{We don't create new assets. Links to datasets used are provided in Appendix \ref{sec_app_dataset_details}.}
  \item Did you discuss whether and how consent was obtained from people whose data you're using/curating?
    \answerNA{}
  \item Did you discuss whether the data you are using/curating contains personally identifiable information or offensive content?
    \answerYes{Data details are given in Appendix \ref{sec_app_dataset_details}.}
\end{enumerate}

\item If you used crowdsourcing or conducted research with human subjects...
\begin{enumerate}
  \item Did you include the full text of instructions given to participants and screenshots, if applicable?
    \answerNA{}
  \item Did you describe any potential participant risks, with links to Institutional Review Board (IRB) approvals, if applicable?
    \answerNA{}
  \item Did you include the estimated hourly wage paid to participants and the total amount spent on participant compensation?
    \answerNA{}
\end{enumerate}

\end{enumerate}

\FloatBarrier
\newpage
\appendix

\FloatBarrier
\newpage
\section{Analytical Results}
\label{sec_app_theory}

\subsection{Proofs}

\begin{theorem}
\label{theorem_likelihood}
Let the likelihood for each datapoint at each quantile be an asymmetric Laplace distribution with scale, $\lambda = \sqrt{\tau - \tau^2}$, and asymmetry, $k=\tau/\sqrt{\tau - \tau^2}$.
The negative log likelihood is,
\begin{align}
    -\log p(y |\x,\theta, \mathbf{w}, y^*) &= \sum_{\tau \in \text{grid}_\tau}  \mathcal{L}_\text{Port.}(\theta, y, \x, \tau, \mathbf{w}, y^*) + \operatorname{constant}.
\end{align}
\end{theorem}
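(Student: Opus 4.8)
\emph{Proof proposal.} The plan is to unpack the asymmetric Laplace density with the stated parameters, show that each per-datapoint, per-quantile negative log-density equals the corresponding check loss $\rho_\tau$ plus an additive constant, and then assemble the joint (weighted) likelihood so that those constants collect into the $\operatorname{constant}$ of the statement.

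First I would pin down conventions. Writing $u := y - m$ and $\rho_\tau(u) := u\,(\tau - \mathbb{I}[u<0])$, this is exactly the two-argument check loss of eq.~\ref{eq_loss_checkmark} on setting $m = \hat{y}_{i,\tau}$, since $\rho_\tau(u) = \tau u$ for $u \ge 0$ and $(1-\tau)(-u)$ for $u < 0$. The asymmetric Laplace density with location $m$, scale $\lambda$ and asymmetry $k$ can be written $f(y) = \frac{\lambda}{k + 1/k}\exp(-\lambda k\,(y-m))$ for $y \ge m$ and $f(y) = \frac{\lambda}{k + 1/k}\exp(\frac{\lambda}{k}(y-m))$ for $y < m$. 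Substituting $\lambda = \sqrt{\tau-\tau^2}$ and $k = \tau/\sqrt{\tau-\tau^2}$ gives $\lambda k = \tau$, $\lambda/k = 1-\tau$ and $\lambda/(k+1/k) = \tau-\tau^2$, so the density collapses to $f(y) = (\tau-\tau^2)\exp(-\rho_\tau(y-m))$, which is the form used by \cite{Yu2001} in the uncensored Bayesian quantile-regression setting (with unit scale). This yields the key identity $-\log f(y) = \rho_\tau(y, m) - \log(\tau-\tau^2)$.

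Next I would assemble the joint likelihood. Treating datapoints, and the quantiles in $\operatorname{grid}_\tau$, as contributing independent factors, each observed point $i\in\mathcal{S}_\text{observed}$ contributes $f(y_i)$ at location $\hat{y}_{i,\tau}$, while each censored point $j\in\mathcal{S}_\text{censored}$ is modelled by the weighted (``fractional observation'') likelihood $f(y_j)^{w_j}\, f(y^*)^{1-w_j}$ at location $\hat{y}_{j,\tau}$ -- this mirrors Portnoy's splitting of a censored point into two pseudo-datapoints with masses $w_j$ and $1-w_j$ in eq.~\ref{eq_loss_reweight}. Taking $-\log$ of the product over all datapoints and all $\tau\in\operatorname{grid}_\tau$ and applying the key identity termwise turns every log-ALD factor into the matching $\rho_\tau(\cdot,\cdot)$ together with a $-\log(\tau-\tau^2)$ term; the collected $\rho_\tau$ terms are precisely $\sum_{\tau\in\operatorname{grid}_\tau}\mathcal{L}_\text{Port.}(\theta, y, \x, \tau, \mathbf{w}, y^*)$. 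Since $w_j + (1-w_j)=1$, the leftover terms sum to $-N\sum_{\tau\in\operatorname{grid}_\tau}\log(\tau-\tau^2)$, which is independent of $\theta$ (indeed of $\mathbf{w}$), and is absorbed into $\operatorname{constant}$; this proves the claim.

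I expect the principal obstacle to be conceptual rather than computational: making precise what probabilistic object $p(y\mid\x,\theta,\mathbf{w},y^*)$ denotes, since the $w_j$ and $1-w_j$ exponents yield a weighted pseudo-likelihood rather than a genuine normalised density, and one must argue this is the natural ``fractional count'' reading of Portnoy's reweighting (noting that any attendant normaliser depends on $\tau$ only and is therefore harmless). The remaining work -- matching the rate/scale convention of the asymmetric Laplace to the coefficients $\tau$ and $1-\tau$ of the check loss -- is routine bookkeeping, but it is exactly where a reciprocal or sign slip would enter, so I would write the density out explicitly and verify $\lambda k = \tau$ and $\lambda/k = 1-\tau$ before going further.
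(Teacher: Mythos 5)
Your proposal is correct and follows essentially the same route as the paper's proof: the same weighted pseudo-likelihood with exponents $w_j$ and $1-w_j$ over the product of quantiles, and the same collapse of the asymmetric Laplace density with $\lambda k = \tau$, $\lambda/k = 1-\tau$, $\lambda/(k+1/k) = \tau-\tau^2$ into $-\log f = \rho_\tau + \operatorname{constant}$. Your extra remarks — verifying the constant is independent of $\mathbf{w}$ as well as $\theta$, and flagging that the weighted object is a pseudo-likelihood rather than a normalised density — are sound refinements of the same argument.
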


\begin{proof}
Define the likelihood over all quantiles of interest, and split censored datapoints into two pseudo datapoints, one at the censoring location and one at the large pseudo value $y^*$, to give a weighted likelihood,
\begin{align}
    p(y |\x,\theta, \mathbf{w}, y^*) &= \prod_{\tau \in \text{grid}_\tau} p(y|\x,\theta, \mathbf{w}, y^*,\tau), \\
    \label{eq_prob_model}
    p(y|\x,\theta, \mathbf{w}, y^*,\tau) &= \prod_{i\in\mathcal{S}_\text{observed}}p(y_i|\x_i,\theta) 
    \prod_{j\in\mathcal{S}_\text{censored}}p(y_j|\x_j,\theta)^{w_j}p(y^*|\x_j,\theta)^{1-w_j} .
\end{align}



We write the asymmetric Laplace density with $\hat{y}_{j,\tau}$ as the location parameter and scale $\lambda$ and asymmetry $k$,
\begin{equation}
    f(y_{j}; \hat{y}_{j,\tau},\lambda,k) = \frac{\lambda}{k+1/k} 
    \begin{cases}
        \exp ((\lambda/k) (y_{j}-\hat{y}_{j,\tau}))   \;\text{ if } \; \hat{y}_{j,\tau}>y_{j}\\
        \exp (-\lambda k (y_{j}-\hat{y}_{j,\tau})) \;\;\, \text{ else}
    \end{cases} .
\end{equation}
Setting $\lambda = \sqrt{\tau - \tau^2}$ and $k=\tau/\sqrt{\tau - \tau^2}$ and rearranging,
\begin{align}
    f(y_{j}; \hat{y}_{j,\tau},\lambda,k) 
        &= (\tau - \tau^2)\exp (y_{j}-\hat{y}_{j,\tau})  (-\tau + \mathbb{I}[\hat{y}_{j,\tau}>y_{j}]) \\
        \label{eq_laplace_simplify}
        \log f(y_{j}; \hat{y}_{j,\tau},\lambda,k) 
        &= - \rho_\tau(y_{j}, \hat{y}_{j,\tau}) + \operatorname{constant}
\end{align}
Taking the logarithm of Eq. \ref{eq_prob_model} we have,
\begin{align}
    \log p(y |\x,\theta, \mathbf{w}, y^*) \nonumber  &=\\  \sum_{\tau \in \text{grid}_\tau} & \sum_{i\in\mathcal{S}_\text{observed}} \log p(y_i|x_i,\theta) + 
    \sum_{j\in\mathcal{S}_\text{censored}} \log p(y_j|x_j,\theta)^{w_j} + \log p(y^*|x_j,\theta)^{1-w_j} .
\end{align}
Eq. \ref{eq_laplace_simplify} may be substituted into this if all likelihoods of observed and censored pseudo datapoints are chosen to follow asymmetric Laplace distributions. Taking the negative then recovers the theorem's result.

\end{proof}

\begin{theorem}
\label{theorem_self_correct}
If $\hat{q}_j$ is underestimated, one iteration of the algorithm acts to \textit{increase} the quantile predictions, $\hat{y}_{j,\tau}$, by the same amount, or even higher, than if the weight had been correct. 
If $\hat{q}_j$ is overestimated, $\hat{y}_{j,\tau}$, is decreased by the same amount, or even lower, than with the correct weight.
\end{theorem}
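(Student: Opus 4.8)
The plan is to reduce the claim to a one-variable monotonicity fact: the gradient of $\mathcal{L}_\text{Port.}$ with respect to a single prediction $\hat y_{j,\tau}$ is nondecreasing in that datapoint's weight $w_j$, so replacing the oracle weight $\bar w_j$ by an estimate $\widehat w_j$ displaces $\hat y_{j,\tau}$ in a direction fixed by the sign of $\widehat w_j-\bar w_j$. Transporting this through the monotone reweighting map $q_j\mapsto w_j=(\tau-q_j)/(1-q_j)$ then turns a one-sided error in $\widehat q_j$ into a one-sided bias in the update, after which it remains only to verify that this bias points the corrective way.

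Concretely, I would first fix a censored index $j$ and a grid value $\tau$ and isolate the only two terms of $\mathcal{L}_\text{Port.}$ that involve $\hat y_{j,\tau}$, namely $w_j\,\rho_\tau(y_j,\hat y_{j,\tau})+(1-w_j)\,\rho_\tau(y^*,\hat y_{j,\tau})$. Differentiating with $\partial\rho_\tau(y,\hat y)/\partial\hat y=\mathbb{I}[\hat y>y]-\tau$ (read as a subgradient at the kink $\hat y=y$, which is harmless), and using that $y^*$ is chosen far above the data so $\mathbb{I}[\hat y_{j,\tau}>y^*]=0$, the sum of the two derivatives collapses to
\[
\frac{\partial\mathcal{L}_\text{Port.}}{\partial\hat y_{j,\tau}}\;=\;w_j\,\mathbb{I}[\hat y_{j,\tau}>y_j]\;-\;\tau,
\]
which equals $-\tau$ when $\hat y_{j,\tau}\le y_j$ and is affine with nonnegative slope in $w_j$ otherwise; in particular it is nondecreasing in $w_j$. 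Hence $\widehat w_j\le\bar w_j$ gives $\partial\mathcal{L}_\text{Port.}(\widehat{\mathbf w})/\partial\hat y_{j,\tau}\le\partial\mathcal{L}_\text{Port.}(\bar{\mathbf w})/\partial\hat y_{j,\tau}$, with equality exactly when $\hat y_{j,\tau}\le y_j$, and the reverse inequality holds when $\widehat w_j\ge\bar w_j$.

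Feeding this into one gradient step $\hat y_{j,\tau}\leftarrow\hat y_{j,\tau}-\alpha\,\partial\mathcal{L}_\text{Port.}/\partial\hat y_{j,\tau}$, a smaller gradient yields a more positive (or less negative) increment: an error on one side of the oracle weight moves $\hat y_{j,\tau}$ up by at least as much as the oracle weight would, an error on the other side moves it down by at least as much, and since the derivation was carried out $\tau$ by $\tau$ this holds simultaneously for every $\tau\in\operatorname{grid}_\tau$. It then remains to connect the sign of the weight error to the sign of the quantile error through $w_j=(\tau-q_j)/(1-q_j)$ (with the out-of-range regime $q_j\ge\tau$ absorbed by the algorithm's clipping $w_j=0$), and to check that the resulting displacement of $\hat y_{j,\tau}$ is exactly the one that pulls $\widehat q_j=\arg\min_\tau|\hat y_{j,\tau}-y_j|$ back toward $\bar q_j$, i.e. that the bias is self-correcting rather than self-reinforcing. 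That last bookkeeping step is the main obstacle: one must compose the sign of $dw_j/dq_j$, the sign of the induced change in the gradient, the sign of the gradient-descent increment, and the relation between ``$\widehat q_j$ lying on the wrong side of $\bar q_j$'' and ``the predicted quantile curve sitting above or below the truth near $y_j$,'' so that these signs cancel favourably. The non-smoothness of $\rho_\tau$ and the boundary case $q_j\ge\tau$ are minor and handled by subgradients and the clipping convention; everything else is routine single-variable calculus.
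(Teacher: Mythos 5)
Your first half is sound and is essentially the paper's own route: the paper likewise isolates the two terms of $\mathcal{L}_\text{Port.}$ containing $\hat y_{j,\tau}$ and records the piecewise derivative ($-\tau$ for $\hat y_{j,\tau}<y_j$, $w_j-\tau$ for $y_j\le \hat y_{j,\tau}<y^*$, $1-\tau$ beyond $y^*$), then compares the gradient-descent increment under the estimated weights with the increment under the true weights, exactly as you do. Dropping the $\hat y_{j,\tau}\ge y^*$ branch is harmless since that branch does not involve $w_j$, so monotonicity of the gradient in $w_j$ holds in every case.

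The genuine gap is the step you defer as ``bookkeeping'': converting a one-sided error in $\hat q_j$ into a one-sided error in $\hat w_j$ and composing the signs is not a residual check, it is the entire content of the theorem, and it does not come out ``routinely'' in the direction the statement asserts. Since $w_j=(\tau-q_j)/(1-q_j)$ satisfies $\partial w_j/\partial q_j=(\tau-1)/(1-q_j)^2<0$ for $\tau<1$, underestimating $q_j$ produces an \emph{overestimated} $w_j$; pushed through your (correct) monotonicity of the gradient in $w_j$, this gives a \emph{larger} gradient and hence a smaller upward increment of $\hat y_{j,\tau}$ than the true weight would give --- the opposite orientation to the theorem's wording. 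The paper bridges exactly this point with its Lemma asserting $\hat q_j<\bar q_j\Rightarrow \hat w_j<\bar w_j$, but its proof substitutes $\hat w_j=(\tau-\bar q_j-\epsilon)/(1-\bar q_j-\epsilon)$, which corresponds to $\hat q_j=\bar q_j+\epsilon$ rather than the assumed $\hat q_j=\bar q_j-\epsilon$ (for which $\hat w_j=(\tau-\bar q_j+\epsilon)/(1-\bar q_j+\epsilon)$); so the sign composition is precisely where the difficulty lives, and you cannot wave it through as single-variable calculus. Until you carry out and reconcile that composition --- including fixing which side $\hat q_j$ errs on when the predicted curve sits above or below $y_j$ --- you have proved only the $w$-monotonicity half of the claim. (Separately, your added requirement that the displacement pull $\hat q_j$ back toward $\bar q_j$ is stronger than what the theorem asserts, which only compares the update under $\hat{\mathbf w}$ with the update under $\bar{\mathbf w}$; it is not needed.)
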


\begin{remark}
Note that if a quantile $\hat{q}_j$ is underestimated, it's desirable to increase the quantiles relating to that datapoint, for input $\x_j$ and predictions $\hat{y}_{j,\tau}$. If a quantile $\hat{q}_j$ is instead overestimated, it's desirable to decrease the quantiles for input $\x_j$ and predictions $\hat{y}_{j,\tau}$.
We refer to this desired behaviour as `self-correcting'.
\end{remark}

\begin{proof}

Denote ${q}_j$ the true quantile that censored datapoint $j$ is censored in, and ${w}_j = (\tau-{q}_j)/(1-{q}_j)$ the corresponding true weight. We now consider one iteration of the algorithm in the case that the estimated quantile is underestimated.
\begin{enumerate}
    \item Model underpredicts the censored quantile, $\hat{q}_j = {q}_j - \epsilon$, for some $\epsilon>0$.
    \item The corresponding weight is also underestimated, $\hat{q}_j < {q}_j \implies \hat{w}_j<{w}_j$, shown in lemma \ref{lemma_weights_larger}.
    
    \item Lemma \ref{lemma_partial_deriv} shows that, for censored datapoint $j$, the gradient of Eq. \ref{eq_loss_reweight} wrt the quantile prediction $\hat{y}_{j,\tau}$ is, 
    \begin{align}
        \frac{\partial \mathcal{L}_\text{Port.}(\theta, \mathcal{D}, \tau, \mathbf{w}, y^*)}{\partial \hat{y}_{j,\tau}} = \begin{cases}
        - \tau &\text{ if, } \hat{y}_{j,\tau} < y_j \\
        w_j - \tau &\text{ if, } y_j \leq \hat{y}_{j,\tau} < y^* \\
        1 - \tau &\text{ if, } y^* \leq \hat{y}_{j,\tau} \\
        \end{cases}.
    \end{align}
    Hence, if $\hat{w}_j$ is underestimated it holds that, $\frac{\partial \mathcal{L}_\text{Port.}(\theta, \mathcal{D}, \tau, \hat{\mathbf{w}}, y^*)}{\partial \hat{y}_{j,\tau}} \leq \frac{\partial \mathcal{L}_\text{Port.}(\theta, \mathcal{D}, \tau, {\mathbf{w}}, y^*)}{\partial \hat{y}_{j,\tau}}$. 
    \item When this gradient is used for optimisation, this has the effect of increasing the quantile prediction, $\hat{y}_{j,\tau}$, by either the same amount, or even higher, than if the weight had been correct. 
    
\end{enumerate}




Similar (reversed) logic applies if the quantile is initially overestimated, $\hat{q}_j = {q}_j + \epsilon$, (e.g. lemma \ref{lemma_weights_smaller}) which encourages decreasing the quantile predictions, $\hat{y}_{j,\tau}$, by the same amount or lower than with correct weights. Hence, weight estimates will be improved in future iterations of the algorithm.
Note that this applies to all quantiles, $\tau \in \operatorname{grid}_\tau$.


\end{proof}

\begin{lemma}
\label{lemma_weights_larger}
Let, $\hat{q}_j = {q}_j - \epsilon$, and $\epsilon>0$.
It holds that,
$\hat{q}_j < {q}_j  \implies \hat{w}_j<{w}_j$, for, $\tau \in (0,1)$, and, $\hat{q}_j, {q}_j \in (0,1)$. 
\end{lemma}

\begin{proof}
From the definition of the weights, 
$ \hat{w}_j<{w}_j  \implies (\tau-{q}_j - \epsilon)/(1-{q}_j - \epsilon) < (\tau-{q}_j)/(1-{q}_j)$.

Let $a \coloneqq \tau-{q}_j$ and $b \coloneqq 1-{q}_j$. Note that, $a<b$, since by assumption, $\tau<1$.
We must show that,
\begin{align}
\frac{a-\epsilon}{b-\epsilon}&<\frac{a}{b}\\
    \frac{a-\epsilon}{b-\epsilon} \frac{b}{a} &< 1 \\
    \frac{ab-b\epsilon}{ab-a\epsilon} &< 1, \text{ which holds since, } a<b.
\end{align}
\end{proof}

\begin{lemma}
\label{lemma_weights_smaller}
Let, $\hat{q}_j = {q}_j + \epsilon$, and $\epsilon>0$.
It holds that,
$\hat{q}_j > {q}_j  \implies \hat{w}_j>{w}_j$, for, $\tau \in (0,1)$, and, $\hat{q}_j, {q}_j \in (0,1)$. 
\end{lemma}

\begin{proof}
This proof follows lemma \ref{lemma_weights_larger}.
We now have,
\begin{align}
    \frac{ab+b\epsilon}{ab+a\epsilon} &> 1, \text{ which holds since, } a<b.
\end{align}
\end{proof}

\begin{lemma}
\label{lemma_partial_deriv}
The partial derivative of Portnoy's loss wrt the predicted quantile, for censored datapoint $j$, is given by,
    \begin{align}
        \frac{\partial \mathcal{L}_\text{Port.}(\theta, \mathcal{D}, \tau, \mathbf{w}, y^*)}{\partial \hat{y}_{j,\tau}} = \begin{cases}
        - \tau &\text{ if, } \hat{y}_{j,\tau} < y_j \\
        w_j - \tau &\text{ if, } y_j \leq \hat{y}_{j,\tau} < y^* \\
        1 - \tau &\text{ if, } y^* \leq \hat{y}_{j,\tau} \\
        \end{cases}.
    \end{align}
\end{lemma}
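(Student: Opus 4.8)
The plan is to differentiate $\mathcal{L}_\text{Port.}$ term by term, noting that for a fixed censored index $j$ the only summands depending on $\hat{y}_{j,\tau}$ are the two ``pseudo-datapoint'' terms $w_j\,\rho_\tau(y_j,\hat{y}_{j,\tau})$ and $(1-w_j)\,\rho_\tau(y^*,\hat{y}_{j,\tau})$; every other term (the observed losses and the censored terms with index $\neq j$) has zero partial derivative with respect to $\hat{y}_{j,\tau}$. So the computation reduces to differentiating a weighted sum of two checkmark losses in their second argument.

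First I would record the elementary fact that, for fixed $y$ and $\hat{y}\neq y$, the checkmark loss from eq.~\ref{eq_loss_checkmark} satisfies $\partial \rho_\tau(y,\hat{y})/\partial \hat{y} = -\tau + \mathbb{I}[\hat{y}>y]$: when $\hat{y}<y$ we have $\rho_\tau(y,\hat{y}) = \tau(y-\hat{y})$ with derivative $-\tau$, and when $\hat{y}>y$ we have $\rho_\tau(y,\hat{y}) = (\tau-1)(y-\hat{y})$ with derivative $1-\tau$. This is a one-line piecewise-linear check; at the single kink $\hat{y}=y$ the loss is continuous but non-differentiable, and I would either restrict to $\hat{y}\neq y$ or pass to the (convex) subdifferential, which contains the value obtained by taking either one-sided limit.

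Next I would use the standing assumption $y^* \gg \max_i y_i$, hence $y^* > y_j$, so the two kink locations $y_j$ and $y^*$ partition $\mathbb{R}$ into the three regions appearing in the statement. In each region I substitute the indicator values into $\partial \rho_\tau/\partial \hat{y}$ for each of the two terms and take the $w_j$-weighted combination: for $\hat{y}_{j,\tau}<y_j$ both derivatives equal $-\tau$, giving $w_j(-\tau)+(1-w_j)(-\tau)=-\tau$; for $y_j\le \hat{y}_{j,\tau}<y^*$ the $y_j$-term contributes $1-\tau$ while the $y^*$-term contributes $-\tau$, giving $w_j(1-\tau)+(1-w_j)(-\tau)=w_j-\tau$ after cancellation of the $w_j\tau$ cross terms; and for $\hat{y}_{j,\tau}\ge y^*$ both contribute $1-\tau$, giving $1-\tau$. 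Collecting the three cases yields exactly the claimed expression.

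There is no real obstacle here: the argument is a routine differentiation of a piecewise-linear convex function, and the only points needing a word of care are (i) justifying that no other loss term depends on $\hat{y}_{j,\tau}$, and (ii) the behaviour at the two non-smooth points $\hat{y}_{j,\tau}\in\{y_j,y^*\}$, which I would dispatch with the subgradient remark above (consistent with how the downstream gradient-descent argument in Theorem~\ref{theorem_self_correct} uses the formula). If one wishes to avoid subgradients entirely, the statement can simply be read as holding for all $\hat{y}_{j,\tau}$ outside the measure-zero set $\{y_j,y^*\}$, which suffices for the stochastic-gradient updates in algorithm~\ref{alg_cqrnn_loss}.
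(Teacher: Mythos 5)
Your proposal is correct and follows essentially the same route as the paper's proof: compute the derivative of the checkmark loss, note $y^*>y_j$ gives three regions, and take the $w_j$-weighted combination in each case. The only addition is your explicit subgradient treatment of the kinks at $y_j$ and $y^*$, which the paper leaves implicit.
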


\begin{proof}
Recalling that,
\begin{align}
    \rho_\tau(y_i, \hat{y}_{i,\tau}) = (y_i - \hat{y}_{i,\tau})\left(\tau - \mathbb{I} [ \hat{y}_{i,\tau}>y_i ] \right),
\end{align}
we have,
\begin{align}
\label{eq_rho_derivative}
    \frac{\partial \rho_\tau(y_i, \hat{y}_{i,\tau})}{\partial \hat{y}_{j,\tau}} = \mathbb{I} [ \hat{y}_{i,\tau}>y_i ] - \tau .
\end{align}
We are interested in the derivative for the censored portion of Portnoy's loss in Eq. \ref{eq_loss_reweight},
\begin{align}
  \frac{\partial w_j \rho_\tau(y_j, \hat{y}_{j,\tau}) +  (1-w_j) \rho_\tau(y^*, \hat{y}_{j,\tau}) }{\partial \hat{y}_{j,\tau}}.
\end{align}
Note that we always choose, $y^* > y_j $. Hence $\hat{y}_{j,\tau} < y_j \implies \hat{y}_{j,\tau} < y^*$, so there are three cases to consider. Case 1, $\hat{y}_{j,\tau} < y_j$, case 2, $y_j \leq \hat{y}_{j,\tau} < y^*$, case 3 $y^* \leq \hat{y}_{j,\tau}$.

\begin{align}
\intertext{For case 1,} 
  \frac{\partial w_j \rho_\tau(y_j, \hat{y}_{j,\tau}) +  (1-w_j) \rho_\tau(y^*, \hat{y}_{j,\tau}) }{\partial \hat{y}_{j,\tau}} &=  w_j (- \tau) + (1-w_j)(-\tau) =-\tau. \\
\intertext{For case 2,} 
  \frac{\partial w_j \rho_\tau(y_j, \hat{y}_{j,\tau}) +  (1-w_j) \rho_\tau(y^*, \hat{y}_{j,\tau}) }{\partial \hat{y}_{j,\tau}} &=  w_j (1 - \tau) + (1-w_j)(-\tau) = w_j - \tau. \\
\intertext{For case 3,} 
  \frac{\partial w_j \rho_\tau(y_j, \hat{y}_{j,\tau}) +  (1-w_j) \rho_\tau(y^*, \hat{y}_{j,\tau}) }{\partial \hat{y}_{j,\tau}} &=  w_j (1 - \tau) + (1-w_j)(1 -\tau) = 1 - \tau.
\end{align}
\end{proof}

\subsection{First Iteration of the Sequential Grid Algorithm}
\label{sec_app_theory_first_iter}


In this section we compare the procedure proposed by \cite{Portnoy2003} to find the first quantile predicted at, $\tau_0 \coloneqq \operatorname{grid}_\tau[0]$, with the procedure we propose in the sequential grid algorithm for NNs (Algorithm \ref{alg_seq_grid}). We show that these two procedures produce equivalent gradients.

\textbf{Portnoy's procedure.}
\cite{Portnoy2003} require that no censored datapoints lie below the first quantile, and propose deleting these from the dataset when this does occur, as follows.
\begin{enumerate}
    \item Set $\mathbf{w} = \mathbf{1}$ for all censored datapoints in dataset.
    \item Optimise $\mathcal{L}_\text{Port.}$ (Eq. \ref{eq_loss_reweight}) for $\tau_0$.
    \item Find all censored datapoints below $\tau_0$, $B \gets \{j \in \mathcal{S}_\text{censored} : y_j < \hat{y}_{j, \tau_0}\}$.
    \item If $B$ is empty then exit. 
    \item Exclude all elements of $B$ from the dataset and repeat.
\end{enumerate}
This optimisation could be repeated many times. We'd like to avoid this since training NNs on potentially large datasets can be costly.


\textbf{Sequential grid for NNs procedure.}
Algorithm \ref{alg_seq_grid} instead simply sets $\mathbf{q} = \mathbf{0}$, and optimises the first quantile once only. 
\begin{enumerate}
    \item Set $\mathbf{q} = \mathbf{0}$ for all censored datapoints in dataset.
    \item Optimise $\mathcal{L}_\text{Port.}$ (Eq. \ref{eq_loss_reweight}) for $\tau_0$.
\end{enumerate}

\textbf{Justification.}
We now justify why this is a reasonable approximation.
First note that when $\hat{q}_j=0$ we have, $\hat{w}_i = \frac{\tau-\hat{q}_i}{1-\hat{q}_i} = \tau$.
Using lemma \ref{lemma_partial_deriv} we can compare the gradients for each procedure.
\begin{align}
\intertext{For Portoy's procedure, when $\hat{w}_j=1$,}
    \frac{\partial \mathcal{L}_\text{Port.}(\theta, \mathcal{D}, \tau, \mathbf{\hat{w}}, y^*)}{\partial \hat{y}_{j,\tau}} &= \begin{cases}
    - \tau &\text{ if, } \hat{y}_{j,\tau} < y_j \\
    1 - \tau &\text{ if, } y_j \leq \hat{y}_{j,\tau} < y^* \implies \text{set to 0 in next iteration} \\
    1 - \tau &\text{ if, } y^* \leq \hat{y}_{j,\tau} \\
    \end{cases} .
    \\
\intertext{For Algorithm \ref{alg_seq_grid}, when $\hat{q}_j=0$,}
    \frac{\partial \mathcal{L}_\text{Port.}(\theta, \mathcal{D}, \tau, \mathbf{\hat{w}}, y^*)}{\partial \hat{y}_{j,\tau}} &= \begin{cases}
    - \tau &\text{ if, } \hat{y}_{j,\tau} < y_j \\
    0 &\text{ if, } y_j \leq \hat{y}_{j,\tau} < y^* \\
    1 - \tau &\text{ if, } y^* \leq \hat{y}_{j,\tau} \\
    \end{cases}   .
\end{align}
At first look, the gradients appear to differ in the case $y_j \leq \hat{y}_{j,\tau} < y^*$. But when a datapoint triggers this criteria in Portnoy's procedure, it will be excluded and the model retrained, in which case its gradient becomes 0. As such the gradients for a censored datapoint in both procedures are equivalent.





\FloatBarrier
\newpage
\section{Experimental Details}
\label{sec_app_exp_details}

This section provides further details about all experiments run. Our code base uses the $\operatorname{PyTorch}$ framework.
Hyperparameters in Appendix \ref{sec_app_exp_details_hyperparam}. 
Metrics in Appendix \ref{sec_app_exp_details_metrics}. 
Baselines in Appendix \ref{sec_app_exp_details_baselines}.
Datasets in Appendix \ref{sec_app_dataset_details}.

\textbf{Hardware.} We used an internal cluster for experiments, utilising machines with four GPUs and 14 CPU cores. Most of our datasets used fully-connected NNs, which were trained on CPU, while GPUs were used for the SurvMNIST experiments. 


\subsection{Full Hyperparameter Details}
\label{sec_app_exp_details_hyperparam}
Below we list hyperparameter settings used and where applicable the tuning protocols followed. 

\subsubsection{Qualitative 1D Analysis}
Section \ref{sec_exp_qualitative_1D} experiment.
All methods used the same optimisation procedure and NN architecture, without tuning.
\begin{itemize}
  \setlength{\itemsep}{0pt}
  \setlength{\parskip}{0pt}
  \setlength{\parsep}{0pt}
    \item Training dataset size: 500, where, $\x \sim \mathcal{U}(0,2)$
    \item Epochs: 100
    \item Optimiser: Adam
    \item Learning rate: 0.01 (decreased 70\% and 90\% of the way through training)
    \item Batch size: 128
    \item Weight decay: 0.0001
    \item NN architecture: Fully-connected, two hidden layers of 100 hidden nodes, GeLU activations
    \item $y^* = 1.2 \times \max_i y_i$
    \item $\operatorname{grid}_\tau \in \{ 0.1, 0.3, 0.5, 0.7, 0.9\}$
\end{itemize}

\subsubsection{Benchmarking}
Section \ref{sec_exp_benchmarking} experiment. Experiments were repeated over 10 random seeds.
Hyperparameter settings.
\begin{itemize}
  \setlength{\itemsep}{0pt}
  \setlength{\parskip}{0pt}
  \setlength{\parsep}{0pt}
    \item Training dataset size: various -- see Table \ref{tab_dataset_info}
    \item Test dataset size: various -- see Table \ref{tab_dataset_info}
    \item Epochs: $\in \{10, 20,50,100\}$
    \item Optimiser: Adam
    \item Learning rate: 0.01 for fc NN, 0.001 for CNN (decreased 70\% and 90\% of the way through training)
    \item Batch size: 128
    \item Weight decay: 0.0001
    \item Default NN architecture: Fully-connected, two hidden layers of 100 hidden nodes, ReLU activations 
    \item CNN architecture for SurvMNIST following \cite{Goldstein2020}: Conv2D[64, (5$\times$5)] $\to$ ReLU $\to$ Dropout(0.2) $\to$ AveragePool(2$\times$2) $\to$ Conv2D[128, (5$\times$5)] $\to$ ReLU $\to$ Dropout(0.2) $\to$ AveragePool(2$\times$2) $\to$ Conv2D[256, (2$\times$2)] $\to$ ReLU $\to$ Linear
    \item $y^* = 1.2 \times \max_i y_i$
    \item Grid size $M \in \{6, 10, 20\}$
    \item Dropout $\in \{\operatorname{True}, \operatorname{False}\}$
\end{itemize}

Tuning process for epochs and dropout:
\begin{itemize}
\item For the real type 2 and type 3 datasets, we tuned number of epochs $\in \{10, 20,50,100\}$ and dropout $\in \{\operatorname{True}, \operatorname{False}\}$ for each dataset for each method. We used three random splits as a validation (but not overlapping with the random seeds used in the final test run).
\item Epochs was fixed to 100 and dropout disabled for: Norm linear, Norm non-linear, Exponential, Weibull, LogNorm, Norm uniform. 
\item Epochs was fixed to 20 and dropout disabled for: Norm heavy, Norm medium, Norm light, Norm same. 
\item Epochs was fixed to 10 and dropout disabled for: LogNorm heavy, LogNorm medium, LogNorm light, LogNorm same.
\end{itemize}

We fixed the grid size according to an estimate of how densely the datapoints covered the input space (a rough consideration of dataset size and number of features):
\begin{itemize}
    \item Grid size $M=5$ for smaller datasets with more features: WHAS, SUPPORT, GBSG, TMBImmuno, BreastMSK, LGGGBM, METABRIC.
    \item Grid size $M=9$ for medium datasets or smaller datasets with less features: Norm linear, Norm non-linear, Exponential, Weibull, LogNorm, Norm uniform.
    \item Grid size $M=19$ for larger datasets or those with less features: Norm heavy, Norm medium, Norm light, Norm same, LogNorm heavy, LogNorm medium, LogNorm light, LogNorm same, Housing, Protein, Wine, PHM, SurvMNIST.
\end{itemize}

\subsubsection{Comparison of Sequential Grid and CQRNN}
Section \ref{sec_exp_comparison_neo} experiment. This was carried out under the same protocol as for the main benchmarking, but repeated over a larger number of seeds (200 for type 1 datasets, 50 for type 3 datasets).
To obtain 95\% confidence intervals, we compute the standard error of the difference between means, and multiply it by a two-sided t-statistic, with (number of seeds$-1$) degrees of freedom at the $\alpha = 0.05$ significance level. 
If zero falls within this confidence interval, the difference is deemed not significant.

\subsubsection{Hyperparameter Investigation}
Section \ref{sec_exp_hyperparam} experiment.
Hyperparameters are as for the main benchmarking except $M$ and $y^*$ were varied as stated in the text. Epochs were set via the formula epochs $= 500\times200/N$ ensuring the same number of gradient updates were made on each run. Experiments were repeated over 100 random seeds for the grid investigation, and ten random seeds for the $y^*$ investigation.

\subsection{Metrics}
\label{sec_app_exp_details_metrics}
This section briefly expands upon the metrics introduced in Section \ref{sec_experiments}. Table \ref{tab_dataset_metrics} summarises the availability on each metric for each dataset type. The computation of each is also detailed below.

\begin{table}[bt]
\centering
\caption{Availability of metrics for each dataset type.}
\resizebox{0.9\textwidth}{!}{\begin{tabular}{lccccccc}
\toprule
Dataset type & Target distribution & Censoring distribution & TQMSE & UQL & UnDCal & CensDCal & C-Index \\
\midrule
Type 1 & Synthetic & Synthetic & \cmark & \cmark & \cmark & \cmark & \cmark \\
Type 2 & Real & Synthetic & \xmark & \cmark & \cmark & \cmark & \cmark \\
Type 3 & Real & Real & \xmark & \xmark & \xmark & \cmark & \cmark \\
\bottomrule
\end{tabular}}
\label{tab_dataset_metrics}
\end{table}



\begin{align}
\text{True quantile MSE (TQMSE)} &\coloneqq \frac{1}{N} \sum_{\tau \in [0.1,0.5,0.9]} \sum_{i=1}^N (\hat{y}_{i,\tau} - y_{i,\tau})^2 \\
\text{Uncensored quantile loss (UQL)} &\coloneqq \frac{1}{N} \sum_{\tau \in [0.1,0.5,0.9]} \sum_{i=1}^N \rho_\tau(y_i,\hat{y}_{i,\tau}) \\
    \text{Uncensored D-Calibration (UnDCal)} &\coloneqq 100 \times \sum_{j = 1}^{M-1} \left( \left(\tau_{j+1} - \tau_j\right) - \frac{1}{N} \sum_{i=1}^N \mathbb{I} [\hat{y}_{i,\tau_j} < y_i \leq \hat{y}_{i,\tau_{j+1}}] \right)^2 \\
    \text{Censored D-Calibration (CensDCal)} &\coloneqq 100 \times \sum_{j = 1}^{M-1} \left( 
    \left(\tau_{j+1} - \tau_j\right) - 
        \frac{1}{N} \xi
    \right)^2 \\
\intertext{where, \cite{Goldstein2020} defines,}
    \xi  =   
        \sum_{i \in \mathcal{S}_\text{observed}}\mathbb{I} [\hat{y}_{i,\tau_j} < y_i \leq \hat{y}_{i,\tau_{j+1}}]  
        +& \sum_{i \in \mathcal{S}_\text{censored}}
        \frac{(\tau_{j+1}-q_i) \mathbb{I} [\hat{y}_{i,\tau_j} < y_i \leq \hat{y}_{i,\tau_{j+1}}]}{1-q_i}
        + \frac{(\tau_{j+1} - \tau_j)\mathbb{I}[q_i<\tau_j]}{1-q_i}.
\end{align}
We increase the magnitude of DCal metrics by $100 \times$ to make the numbers of similar order to TQMSE and UQL.


\subsection{Baselines}
\label{sec_app_exp_details_baselines}
This section provides some extra detail about the LogNorm MLE baseline. For this method, we use a NN with two outputs, representing the mean, $\mu$, and standard deviation, $\sigma$, of a Log Normal distribution, i.e. $\log y \sim \mathcal{N}(\hat{\mu}, \hat{\sigma}^2)$. We pass the output representing the standard deviation prediction through a $\operatorname{SoftPlus}$ to ensure it is always positive and differentiable.

The maximum likelihood estimation loss is then,
\begin{align}
    - \mathcal{L}_\text{MLE}(\theta, \mathcal{D}) \coloneqq \sum_{i \in \mathcal{S}_\text{observed}} \log p(y_i|\x_i, \theta) + \sum_{j \in \mathcal{S}_\text{censored}} \log (1 - \operatorname{CDF}(y_j|\x_j, \theta)),
\end{align}
where the likelihood and CDF follow the analytical expressions for the Log Normal distribution.
At evaluation time, we use the $\operatorname{SciPy}$ package to compute the quantiles from the predicted Log Normal distribution that correspond to those in $\operatorname{grid}_\tau$. This allows a like-for-like comparison with our other baselines.

\subsection{Dataset Details}
\label{sec_app_dataset_details}

This section provides further detail about the source of each dataset used.
All real-world datasets were taken from open-access repositories, and had already been anonymised.
For real-world datasets, we do not follow any previous test/train splits, rather we randomly shuffle the data for each run, selecting 80\% for training and 20\% for testing.



\subsubsection{Type 1 Datasets}

Table \ref{tab_dataset_info} details the generating target and censoring distributions used, as well as numbers of test and train datapoints. Inputs were always generated uniformly via, $\x \sim \mathcal{U}(0,2)^D$ for $D$ features.

\subsubsection{Type 2 Datasets}

Table \ref{tab_dataset_info} details dataset sizes and number of features. All datasets used a uniform censoring distribution, $c_i \sim \mathcal{U}(0,c)$, where $c$ was selected to be equal to the 90th percentile of the target distribution for SurvMNIST, and $c = 1.5 \max_i y_i$ for the other type 2 datasets.
Housing was sourced from \Verb'Scikit Learn' datasets, while Protein, Wine and PHM were sourced through \Verb'OpenML' \textcolor{blue}{\url{https://www.openml.org/}}.
\begin{itemize} 
    \item \textbf{Housing} Target is median house prices. 
    Retrieved from \textcolor{blue}{\url{https://scikit-learn.org/stable/datasets/real_world.html\#california-housing-dataset}}.
    \item \textbf{Protein} Target is RMSD.
    OpenML lookup ID is \Verb'physicochemical-protein'.
    \item \textbf{Wine} Target is quality of wine.
    OpenML lookup ID is \Verb'wine quality'.
    \item \textbf{PHM} (prognostics health management) target is breakdown time of simulated machines.
    OpenML lookup ID is \Verb'NASA PHM2008'.
\end{itemize}
Our final type 2 dataset is slightly different, since we don't use the provided labels directly.
\begin{itemize} 
    \item \textbf{SurvMNIST} appeared in \cite{Goldstein2020}, who adapted it from Sebastian Pölsterl's blog: \textcolor{blue}{\url{https://k-d-w.org/blog/2019/07/ survival-analysis-for-deep-learning/}}. It uses the standard MNIST dataset \textcolor{blue}{\url{http://yann.lecun.com/exdb/mnist/}}, but targets are drawn from a Gamma distribution, with different parameters per class. \cite{Goldstein2020} used a small variance fixed across classes, with means $\in [11.25, 2.25, 5.25, 5.0, 4.75, 8.0, 2.0, 11.0, 1.75, 10.75]$. For our purposes, we are most interested in how well methods capture variance, so we vary it, 
    variance $\in [0.1, 0.5, 0.1, 0.2, 0.2, 0.2, 0.3, 0.1, 0.4, 0.6]$.
\end{itemize}

\subsubsection{Type 3 Datasets}

We provide a brief overview of each dataset. 
Four datasets -- GBSG, METABRIC, SUPPORT, WHAS -- were all retrieved from \textcolor{blue}{\url{https://github.com/jaredleekatzman/DeepSurv/tree/master/experiments/data}}. \cite{Katzman2018} provides a detailed introduction to these datasets. The other three datasets -- TMBImmnuo, BreastMSK, LGGGBM -- were all sourced from the cBioPortal, \textcolor{blue}{\url{https://www.cbioportal.org/}}, for cancer genomics.
\begin{itemize} 
    \item \textbf{GBSG} (Rotterdam \& German Breast Cancer Study Group) requires prediction of survival time for breast cancer patients.
    \item \textbf{METABRIC} (Molecular Taxonomy of Breast Cancer International Consortium) requires prediction of survival time for breast cancer patients. 
    Covariates include expressions for four genes as well as clinical data.
    \item \textbf{SUPPORT} (Study to Understand Prognoses Preferences Outcomes and Risks of Treatment) requires prediction of survival time in seriously ill hospitalised patients. Covariates include demographic and basic diagnosis information.
    \item \textbf{WHAS} (Worcester Heart Attack Study) requires prediction of acute myocardial infraction survival. 
    \item \textbf{TMBImmuno} (Tumor Mutational Burden and Immunotherapy)
    requires prediction of survival time for patients with various cancer types using clinical data. 
    Covariates include age, sex, and number of mutations. 
    Retrieved from \textcolor{blue}{\url{https://www.cbioportal.org/study/clinicalData?id=tmb_mskcc_2018}}.
    \item \textbf{BreakMSK} 
    requires prediction of survival time for patients with breast cancer using tumour information. 
    Covariates include ER, HER, HR, mutation count, TMB.
    Retrieved from \textcolor{blue}{\url{https://www.cbioportal.org/study/clinicalData?id=breast_msk_2018}}.
    \item \textbf{LGGGBM.} 
    requires prediction of survival time for cancer patient from clinical data. 
    Covariates include age, sex, purity, mutation count, TMB.
    Retrieved from \textcolor{blue}{\url{https://www.cbioportal.org/study/clinicalData?id=lgggbm_tcga_pub}}
\end{itemize}





\FloatBarrier
\newpage
\section{Further Results}
\label{sec_app_further_results}

\begin{figure}[h!]
\begin{center}
\vspace{-0.02in}

\hspace{0.2in} \small CQRNN   \hspace{0.4in}  Sequential grid  \hspace{0.4in}  Excl. censor \hspace{0.4in}  DeepQuantReg \hspace{0.35in}  LogNorm MLE

\includegraphics[width=0.195\columnwidth]{images/1D_plots/1Dinput_Gaussian_uniform_cqrnn_v01.pdf}
\put(-85,0){\rotatebox{90}{\small Norm uniform}}
\includegraphics[width=0.195\columnwidth]{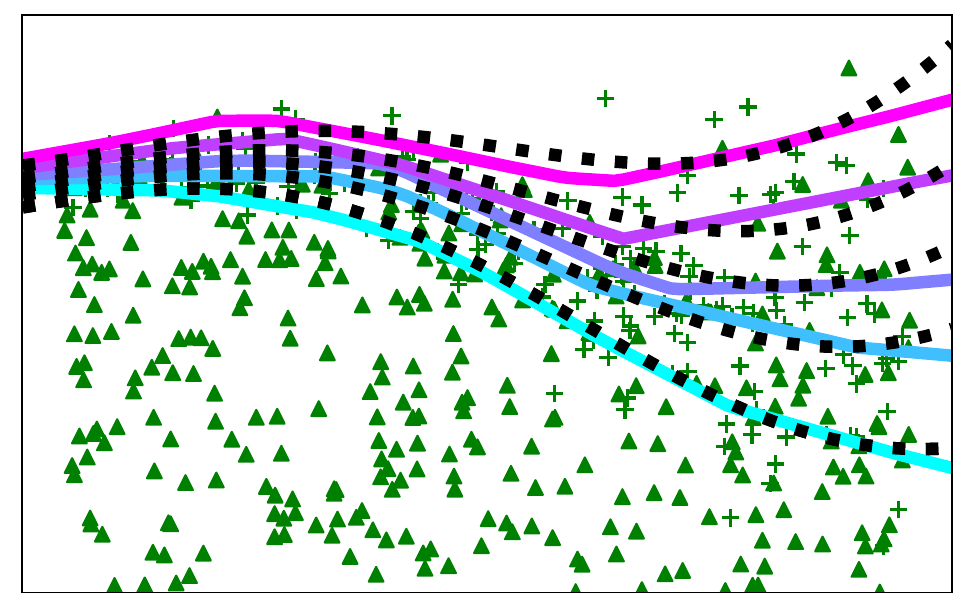}
\includegraphics[width=0.195\columnwidth]{images/1D_plots/1Dinput_Gaussian_uniform_excl_censor_v01.pdf}
\includegraphics[width=0.195\columnwidth]{images/1D_plots/1Dinput_Gaussian_uniform_deepquantreg_v01.pdf}
\includegraphics[width=0.195\columnwidth]{images/1D_plots/1Dinput_Gaussian_uniform_lognorm_v01.pdf}
\put(-0,4){\transparent{1.}\includegraphics[width=0.1\columnwidth]{images/1D_plots/1Dinput_legend_01.pdf}}

\includegraphics[width=0.195\columnwidth]{images/1D_plots/1Dinput_Gaussian_linear_cqrnn_v01.pdf}
\put(-85,0){\rotatebox{90}{\small Norm linear}}
\includegraphics[width=0.195\columnwidth]{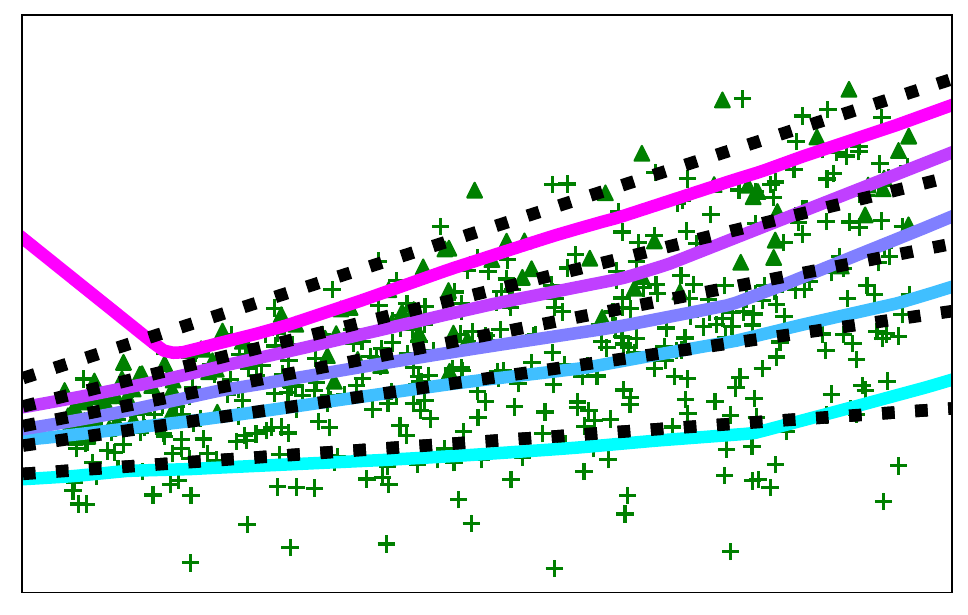}
\includegraphics[width=0.195\columnwidth]{images/1D_plots/1Dinput_Gaussian_linear_excl_censor_v01.pdf}
\includegraphics[width=0.195\columnwidth]{images/1D_plots/1Dinput_Gaussian_linear_deepquantreg_v01.pdf}
\includegraphics[width=0.195\columnwidth]{images/1D_plots/1Dinput_Gaussian_linear_lognorm_v01.pdf}

\includegraphics[width=0.195\columnwidth]{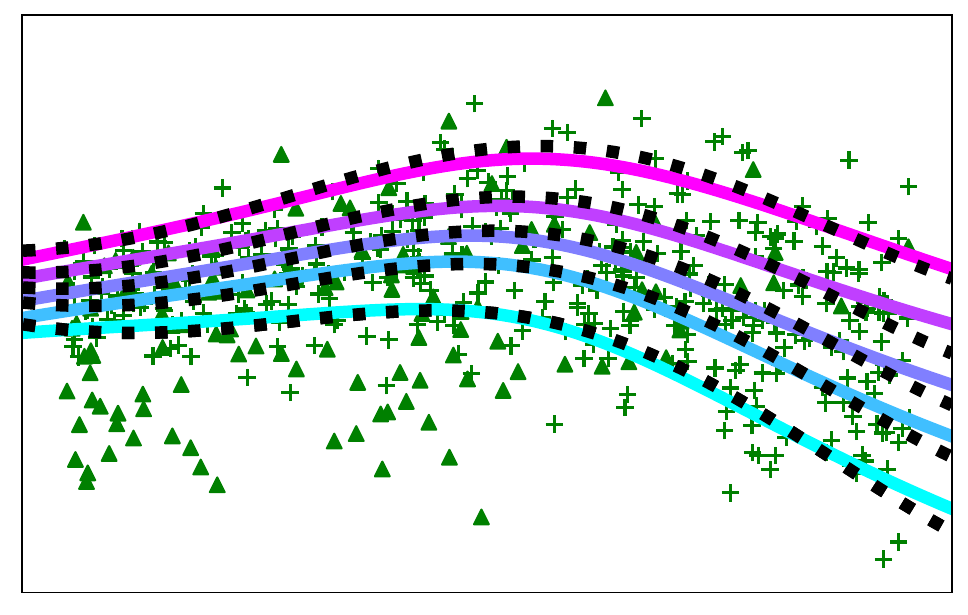}
\put(-85,0){\rotatebox{90}{\small Norm non-lin.}}
\includegraphics[width=0.195\columnwidth]{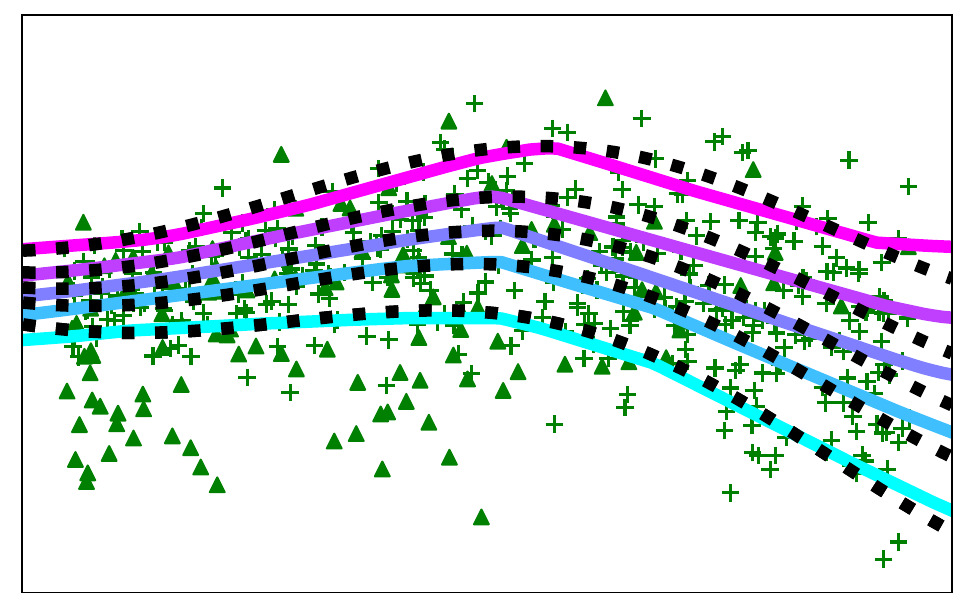}
\includegraphics[width=0.195\columnwidth]{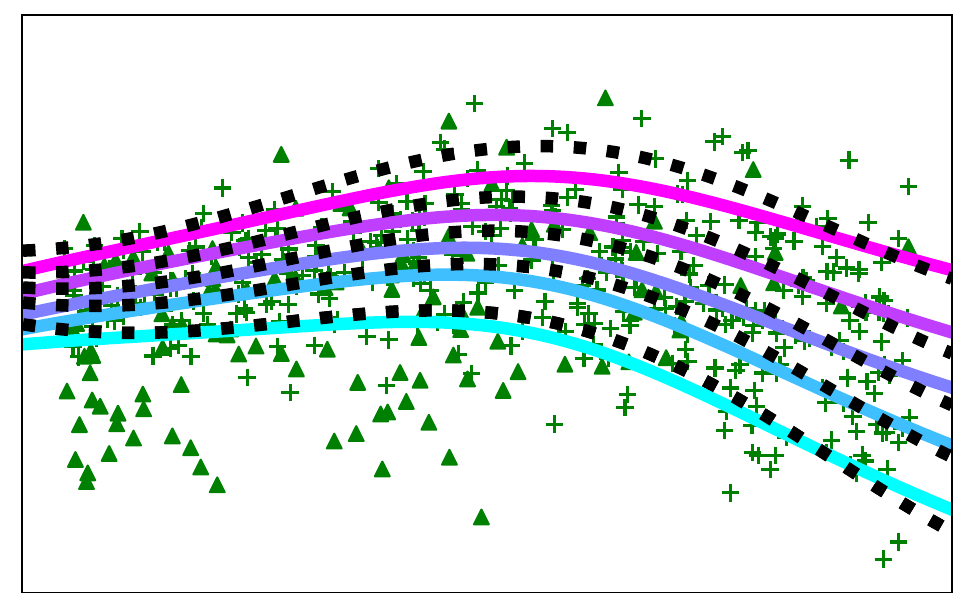}
\includegraphics[width=0.195\columnwidth]{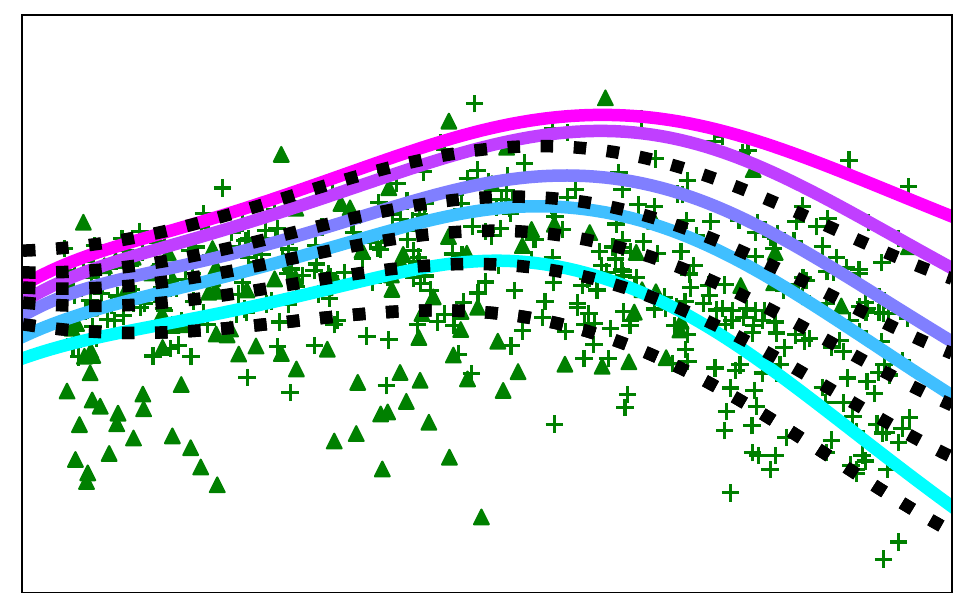}
\includegraphics[width=0.195\columnwidth]{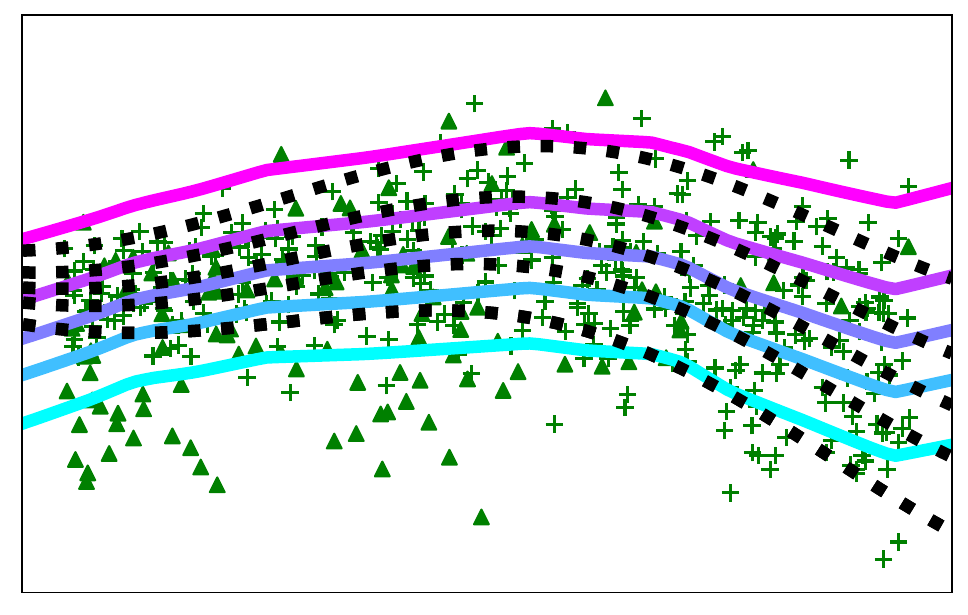}

\includegraphics[width=0.195\columnwidth]{images/1D_plots/1Dinput_Exponential_cqrnn_v01.pdf}
\put(-85,0){\rotatebox{90}{\small Exponential}}
\includegraphics[width=0.195\columnwidth]{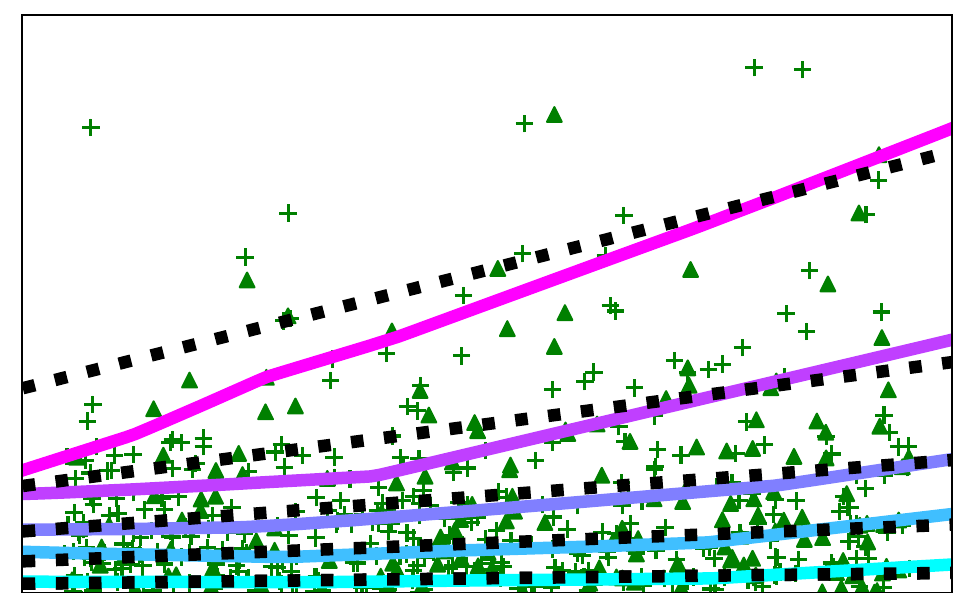}
\includegraphics[width=0.195\columnwidth]{images/1D_plots/1Dinput_Exponential_excl_censor_v01.pdf}
\includegraphics[width=0.195\columnwidth]{images/1D_plots/1Dinput_Exponential_deepquantreg_v01.pdf}
\includegraphics[width=0.195\columnwidth]{images/1D_plots/1Dinput_Exponential_lognorm_v01.pdf}

\includegraphics[width=0.195\columnwidth]{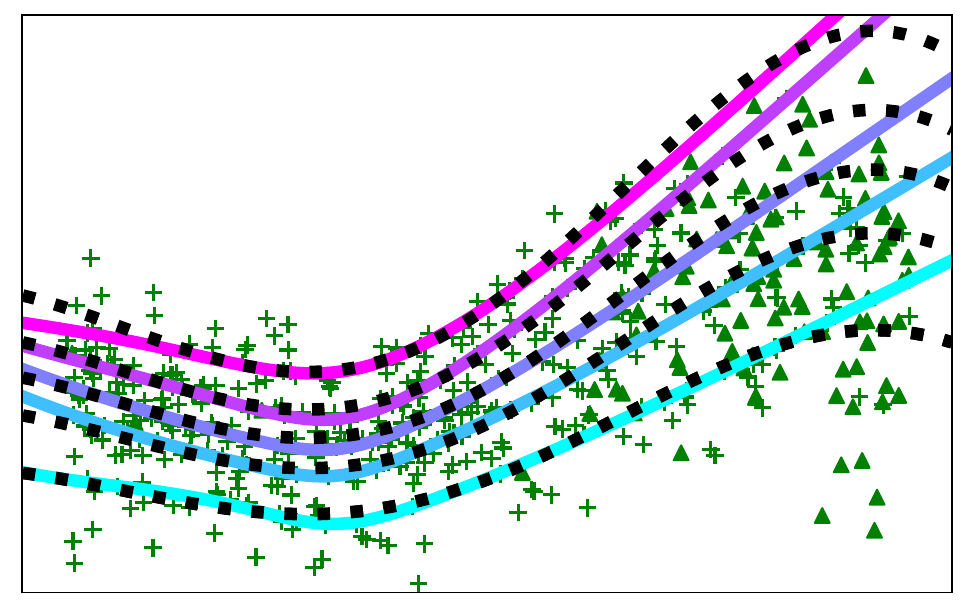}
\put(-85,0){\rotatebox{90}{\small Weibull}}
\includegraphics[width=0.195\columnwidth]{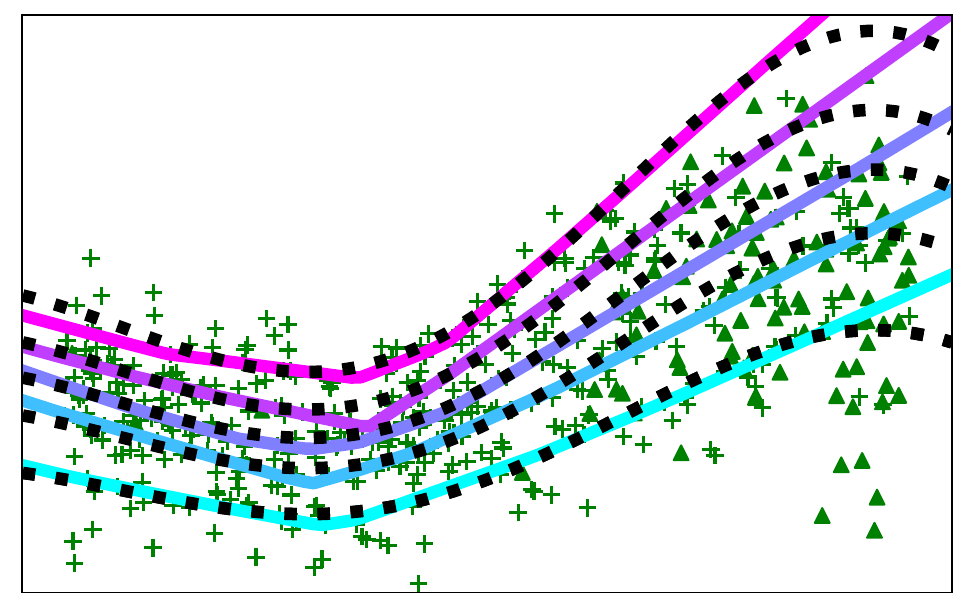}
\includegraphics[width=0.195\columnwidth]{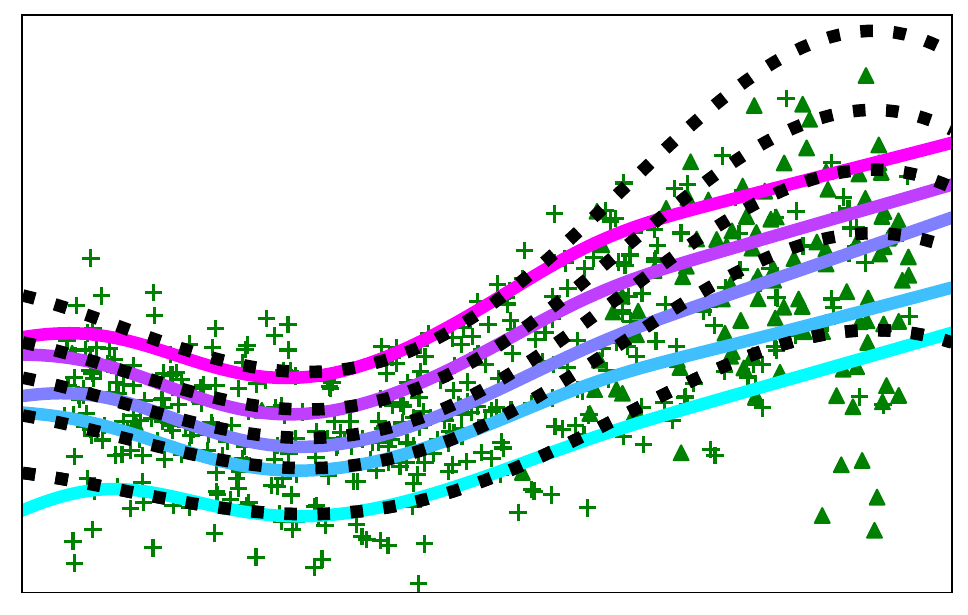}
\includegraphics[width=0.195\columnwidth]{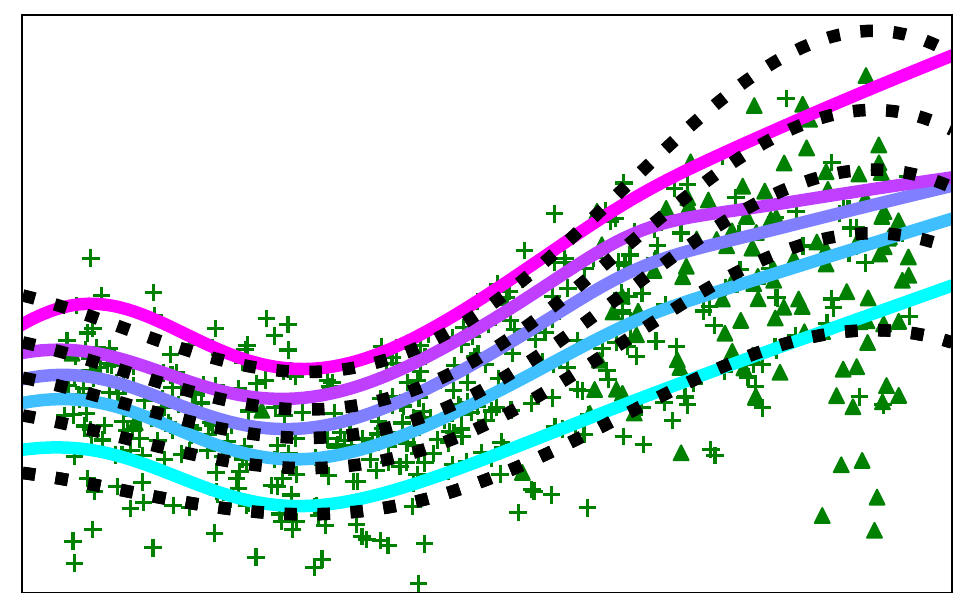}
\includegraphics[width=0.195\columnwidth]{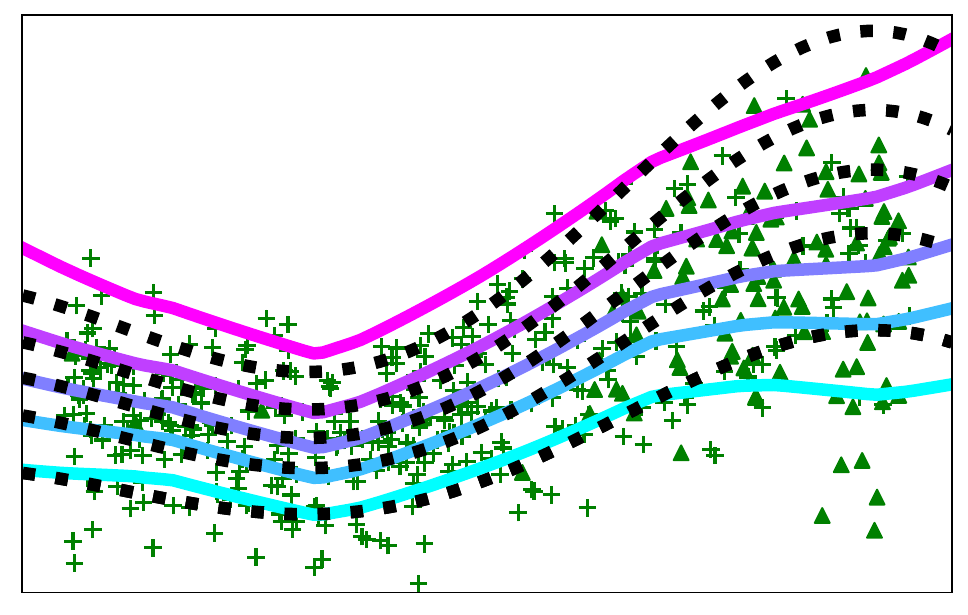}

\includegraphics[width=0.195\columnwidth]{images/1D_plots/1Dinput_LogNorm_cqrnn_v01.pdf}
\put(-85,0){\rotatebox{90}{\small LogNorm}}
\includegraphics[width=0.195\columnwidth]{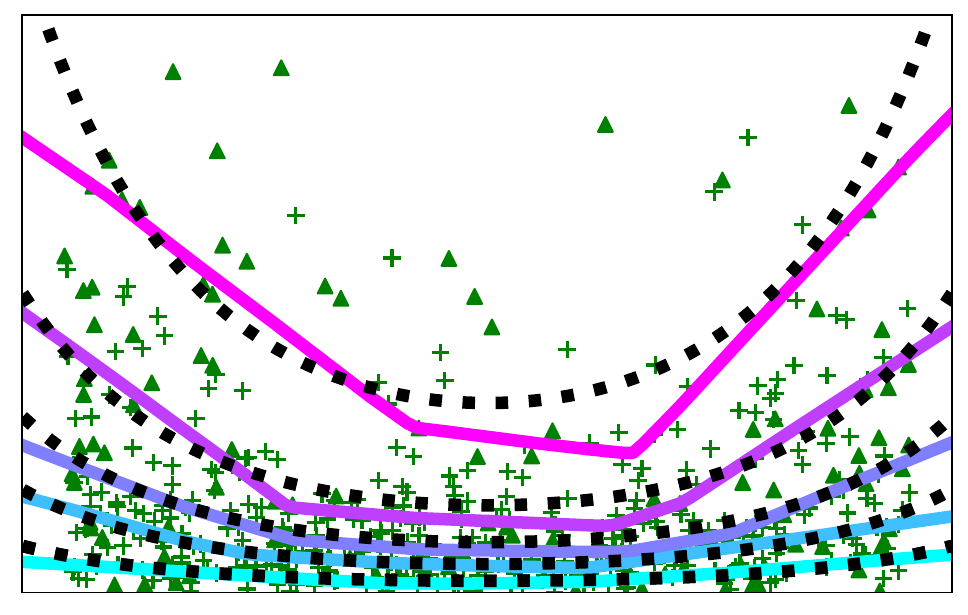}
\includegraphics[width=0.195\columnwidth]{images/1D_plots/1Dinput_LogNorm_excl_censor_v01.pdf}
\includegraphics[width=0.195\columnwidth]{images/1D_plots/1Dinput_LogNorm_deepquantreg_v01.pdf}
\includegraphics[width=0.195\columnwidth]{images/1D_plots/1Dinput_LogNorm_lognorm_v01.pdf}

\caption{This figure shows estimated quantiles (blue through pink) compared to ground truth quantiles (dashed black lines).
It shows 1D synthetic datasets of varying functions and noise distributions (rows), fitted by various methods (columns).
}
\label{fig_1D_plots}
\end{center}
\vskip -0.2in
\end{figure}

\begin{table}[ht]
\centering
\caption{Full results table for all datasets, methods and metrics. Mean $\pm$ 1 standard error for test set over 10 runs.} 
\resizebox{1.\textwidth}{!}{\begin{tabular}{lllllll}
\toprule
     Dataset &        Method & MSE to true quantile & Uncensored quantile loss & Uncensored D-Calibration & Concordance-Index & Censored D-Calibration  \\
     
      &     &  (lower better) & (lower better) & (lower better) & (higher better) & (lower better)  \\

\midrule
&\multicolumn{6}{c}{\textbf{Type 1 -- synthetic data, synthetic censoring}}\\

   Norm linear &            CQRNN &                      \textbf{0.088 $\pm$ 0.009} &                            \textbf{1.517 $\pm$0.01} &          0.327 $\pm$0.057 &                     \textbf{0.663 $\pm$0.002} &                         \textbf{0.211 $\pm$0.034} \\
   Norm linear &  Sequential grid &                      2.779 $\pm$ 0.506 &                            1.623 $\pm$0.02 &           \textbf{0.303 $\pm$0.06} &                     0.662 $\pm$0.002 &                         0.224 $\pm$0.024 \\
   Norm linear &     Excl. censor &                      0.566 $\pm$ 0.039 &                            1.62 $\pm$0.016 &          3.078 $\pm$0.244 &                     0.662 $\pm$0.002 &                         2.349 $\pm$0.102 \\
   Norm linear &     DeepQuantReg &                      1.172 $\pm$ 0.133 &                           1.695 $\pm$0.026 &           1.94 $\pm$0.192 &                     \textbf{0.663 $\pm$0.002} &                         1.241 $\pm$0.138 \\
   Norm linear &      LogNorm MLE &                      0.184 $\pm$ 0.021 &                           1.522 $\pm$0.008 &          0.315 $\pm$0.023 &                     0.662 $\pm$0.002 &                          0.232 $\pm$0.04 \\
   
   \midrule
   
  Norm non-lin &            CQRNN &                      0.028 $\pm$ 0.004 &                           \textbf{0.759 $\pm$0.005} &          \textbf{0.262 $\pm$0.044} &                     \textbf{0.674 $\pm$0.004} &                         \textbf{0.186 $\pm$0.023} \\
  Norm non-lin &  Sequential grid &                      \textbf{0.027 $\pm$ 0.003} &                           \textbf{0.759 $\pm$0.005} &          0.283 $\pm$0.034 &                     0.673 $\pm$0.004 &                          0.194 $\pm$0.03 \\
  Norm non-lin &     Excl. censor &                      0.053 $\pm$ 0.004 &                           0.768 $\pm$0.006 &          0.579 $\pm$0.088 &                     0.673 $\pm$0.004 &                         0.449 $\pm$0.054 \\
  Norm non-lin &     DeepQuantReg &                      0.818 $\pm$ 0.093 &                           0.998 $\pm$0.019 &          4.263 $\pm$0.234 &                     0.609 $\pm$0.023 &                         2.593 $\pm$0.206 \\
  Norm non-lin &      LogNorm MLE &                      0.323 $\pm$ 0.035 &                           0.824 $\pm$0.009 &          1.651 $\pm$0.118 &                     0.653 $\pm$0.005 &                         0.971 $\pm$0.077 \\
  
  \midrule
  
   Exponential &            CQRNN &                      \textbf{1.298 $\pm$ 0.201} &                            \textbf{4.057 $\pm$0.04} &          0.404 $\pm$0.035 &                     \textbf{0.559 $\pm$0.003} &                         0.248 $\pm$0.031 \\
   Exponential &  Sequential grid &                      1.702 $\pm$ 0.397 &                           4.061 $\pm$0.039 &          \textbf{0.391 $\pm$0.057} &                     0.558 $\pm$0.004 &                         \textbf{0.226 $\pm$0.025} \\
   Exponential &     Excl. censor &                      11.03 $\pm$ 0.772 &                           4.456 $\pm$0.039 &           2.537 $\pm$0.18 &                     0.535 $\pm$0.007 &                          1.513 $\pm$0.14 \\
   Exponential &     DeepQuantReg &                     20.046 $\pm$ 5.844 &                            4.675 $\pm$0.09 &          2.294 $\pm$0.252 &                     0.544 $\pm$0.004 &                         1.276 $\pm$0.158 \\
   Exponential &      LogNorm MLE &                     17.825 $\pm$ 2.831 &                           4.342 $\pm$0.058 &          1.056 $\pm$0.071 &                     0.552 $\pm$0.003 &                         0.401 $\pm$0.045 \\
     
  \midrule
  
       Weibull &            CQRNN &                      \textbf{0.255 $\pm$ 0.025} &                           1.879 $\pm$0.018 &           \textbf{0.33 $\pm$0.037} &                     \textbf{0.772 $\pm$0.002} &                         0.211 $\pm$0.023 \\
       Weibull &  Sequential grid &                      0.261 $\pm$ 0.018 &                           \textbf{1.877 $\pm$0.017} &          0.352 $\pm$0.037 &                     \textbf{0.772 $\pm$0.002} &                         \textbf{0.209 $\pm$0.023} \\
       Weibull &     Excl. censor &                      2.468 $\pm$ 0.118 &                            2.18 $\pm$0.021 &          1.942 $\pm$0.152 &                     0.769 $\pm$0.002 &                         0.834 $\pm$0.069 \\
       Weibull &     DeepQuantReg &                       1.116 $\pm$ 0.16 &                           2.005 $\pm$0.037 &          1.057 $\pm$0.159 &                     0.769 $\pm$0.002 &                         0.556 $\pm$0.126 \\
       Weibull &      LogNorm MLE &                      1.586 $\pm$ 0.081 &                            2.048 $\pm$0.02 &          0.603 $\pm$0.036 &                     0.771 $\pm$0.002 &                         0.478 $\pm$0.072 \\
         
  \midrule
  
       LogNorm &            CQRNN &                      0.411 $\pm$ 0.057 &                            1.716 $\pm$0.04 &          0.253 $\pm$0.025 &                      0.59 $\pm$0.004 &                         0.161 $\pm$0.015 \\
       LogNorm &  Sequential grid &                      0.328 $\pm$ 0.036 &                           1.716 $\pm$0.041 &           0.28 $\pm$0.033 &                     \textbf{0.591 $\pm$0.003} &                         0.193 $\pm$0.024 \\
       LogNorm &     Excl. censor &                      1.757 $\pm$ 0.114 &                           1.818 $\pm$0.051 &           1.46 $\pm$0.129 &                     0.588 $\pm$0.004 &                         1.119 $\pm$0.062 \\
       LogNorm &     DeepQuantReg &                      1.279 $\pm$ 0.168 &                            1.843 $\pm$0.04 &          1.924 $\pm$0.131 &                      0.58 $\pm$0.004 &                         1.279 $\pm$0.106 \\
       LogNorm &      LogNorm MLE &                      \textbf{0.247 $\pm$ 0.032} &                           \textbf{1.713 $\pm$0.041} &          \textbf{0.115 $\pm$0.014} &                     0.589 $\pm$0.004 &                         \textbf{0.151 $\pm$0.018} \\
         
  \midrule
  
  Norm uniform &            CQRNN &                      0.388 $\pm$ 0.054 &                           1.442 $\pm$0.022 &          2.219 $\pm$0.212 &                     0.789 $\pm$0.003 &                        \textbf{0.094 $\pm$0.008} \\
  Norm uniform &  Sequential grid &                     \textbf{ 0.188 $\pm$ 0.016} &                          \textbf{ 1.409 $\pm$0.017} &         \textbf{ 1.347 $\pm$0.136} &                     0.788 $\pm$0.003 &                          0.462 $\pm$0.02 \\
  Norm uniform &     Excl. censor &                      1.088 $\pm$ 0.112 &                           1.501 $\pm$0.018 &           1.557 $\pm$0.19 &                      \textbf{0.79 $\pm$0.003} &                         0.721 $\pm$0.066 \\
  Norm uniform &     DeepQuantReg &                       2.591 $\pm$ 0.56 &                           1.809 $\pm$0.084 &          4.397 $\pm$0.687 &                     0.598 $\pm$0.064 &                         0.418 $\pm$0.074 \\
  Norm uniform &      LogNorm MLE &                   939.992 $\pm$ 144.76 &                           7.727 $\pm$0.502 &         32.486 $\pm$1.405 &                      0.77 $\pm$0.006 &                         4.259 $\pm$0.144 \\
    
  \midrule
  
    Norm heavy &            CQRNN &                      \textbf{0.579 $\pm$ 0.089} &                           0.999 $\pm$0.041 &          \textbf{6.491 $\pm$1.094} &                     0.922 $\pm$0.002 &                         0.301 $\pm$0.067 \\
    Norm heavy &  Sequential grid &                      0.597 $\pm$ 0.081 &                           \textbf{0.996 $\pm$0.037} &          6.595 $\pm$1.406 &                     \textbf{0.923 $\pm$0.002} &                         {0.198 $\pm$0.014} \\
    Norm heavy &     Excl. censor &                      1.285 $\pm$ 0.205 &                           1.223 $\pm$0.069 &          8.206 $\pm$1.895 &                     \textbf{0.923 $\pm$0.002} &                         0.261 $\pm$0.021 \\
    Norm heavy &     DeepQuantReg &                      1.099 $\pm$ 0.175 &                            1.166 $\pm$0.06 &          6.681 $\pm$1.365 &                     \textbf{0.923 $\pm$0.002} &                         \textbf{0.191 $\pm$0.016} \\
    Norm heavy &      LogNorm MLE &                5568.886 $\pm$ 1737.861 &                          17.077 $\pm$2.442 &         21.283 $\pm$1.057 &                     0.852 $\pm$0.008 &                         1.113 $\pm$0.102 \\
      
  \midrule
  
     Norm med. &            CQRNN &                       \textbf{0.11 $\pm$ 0.007} &                           \textbf{0.789 $\pm$0.006} &          0.633 $\pm$0.138 &                     \textbf{0.896 $\pm$0.001} &                         {0.157 $\pm$0.054} \\
     Norm med. &  Sequential grid &                       0.16 $\pm$ 0.009 &                           0.799 $\pm$0.005 &          0.474 $\pm$0.046 &                     0.895 $\pm$0.001 &                         0.159 $\pm$0.011 \\
     Norm med. &     Excl. censor &                      0.117 $\pm$ 0.008 &                           0.792 $\pm$0.005 &           \textbf{0.247 $\pm$0.02} &                     \textbf{0.896 $\pm$0.001} &                        \textbf{0.136 $\pm$0.015} \\
     Norm med. &     DeepQuantReg &                      0.255 $\pm$ 0.016 &                           0.847 $\pm$0.008 &          0.944 $\pm$0.098 &                     0.892 $\pm$0.001 &                         0.232 $\pm$0.029 \\
     Norm med. &      LogNorm MLE &                   276.276 $\pm$ 41.622 &                           3.974 $\pm$0.228 &         17.969 $\pm$0.548 &                     0.865 $\pm$0.004 &                         4.612 $\pm$0.134 \\
       
  \midrule
  
    Norm light &            CQRNN &                      \textbf{0.079 $\pm$ 0.005} &                           \textbf{0.778 $\pm$0.005} &          0.173 $\pm$0.021 &                     \textbf{0.882 $\pm$0.001} &                         \textbf{0.084 $\pm$0.008} \\
    Norm light &  Sequential grid &                      0.117 $\pm$ 0.005 &                           0.784 $\pm$0.004 &          0.352 $\pm$0.027 &                     \textbf{0.882 $\pm$0.001} &                          0.19 $\pm$0.018 \\
    Norm light &     Excl. censor &                      0.083 $\pm$ 0.005 &                           0.779 $\pm$0.005 &          \textbf{0.159 $\pm$0.017} &                    \textbf{0.882 $\pm$0.001} &                         0.112 $\pm$0.013 \\
    Norm light &     DeepQuantReg &                      0.277 $\pm$ 0.013 &                           0.854 $\pm$0.008 &          1.205 $\pm$0.081 &                     0.878 $\pm$0.001 &                         0.588 $\pm$0.046 \\
    Norm light &      LogNorm MLE &                     58.503 $\pm$ 6.922 &                           2.475 $\pm$0.093 &         13.713 $\pm$0.465 &                     0.861 $\pm$0.003 &                          7.74 $\pm$0.286 \\
      
  \midrule
  
     Norm same &            CQRNN &                      \textbf{0.094 $\pm$ 0.005} &                           \textbf{0.785 $\pm$0.003} &           0.19 $\pm$0.022 &                     0.893 $\pm$0.001 &                         \textbf{0.052 $\pm$0.007} \\
     Norm same &  Sequential grid &                       0.779 $\pm$ 0.16 &                           0.847 $\pm$0.009 &           0.45 $\pm$0.054 &                     0.891 $\pm$0.001 &                         0.102 $\pm$0.012 \\
     Norm same &     Excl. censor &                       0.435 $\pm$ 0.01 &                           0.927 $\pm$0.007 &          4.096 $\pm$0.251 &                     \textbf{0.894 $\pm$0.001} &                          1.398 $\pm$0.07 \\
     Norm same &     DeepQuantReg &                      0.357 $\pm$ 0.013 &                           0.893 $\pm$0.009 &          3.334 $\pm$0.325 &                     0.891 $\pm$0.001 &                         0.983 $\pm$0.091 \\
     Norm same &      LogNorm MLE &                      0.114 $\pm$ 0.008 &                           0.787 $\pm$0.004 &         \textbf{ 0.187 $\pm$0.024} &                     \textbf{0.894 $\pm$0.001} &                         0.059 $\pm$0.006 \\
            
  \midrule
  
 LogNorm heavy &            CQRNN &                      2.424 $\pm$ 0.055 &                           1.123 $\pm$0.021 &          22.493 $\pm$0.36 &                     \textbf{0.782 $\pm$0.005} &                         \textbf{0.036 $\pm$0.004} \\
 LogNorm heavy &  Sequential grid &                       2.42 $\pm$ 0.055 &                           1.121 $\pm$0.021 &         21.938 $\pm$0.299 &                     0.781 $\pm$0.005 &                         0.044 $\pm$0.002 \\
 LogNorm heavy &     Excl. censor &                      2.654 $\pm$ 0.061 &                           1.247 $\pm$0.021 &          35.43 $\pm$0.629 &                     0.772 $\pm$0.005 &                         4.806 $\pm$0.226 \\
 LogNorm heavy &     DeepQuantReg &                       2.639 $\pm$ 0.06 &                           1.236 $\pm$0.022 &         34.132 $\pm$0.719 &                     0.771 $\pm$0.005 &                         3.884 $\pm$0.201 \\
 LogNorm heavy &      LogNorm MLE &                       \textbf{1.17 $\pm$ 0.052} &                           \textbf{0.868 $\pm$0.018} &          \textbf{0.135 $\pm$0.014} &                     0.766 $\pm$0.005 &                         0.074 $\pm$0.008 \\
        
  \midrule
  
  LogNorm med. &            CQRNN &                      1.713 $\pm$ 0.049 &                            0.923 $\pm$0.02 &          5.054 $\pm$0.174 &                     \textbf{0.754 $\pm$0.004} &                         \textbf{0.064 $\pm$0.005} \\
  LogNorm med. &  Sequential grid &                      1.699 $\pm$ 0.047 &                            0.921 $\pm$0.02 &          4.968 $\pm$0.181 &                     \textbf{0.754 $\pm$0.004} &                         0.098 $\pm$0.014 \\
  LogNorm med. &     Excl. censor &                      2.168 $\pm$ 0.053 &                           1.067 $\pm$0.021 &          12.124 $\pm$0.34 &                     0.749 $\pm$0.004 &                         2.586 $\pm$0.071 \\
  LogNorm med. &     DeepQuantReg &                      2.087 $\pm$ 0.056 &                            1.033 $\pm$0.02 &         10.081 $\pm$0.255 &                     0.748 $\pm$0.003 &                         1.373 $\pm$0.055 \\
  LogNorm med. &      LogNorm MLE &                       \textbf{0.907 $\pm$ 0.05} &                           \textbf{0.824 $\pm$0.018} &          \textbf{0.103 $\pm$0.016} &                      0.75 $\pm$0.003 &                          0.07 $\pm$0.009 \\
         
  \midrule
  
 LogNorm light &            CQRNN &                      0.506 $\pm$ 0.028 &                           \textbf{0.764 $\pm$0.019} &          0.331 $\pm$0.026 &                     \textbf{0.729 $\pm$0.003} &                          0.135 $\pm$0.01 \\
 LogNorm light &  Sequential grid &                      0.532 $\pm$ 0.029 &                           0.767 $\pm$0.018 &           0.517 $\pm$0.04 &                     \textbf{0.729 $\pm$0.003} &                          0.21 $\pm$0.014 \\
 LogNorm light &     Excl. censor &                      1.185 $\pm$ 0.037 &                            0.852 $\pm$0.02 &          1.518 $\pm$0.124 &                     \textbf{0.729 $\pm$0.003} &                         0.655 $\pm$0.047 \\
 LogNorm light &     DeepQuantReg &                      0.831 $\pm$ 0.036 &                           0.804 $\pm$0.019 &          0.912 $\pm$0.061 &                     0.726 $\pm$0.003 &                         0.438 $\pm$0.028 \\
 LogNorm light &      LogNorm MLE &                      \textbf{0.432 $\pm$ 0.034} &                           0.767 $\pm$0.018 &          \textbf{0.095 $\pm$0.015} &                     \textbf{0.729 $\pm$0.002} &                         \textbf{0.079 $\pm$0.004} \\
        
  \midrule
  
  LogNorm same &            CQRNN &                      0.137 $\pm$ 0.021 &                           0.735 $\pm$0.015 &          0.236 $\pm$0.021 &                     0.751 $\pm$0.002 &                         0.055 $\pm$0.007 \\
  LogNorm same &  Sequential grid &                      0.422 $\pm$ 0.054 &                           0.753 $\pm$0.016 &          0.463 $\pm$0.046 &                     0.752 $\pm$0.003 &                         0.101 $\pm$0.018 \\
  LogNorm same &     Excl. censor &                      1.068 $\pm$ 0.043 &                           0.861 $\pm$0.019 &          4.112 $\pm$0.264 &                     0.752 $\pm$0.002 &                         1.306 $\pm$0.046 \\
  LogNorm same &     DeepQuantReg &                      0.394 $\pm$ 0.057 &                           0.763 $\pm$0.016 &          1.301 $\pm$0.265 &                     0.748 $\pm$0.002 &                         0.335 $\pm$0.072 \\
  LogNorm same &      LogNorm MLE &                      \textbf{0.067 $\pm$ 0.013} &                            \textbf{0.73 $\pm$0.015} &         \textbf{ 0.114 $\pm$0.013} &                     \textbf{0.754 $\pm$0.002} &                         \textbf{0.052 $\pm$0.005} \\
         
  \midrule

\end{tabular}}
\label{tab_full_resultsa}
\end{table} 

  \newpage
  
  \begin{table}[ht]
\centering
\resizebox{1.\textwidth}{!}{\begin{tabular}{lllllll}
\toprule
     Dataset &        Method & MSE to true quantile & Uncensored quantile loss & Uncensored D-Calibration & Concordance-Index & Censored D-Calibration  \\
     
      &     &  (lower better) & (lower better) & (lower better) & (higher better) & (lower better)  \\

\midrule
&\multicolumn{6}{c}{\textbf{Type 2 -- real data, synthetic censoring}}\\
       Housing &            CQRNN &                                     - &                            \textbf{0.34 $\pm$0.002} &           \textbf{0.793 $\pm$0.03} &                       0.897 $\pm$0.0 &                          \textbf{0.02 $\pm$0.004} \\
       Housing &     Excl. censor &                                     - &                           0.443 $\pm$0.005 &          2.176 $\pm$0.057 &                     0.895 $\pm$0.001 &                         0.311 $\pm$0.011 \\
       Housing &     DeepQuantReg &                                     - &                           0.399 $\pm$0.004 &          2.474 $\pm$0.066 &                     \textbf{0.902 $\pm$0.001} &                         0.196 $\pm$0.031 \\
       Housing &      LogNorm MLE &                                     - &                             0.6 $\pm$0.002 &          2.794 $\pm$0.022 &                     0.881 $\pm$0.001 &                         1.035 $\pm$0.015 \\
       
       \midrule
       
       Protein &            CQRNN &                                     - &                           \textbf{0.435 $\pm$0.001} &          \textbf{0.275 $\pm$0.008} &                     \textbf{0.847 $\pm$0.001} &                         \textbf{0.027 $\pm$0.001} \\
       Protein &     Excl. censor &                                     - &                           0.631 $\pm$0.002 &           3.45 $\pm$0.053 &                     0.838 $\pm$0.001 &                          1.075 $\pm$0.02 \\
       Protein &     DeepQuantReg &                                     - &                           0.568 $\pm$0.002 &          2.809 $\pm$0.059 &                     0.831 $\pm$0.001 &                         0.495 $\pm$0.011 \\
       Protein &      LogNorm MLE &                                     - &                           0.579 $\pm$0.002 &          0.694 $\pm$0.018 &                     0.817 $\pm$0.002 &                         0.298 $\pm$0.007 \\
              
       \midrule
       
          Wine &            CQRNN &                                     - &                             \textbf{0.6 $\pm$0.005} &          \textbf{0.908 $\pm$0.069} &                     \textbf{0.815 $\pm$0.002} &                         \textbf{0.046 $\pm$0.005} \\
          Wine &     Excl. censor &                                     - &                           0.791 $\pm$0.005 &          6.606 $\pm$0.209 &                     0.799 $\pm$0.003 &                         0.722 $\pm$0.038 \\
          Wine &     DeepQuantReg &                                     - &                           0.717 $\pm$0.005 &          3.211 $\pm$0.159 &                     0.792 $\pm$0.003 &                         0.212 $\pm$0.021 \\
          Wine &      LogNorm MLE &                                     - &                           1.454 $\pm$0.022 &          5.736 $\pm$0.163 &                     0.747 $\pm$0.004 &                         0.784 $\pm$0.022 \\
                 
       \midrule
       
           PHM &            CQRNN &                                     - &                           \textbf{0.408 $\pm$0.001} &          \textbf{0.243 $\pm$0.012} &                     \textbf{0.902 $\pm$0.001} &                         \textbf{0.008 $\pm$0.001} \\
           PHM &     Excl. censor &                                     - &                           0.519 $\pm$0.002 &          3.852 $\pm$0.037 &                     0.901 $\pm$0.001 &                         0.481 $\pm$0.006 \\
           PHM &     DeepQuantReg &                                     - &                           0.479 $\pm$0.002 &          1.589 $\pm$0.057 &                     0.897 $\pm$0.001 &                         0.154 $\pm$0.011 \\
           PHM &      LogNorm MLE &                                     - &                           0.599 $\pm$0.002 &           2.26 $\pm$0.018 &                       0.9 $\pm$0.001 &                         0.538 $\pm$0.007 \\
                  
       \midrule
       
     SurvMNIST &            CQRNN &                                     - &                             \textbf{0.076 $\pm$0.0} &          \textbf{0.308 $\pm$0.023} &                     0.899 $\pm$0.001 &                         \textbf{0.224 $\pm$0.005} \\
     SurvMNIST &     Excl. censor &                                     - &                           0.133 $\pm$0.002 &          2.115 $\pm$0.086 &                     0.896 $\pm$0.001 &                         0.512 $\pm$0.014 \\
     SurvMNIST &     DeepQuantReg &                                     - &                             0.1 $\pm$0.001 &          1.021 $\pm$0.051 &                       \textbf{0.9 $\pm$0.001} &                         0.264 $\pm$0.013 \\
     SurvMNIST &      LogNorm MLE &                                     - &                           0.209 $\pm$0.001 &          4.348 $\pm$0.049 &                     0.894 $\pm$0.001 &                         0.806 $\pm$0.015 \\
            
       \midrule \\
       &\multicolumn{6}{c}{\textbf{Type 3 -- real data, real censoring}}\\
      METABRIC &            CQRNN &                                     - &                                         - &                        - &                     \textbf{0.644 $\pm$0.006} &                         \textbf{0.189 $\pm$0.057} \\
      METABRIC &     Excl. censor &                                     - &                                         - &                        - &                     0.615 $\pm$0.005 &                          7.54 $\pm$0.478 \\
      METABRIC &     DeepQuantReg &                                     - &                                         - &                        - &                     0.601 $\pm$0.006 &                         2.393 $\pm$0.211 \\
      METABRIC &      LogNorm MLE &                                     - &                                         - &                        - &                     0.636 $\pm$0.006 &                          0.72 $\pm$0.106 \\
             
       \midrule
       
          WHAS &            CQRNN &                                     - &                                         - &                        - &                      \textbf{0.85 $\pm$0.005} &                         1.089 $\pm$0.431 \\
          WHAS &     Excl. censor &                                     - &                                         - &                        - &                     0.785 $\pm$0.006 &                         9.391 $\pm$0.709 \\
          WHAS &     DeepQuantReg &                                     - &                                         - &                        - &                     0.774 $\pm$0.008 &                          6.71 $\pm$0.448 \\
          WHAS &      LogNorm MLE &                                     - &                                         - &                        - &                      0.81 $\pm$0.005 &                         \textbf{0.817 $\pm$0.139} \\
                 
       \midrule
       
       SUPPORT &            CQRNN &                                     - &                                         - &                        - &                     0.615 $\pm$0.002 &                         \textbf{0.179 $\pm$0.022} \\
       SUPPORT &     Excl. censor &                                     - &                                         - &                        - &                     0.552 $\pm$0.002 &                         9.606 $\pm$0.231 \\
       SUPPORT &     DeepQuantReg &                                     - &                                         - &                        - &                     0.564 $\pm$0.002 &                         6.747 $\pm$0.176 \\
       SUPPORT &      LogNorm MLE &                                     - &                                         - &                        - &                     \textbf{0.617 $\pm$0.002} &                         2.384 $\pm$0.143 \\
              
       \midrule
       
          GBSG &            CQRNN &                                     - &                                         - &                        - &                     \textbf{0.683 $\pm$0.005} &                         \textbf{0.339 $\pm$0.033} \\
          GBSG &     Excl. censor &                                     - &                                         - &                        - &                     0.673 $\pm$0.005 &                        11.732 $\pm$0.512 \\
          GBSG &     DeepQuantReg &                                     - &                                         - &                        - &                     0.673 $\pm$0.005 &                         8.998 $\pm$0.479 \\
          GBSG &      LogNorm MLE &                                     - &                                         - &                        - &                     0.677 $\pm$0.004 &                          0.793 $\pm$0.08 \\
                 
       \midrule
       
     TMBImmuno &            CQRNN &                                     - &                                         - &                        - &                     0.579 $\pm$0.008 &                          \textbf{0.201 $\pm$0.04} \\
     TMBImmuno &     Excl. censor &                                     - &                                         - &                        - &                      0.52 $\pm$0.008 &                         9.479 $\pm$0.386 \\
     TMBImmuno &     DeepQuantReg &                                     - &                                         - &                        - &                     0.539 $\pm$0.011 &                         4.827 $\pm$0.306 \\
     TMBImmuno &      LogNorm MLE &                                     - &                                         - &                        - &                     \textbf{0.581 $\pm$0.007} &                         0.634 $\pm$0.067 \\
            
       \midrule
       
     BreastMSK &            CQRNN &                                     - &                                         - &                        - &                      0.619 $\pm$0.01 &                          \textbf{0.08 $\pm$0.012} \\
     BreastMSK &     Excl. censor &                                     - &                                         - &                        - &                     \textbf{0.646 $\pm$0.008} &                         4.546 $\pm$0.235 \\
     BreastMSK &     DeepQuantReg &                                     - &                                         - &                        - &                     0.638 $\pm$0.008 &                            3.0 $\pm$0.17 \\
     BreastMSK &      LogNorm MLE &                                     - &                                         - &                        - &                      0.631 $\pm$0.01 &                         0.521 $\pm$0.058 \\
            
       \midrule
       
        LGGGBM &            CQRNN &                                     - &                                         - &                        - &                     0.792 $\pm$0.008 &                         0.372 $\pm$0.074 \\
        LGGGBM &     Excl. censor &                                     - &                                         - &                        - &                      0.782 $\pm$0.01 &                         2.166 $\pm$0.303 \\
        LGGGBM &     DeepQuantReg &                                     - &                                         - &                        - &                     0.781 $\pm$0.011 &                         1.275 $\pm$0.184 \\
        LGGGBM &      LogNorm MLE &                                     - &                                         - &                        - &                     \textbf{0.793 $\pm$0.008} &                         \textbf{0.367 $\pm$0.083} \\
\bottomrule

\end{tabular}}
\label{tab_full_results}

\end{table}

\newpage
\FloatBarrier

\begin{table}[t!]
\centering
\caption{Results comparing CQRNN and sequential grid algorithm on the type 3 datasets, real target data with real censoring. Experiments were repeated over 50 random seeds.}

Raw Results, mean $\pm$ one standard error
\resizebox{0.999\textwidth}{!}{
\begin{tabular}{lllllll}
\toprule
Dataset &          Method & TQMSE & UQL & UnDCal & Concordance-Index (higher better) & Censored D-Calibration (lower better) \\
\midrule
   METABRIC &    CQRNN   &                    - &                  - &      - &         0.643 $\pm$0.003 &         0.218 $\pm$0.026 \\
   METABRIC & Sequential grid &                    - &                  - &      - &         0.648 $\pm$0.002 &          0.399 $\pm$0.04 \\
       WHAS &    CQRNN   &                    - &                  - &      - &          0.86 $\pm$0.002 &         0.721 $\pm$0.091 \\
       WHAS & Sequential grid &                    - &                  - &      - &         0.852 $\pm$0.002 &          5.038 $\pm$0.74 \\
    SUPPORT &    CQRNN   &                    - &                  - &      - &         0.614 $\pm$0.001 &          0.159 $\pm$0.01 \\
    SUPPORT & Sequential grid &                    - &                  - &      - &         0.613 $\pm$0.001 &         0.723 $\pm$0.024 \\
       GBSG &    CQRNN   &                    - &                  - &      - &         0.678 $\pm$0.002 &         0.361 $\pm$0.024 \\
       GBSG & Sequential grid &                    - &                  - &      - &         0.678 $\pm$0.002 &         0.789 $\pm$0.042 \\
  TMBImmuno &    CQRNN   &                    - &                  - &      - &         0.571 $\pm$0.003 &         0.207 $\pm$0.021 \\
  TMBImmuno & Sequential grid &                    - &                  - &      - &         0.572 $\pm$0.003 &         0.375 $\pm$0.028 \\
  BreastMSK &    CQRNN   &                    - &                  - &      - &         0.618 $\pm$0.005 &          0.085 $\pm$0.01 \\
  BreastMSK & Sequential grid &                    - &                  - &      - &         0.597 $\pm$0.006 &         0.227 $\pm$0.016 \\
     LGGGBM &    CQRNN   &                    - &                  - &      - &         0.784 $\pm$0.004 &         0.397 $\pm$0.039 \\
     LGGGBM & Sequential grid &                    - &                  - &      - &         0.781 $\pm$0.004 &         0.491 $\pm$0.041 \\
\bottomrule
\end{tabular}}

\vspace{0.5in}
Difference in means, alongside 95\% confidence interval
\resizebox{0.999\textwidth}{!}{
\begin{tabular}{lrrrrrrrr}
\toprule
Dataset &  Number &  Training time &  Test time &  Parameter & C-Index difference & CQRNN is & CensDCal difference & CQRNN is\\
& quantiles & speed up & speed up & saving & Seq. grid - CQRNN & sig. better? & Seq. grid - CQRNN & sig. better? \\
& & & & & (>0 favours CQRNN) &  & (<0 favours CQRNN) \\
\midrule
   METABRIC &  5 & 5.3$\times$ & 2.5$\times$ &  4.6$\times$ & -0.005 $\pm$ 0.001 & \xmark & -0.181 $\pm$ 0.032 & \cmark \\
       WHAS &  5 & 5.1$\times$ & 3.9$\times$ &  4.6$\times$ &  0.008 $\pm$ 0.001  & \cmark & -4.317 $\pm$ 0.745 & \cmark \\
    SUPPORT &  5 & 5.1$\times$ & 2.7$\times$ &  4.6$\times$ &  0.001 $\pm$ 0.000  & \cmark & -0.564 $\pm$ 0.022 & \cmark \\
       GBSG &  5 & 5.3$\times$ & 5.1$\times$ &  4.6$\times$ & -0.001 $\pm$ 0.000 & \xmark & -0.428 $\pm$ 0.026 & \cmark \\
  TMBImmuno &  5 & 5.1$\times$ & 4.3$\times$ &  4.6$\times$ & -0.000 $\pm$ 0.001 & -- & -0.168 $\pm$ 0.025 & \cmark \\
  BreastMSK &  5 & 5.1$\times$ & 5.0$\times$ &  4.6$\times$ &   0.021 $\pm$ 0.004  & \cmark & -0.141 $\pm$ 0.018 & \cmark \\
     LGGGBM &  5 & 5.4$\times$ & 5.1$\times$ &  4.6$\times$ &  0.003 $\pm$ 0.001  & \cmark & -0.094 $\pm$ 0.038 & \cmark \\
\midrule
CQRNN better: & & & & & & 4/7 & & 7/7 \\
Seq grid better: & & & & & & 2/7 & & 0/7 \\
No sig. difference: & & & & & & 1/7 & & 0/7 \\
\bottomrule
\end{tabular}}


\label{tab_grid_cqrnn_compare_real}
\end{table}

\begin{figure}[b!]
\begin{center}
\vspace{-0.02in}


\hspace{0.45in} Norm linear \hspace{0.45in} Norm uniform \hspace{0.45in} Norm non-linear

\includegraphics[width=0.245\columnwidth]{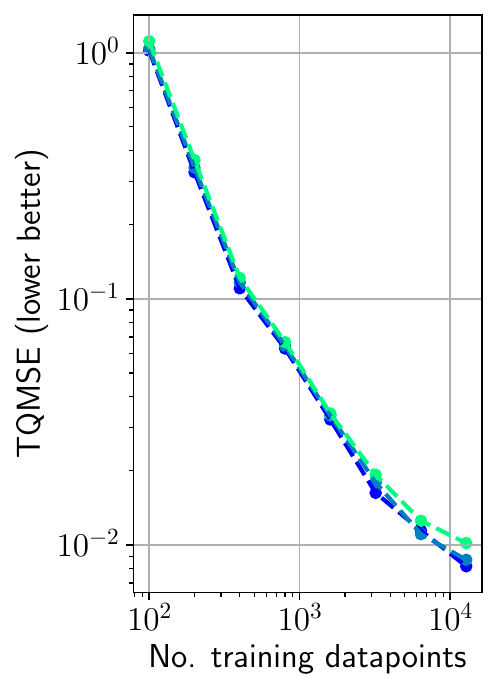}
\includegraphics[width=0.245\columnwidth]{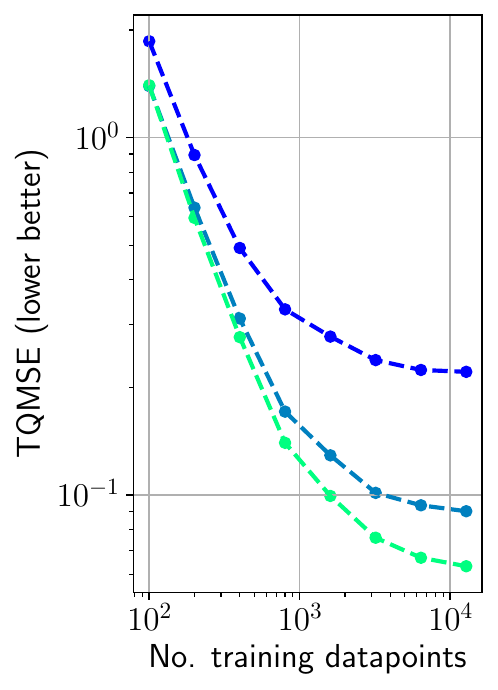}
\includegraphics[width=0.245\columnwidth]{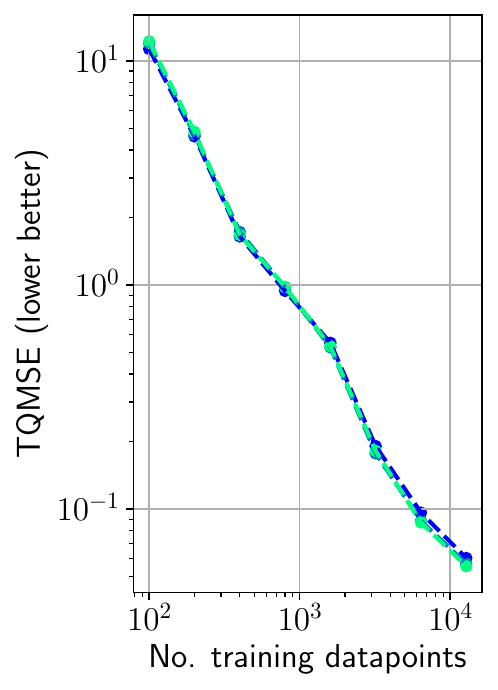}
\put(-0,50){\transparent{1.}\includegraphics[width=0.15\columnwidth]{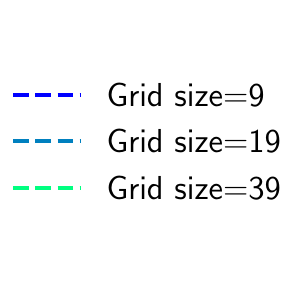}}

\hspace{0.3in} Exponential \hspace{0.7in} Weibull \hspace{0.7in} LogNorm

\includegraphics[width=0.245\columnwidth]{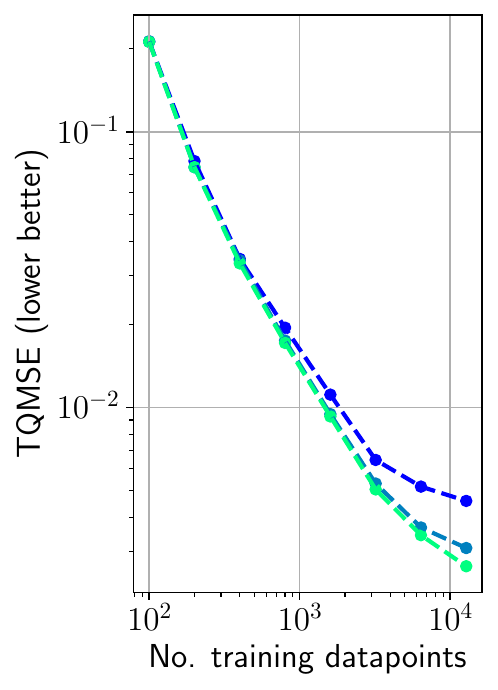}
\includegraphics[width=0.245\columnwidth]{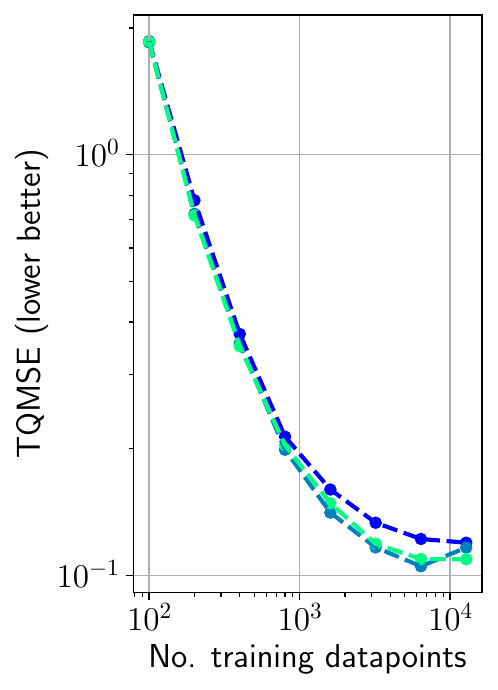}
\includegraphics[width=0.245\columnwidth]{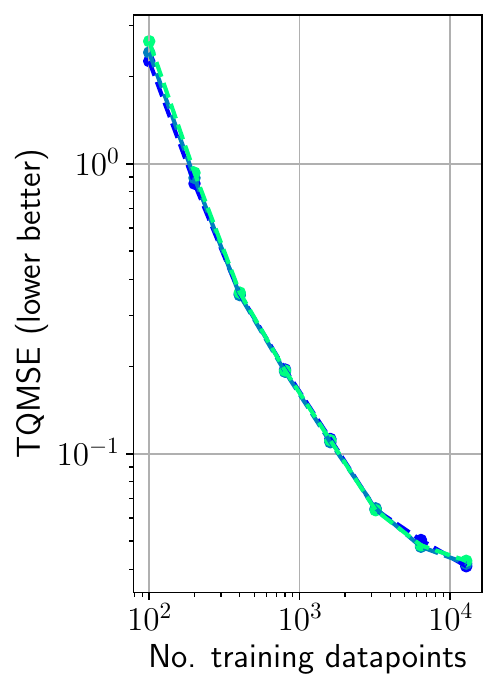}





\caption{Ablations over grid size and number of datapoints for various synthetic datasets using our CQRNN method.}
\label{fig_grid_ablation}
\end{center}
\vskip -0.2in
\end{figure}
\newpage
\FloatBarrier

\subsection{Hyperparameter ablations}
\label{sec_app_further_results_hyper}

The requirements for CQRNN in Algorithm \ref{alg_cqrnn_loss} include two hyperparameters unique to CQRNN (the rest determine the NN and its optimisation) -- the grid of quantiles, $\operatorname{grid}_\tau$, and the large pseudo y value, $y^*$. This section investigates the role of these. It also discusses two modifications which were tested in initial experiments but which were not deemed essential for good performance, and excluded from the proposed method -- mitigating crossing quantiles and interpolating between quantiles.

\textbf{Grid size.}
For evenly spaced grids, as considered in this work, the number of quantiles estimated, $M$, fully determines the grid. One might expect that a larger grid size, with finer increments between quantiles, would lead to a better fit, since censored weights can be estimated more accurately. 
Indeed, the convergence rate of Portnoy's estimator was found to depend on grid size, $\mathcal{O}(1/(MN))$ \citep{Neocleous2006}.

We hypothesised that a larger grid might only deliver benefit when the dataset was sufficiently large for these fine-grained quantiles to be distinguished. 
Hence, Figure \ref{fig_grid_ablation} shows an ablation investigating the interaction between grid size and number of datapoints for each of our type 1 synthetic 1D functions. Experiments were repeated with 100 random seeds, which was required to reduce the error bars sufficiently for comparison. Grid size was varied, $M \in \{9,19,39\}$, and number of datapoints, $N \in \{100,200,400,800,1600,3200,6400,12800\}$. 

For all datasets and grid sizes, TQMSE decreases with dataset size.
In general a larger grid size does produce lower TQMSE, and in three datasets the benefit is significant (Norm uniform, Exponential, Weibull), with the advantage indeed more pronounced with a larger dataset size. 
In two datasets (Norm non-linear, LogNorm), this benefit is slight, and a larger grid is even seen to be slightly harmful on very small datasets.
One dataset (Norm linear) does not follow this trend, where the widest grid proves slightly harmful across all dataset sizes.

\textbf{Pseudo y value.}
\cite{Portnoy2003} proposed that $y^*$ could be set to any large value approximating $\infty$, with the R package `\Verb"quantreg"' setting $y^* = 1e6$. Since the sequential grid algorithm learns quantiles sequentially, it can simply halt if it attempts to estimate a quantile for which only censored datapoints remain, and hence is undefined.

The CQRNN algorithm changes this situation in two ways. Firstly, since the algorithm is no longer sequential and learns all quantiles simultaneously, it does not have the option of halting. Secondly, since a non-linear function is learnt, it is possible that higher quantiles might be undefined in one region of the input space, whilst being learnable in the rest of the space. 
Regressing towards an arbitrarily large $y^*$ value for a portion of the input space could adversely impact the quantile estimate elsewhere.
When all quantiles in $\operatorname{grid}_\tau$ are fully defined, the effect is no different to using $\infty$.

To accommodate these differences, we recommend setting $y^*$ to a more modest value. We define it in terms of the maximum $y$ value in the training set, $y^* = c_{y^*} \max_i y_i$, for a hyperparemeter, $c_{y^*}>1$. 
In real-world problems, a practitioner might use an estimate for this based on their knowledge about the maximum feasible value for the target. In lieu of that, we set $c_{y^*} = 1.2$ for all our experiments (except the below ablation!), which provided reasonable results across datasets.

Table \ref{tab_ymax} presents an ablation on four of our (multidimensional) type 1 synthetic datasets, using $c_{y^*} \in \{1.0, 1.2, 1.5, 2.0, 10,9, 100.0\}$.
For dataset Norm light, $y^*$ has no impact since the target distribution is fully defined under censoring.
However, other datasets have input regions where the higher quantiles are \textit{not} defined due to censoring, and hence higher quantiles are impacted by the magnitude of $y^*$. The optimal value varies by dataset (e.g. $1.2$ is best for Norm heavy, while $10.0$ is best for LogNorm heavy).

\begin{table}[ht]
\centering
\caption{Ablation over psuedo y value, $y^*$. Mean over ten runs, all hyperparameters fixed. } 
\resizebox{0.7\textwidth}{!}{\begin{tabular}{lrrrrrr}
\toprule
    &  \multicolumn{6}{c}{----- Target value for pseudo-datapoints, $y^*$ -----}\\
   Dataset & \small $1.0\max_i y_i$ &  \small  $1.2 \max_i y_i$ &  \small  $1.5 \max_i y_i$ &  \small  $2.0 \max_i y_i$ &  \small   $10.0 \max_i y_i$ &  \small    $100.0 \max_i y_i$ \\
\midrule
 
& \multicolumn{6}{c}{\textbf{TQMSE (lower better)}}\\

    Norm heavy & 1.452 & 0.579 & 1.237 & 6.196 & 591.257 & 57421.824 \\
    Norm light & 0.081 & 0.079 & 0.081 & 0.081 &   0.081 &     0.081 \\
 LogNorm heavy & 2.502 & 2.424 & 2.321 & 2.173 &   1.026 &    19.827 \\
 LogNorm light & 0.614 & 0.506 & 0.385 & 0.252 &   0.147 &     0.147 \\









\bottomrule
\end{tabular}}
\label{tab_ymax}
\end{table}


\textbf{Crossing quantiles.} One issue often discussed in quantile regression is `crossing quantiles'. Higher quantiles should \textit{always} produce higher predictions, i.e. $\hat{y}_{i,{\tau_1}} > \hat{y}_{i,{\tau_2}} \forall \tau_1 > \tau_2$. The crossing quantile issue arises when this condition does not hold. We anticipated that the flexibility of NNs might exacerbate this issue, and we tested two methods to remedy this. 1) Adding a crossing penalty to the loss \citep{Bondell2010}, $\mathcal{L}_\text{cross} = \sum_{i=1}^N \sum_{j=1}^{N_\tau-1} \max [0, c - (\hat{y}_{i,\text{grid}_\tau[j+1]} - \hat{y}_{i,\text{grid}_\tau[j]})]$, where $c$ is the smallest acceptable distance between neighbouring quantiles. 
2) Modifying the NN architecture to enforce monotonicity between quantiles, by constraining each consecutive quantile prediction to add on to the previous one, after passing through a $\operatorname{SoftPlus}$. 
In our experiments, neither method significantly impacted performance. Favouring simplicity, we propose the CQRNN algorithm without these.
It's possible that other methods might have more effect, e.g. \citep{Zhou2020crossing, Brando2022}.
We leave further exploration to future work.

\textbf{Interpolating quantiles.} The CQRNN algorithm sets the estimated censored quantiles, $\mathbf{q}$, by choosing the prediction closest to the censored datapoint, $\hat{q}_j \gets \arg \min_\tau |\hat{y}_{j,\tau} -  y_j|$. In early experiments, we considered an alternative approach that took a linear interpolation between the two nearest quantiles. In initial experiments this wasn't found to significantly improve performance, so we propose the CQRNN algorithm using the simpler $\arg \min$ approach. 


\textbf{Partial vs. full optimisation.} Figure \ref{fig_partial_full} explores the effect of optimising the NNs partially, as is proposed in CQRNN in Algorithm \ref{alg_cqrnn_loss}, compared to fully, as might be done more typically in EM procedures. The figure shows that convergence is fastest when using partial maximisation, when the most up-to-date estimates of $\hat{q}_i$ are used. This ends up being more efficient than freezing $\hat{q}_i$ and only updating after a longer period of optimisation.

\begin{figure}[h!]
\begin{center}
\vspace{-0.02in}


\includegraphics[width=0.32\columnwidth]{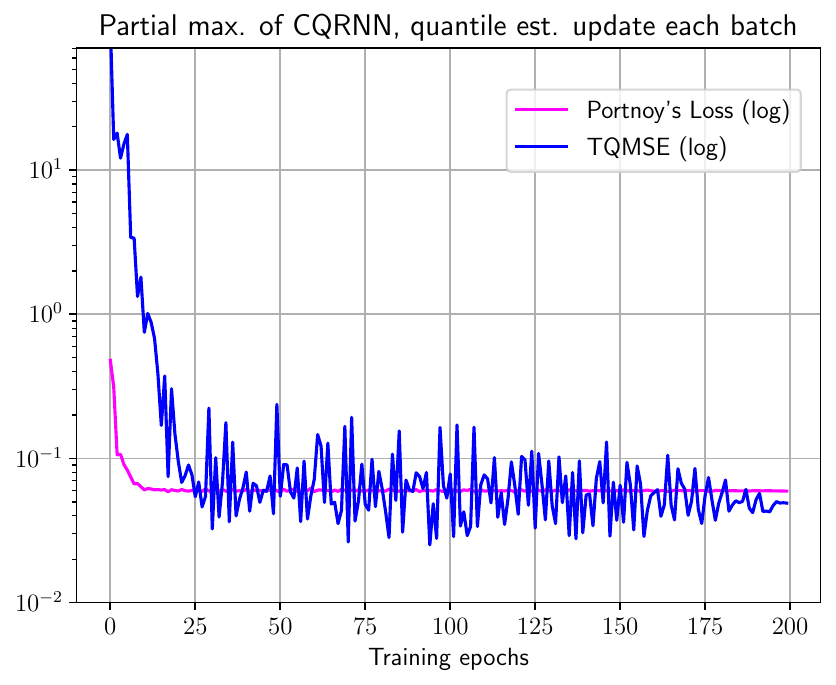}
\includegraphics[width=0.32\columnwidth]{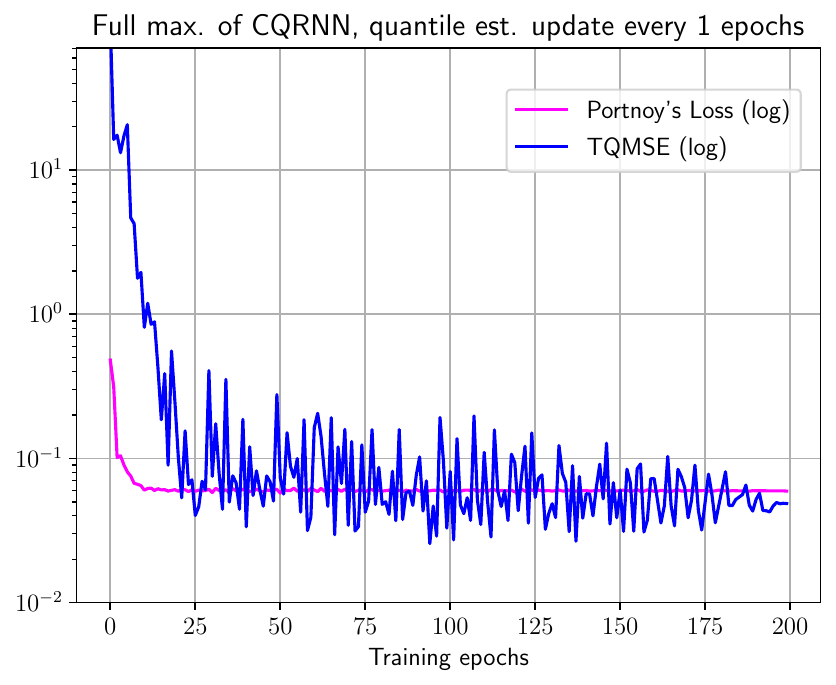}
\includegraphics[width=0.32\columnwidth]{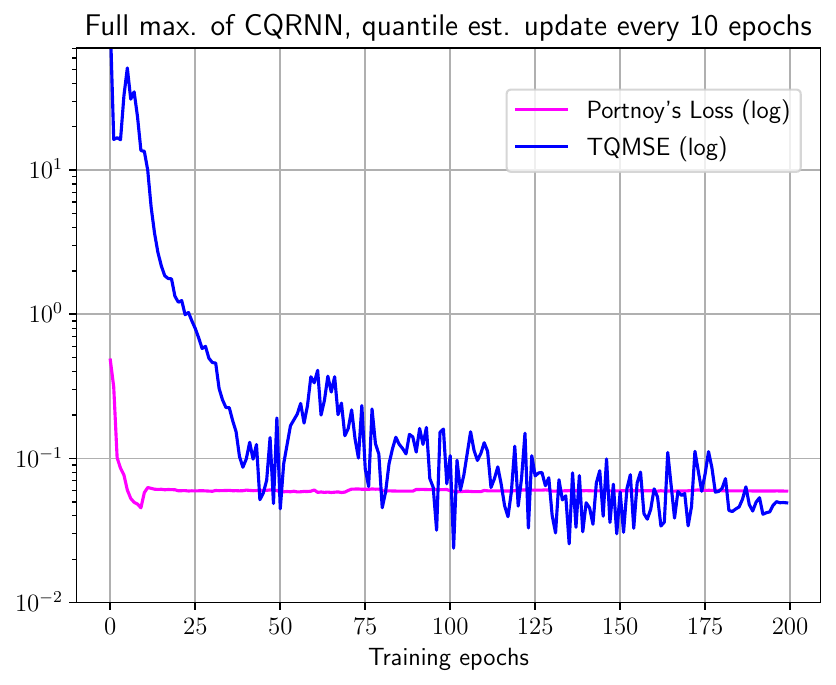}
\includegraphics[width=0.32\columnwidth]{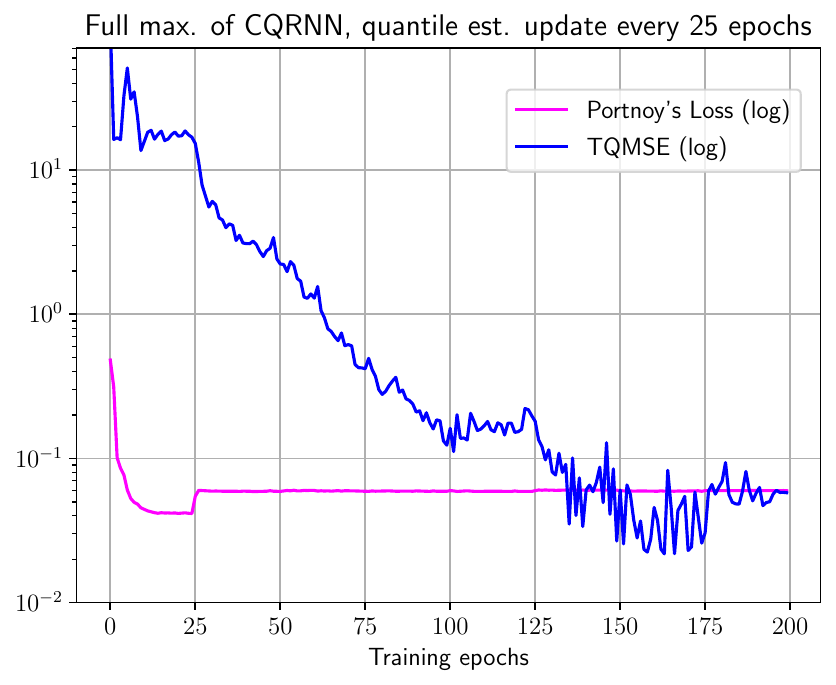}
\includegraphics[width=0.32\columnwidth]{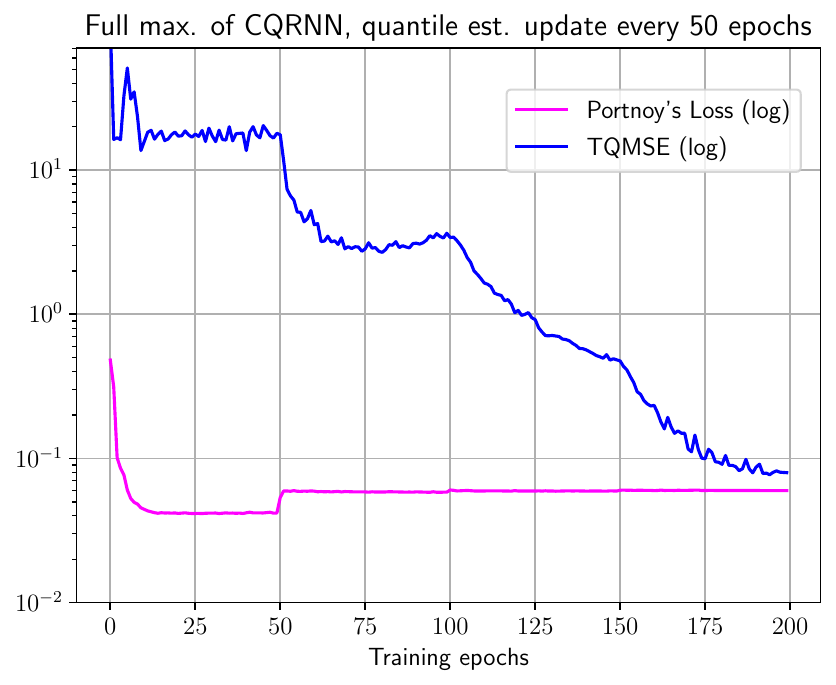}
\includegraphics[width=0.32\columnwidth]{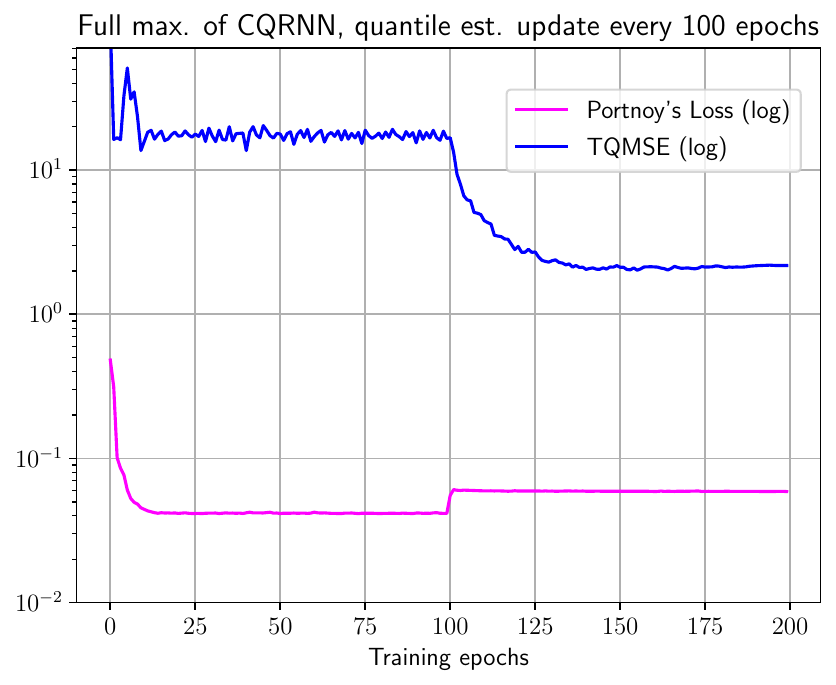}


\caption{This figure explores the effect of optimising the NNs partially compared to fully. It shows training loss and TQMSE over training epochs. The Normal Linear dataset was used. 
}
\label{fig_partial_full}
\end{center}
\vskip -0.2in
\end{figure}

\end{document}